%% file: main.tex
\documentclass{article}
\usepackage{iclr2027_conference,times}

\usepackage{amsmath,amsfonts,amssymb,amsthm,bm}
\usepackage{booktabs}
\usepackage{graphicx}
\usepackage[table]{xcolor}
\usepackage{multirow}
\usepackage{algorithm}
\usepackage[noend]{algpseudocode}
\usepackage{hyperref}
\usepackage{url}

\input{math_commands.tex}

\input{numbers.tex}

\definecolor{lamaccent}{HTML}{A6CE39}
\newcommand{\best}[1]{\cellcolor{lamaccent!85}\textbf{#1}}
\newcommand{\shiprow}{\rowcolor{lamaccent!35}}

\algrenewcommand\algorithmicrequire{\textbf{Input:}}
\algrenewcommand\algorithmicensure{\textbf{Output:}}
\algrenewcommand\alglinenumber[1]{\scriptsize#1:}
\makeatletter
\renewcommand{\ALG@beginalgorithmic}{\footnotesize}
\makeatother

\newtheorem{theorem}{Theorem}
\newtheorem{corollary}{Corollary}
\newtheorem{proposition}{Proposition}
\theoremstyle{definition}
\newtheorem{definition}{Definition}

\title{\texorpdfstring{{\setlength{\fboxsep}{2pt}\colorbox{lamaccent!85}{LAM}}}{LAM}:
A Lossy Agent Memory Framework With a Retrieval-Score Error Bound}

\newif\ifarxiv
\arxivtrue      % <-- \arxivtrue = arXiv (named) | \arxivfalse = review

\ifarxiv\iclrfinalcopy\fi

\ifarxiv
\newcommand{\pairgap}{4em}
\newcommand{\pairrow}[6]{%
  \makebox[\dimexpr\textwidth-2\tabcolsep][c]{%
    \begin{tabular}[t]{@{}l@{\hskip\pairgap}l@{}}
    #1 & #4 \\ \mdseries #2 & \mdseries #5 \\ \mdseries #3 & \mdseries #6
    \end{tabular}}}
\author{%
\pairrow{Baixi Sun}{Argonne National Laboratory}{\texttt{bsun@anl.gov}}
        {Le Chen}{Argonne National Laboratory}{\texttt{lechen@anl.gov}}
\AND
\pairrow{Anjir Ahmed Chowdhury}{University of Houston}{\texttt{aachowd4@CougarNet.UH.EDU}}
        {Xiaolong Ma}{Argonne National Laboratory}{\texttt{xma@anl.gov}}
\AND Chih-Hsuan Yang \\ Argonne National Laboratory \\ \texttt{bellayang@anl.gov}
\And Mingze Xia \\ Oregon State University \\ \texttt{xiami@oregonstate.edu}
\And 
Syed Zawad \\ IBM Research \\ \texttt{szawad@ibm.com}
\AND Sheng Di \\ Argonne National Laboratory \\ \texttt{sdi1@anl.gov}
\And Rajkumar Kettimuthu \\ Argonne National Laboratory \\ \texttt{kettimut@mcs.anl.gov}
\And
 Huihuo Zheng \\ Argonne National Laboratory \\ \texttt{huihuo.zheng@anl.gov}
\AND Rajeev Thakur \\ Argonne National Laboratory \\ \texttt{thakur@anl.gov}
\And Venkatram Vishwanath \\ Argonne National Laboratory \\ \texttt{vvishwanath@anl.gov}
\And
 Feng Yan \\ University of Houston \\ \texttt{fyan5@Central.UH.EDU}
}
\else
\author{Anonymous authors\\
Paper under double-blind review}
\fi

\begin{document}

\maketitle

% \iclrfinalcopy sets the running head to "Published as a conference paper at
% ICLR 2027" (sty:88), which on a preprint is a false claim. It is set inside
% \@maketitle, so it can only be overridden after \maketitle has run -- here.
% \lhead is fancyhdr's and is called from this file, so the style file stays
% untouched. IF THE PAPER IS ACCEPTED: delete these two lines and the
% template's own header becomes correct again.
\ifarxiv\lhead{Preprint. Under review.}\fi

\begin{abstract}
\input{sections/00-abstract}
\end{abstract}

% --- body: 1.5 + 3 + 3 + 1.3 + 0.2 pages ------------------------------------
\input{sections/01-intro}
\input{sections/04-method}
\input{sections/08-eval}
\input{sections/02-related}
\input{sections/11-conclusion}

% The statements are exempt from the page limit and are read on their own, so
% they get their own page rather than sharing one with the conclusion or the
% bibliography.
\clearpage
\input{sections/10-statements}

\clearpage
\bibliography{refs}
\bibliographystyle{iclr2027_conference}

% --- appendix ---------------------------------------------------------------
% Everything the page budget pushed out of the body lives here. The body cites
% it by \ref, so these inputs are load-bearing: dropping one reintroduces
% undefined references rather than merely losing material.
\clearpage
\appendix
\input{sections/impl}
\input{sections/03-redundancy}
\input{sections/05-theory}
\input{sections/06-determinism}
\input{sections/07-cost}
\input{sections/B-eval-extended}
\input{sections/A-appendix}
\input{sections/I-cost}

\end{document}

%% file: math_commands.tex
\usepackage{amsmath,amsfonts,bm}

\def\eqref#1{equation~\ref{#1}}
\def\1{\bm{1}}

\DeclareMathAlphabet{\mathsfit}{\encodingdefault}{\sfdefault}{m}{sl}
\SetMathAlphabet{\mathsfit}{bold}{\encodingdefault}{\sfdefault}{bx}{n}

%% file: numbers.tex
\newcommand{\hwBW}{237.4\,GB/s}          % triad, 3 x 512 MB fp32
\newcommand{\hwMUm}{0.869}               % of the 273 GB/s spec
\newcommand{\hwFLOPS}{90.3\,TFLOP/s}     % fp16, 8192^3 GEMM
\newcommand{\hwBalance}{380}             % FLOP/byte
\newcommand{\encRate}{71{,}708}          % tok/s, bge-base, SYNTHETIC text
\newcommand{\quantRate}{65.4\,GB/s}      % Q_eps, 2^26 fp32, unfused
\newcommand{\hashRate}{38.2\,GB/s}       % xxHash3, CPU, 64 MB
\newcommand{\lTwoRate}{13.0}             % M pairs/s at D=768
\newcommand{\cuszpRate}{53.3\,GB/s}
\newcommand{\cuszpCR}{5.18}
\newcommand{\bytesPerTok}{3.68}          % B/token, SWE-smith tool split

\newcommand{\corpusRecords}{361{,}802}
\newcommand{\corpusTraj}{6{,}026}
\newcommand{\corpusTokens}{136.3M}
\newcommand{\subTraj}{600}
\newcommand{\subObs}{15{,}169}
\newcommand{\subChunks}{27{,}986}
\newcommand{\subTokens}{9.07M}

\newcommand{\exRecord}{1.61\%}
\newcommand{\exPara}{1.61\%}
\newcommand{\exLine}{18.14\%}
\newcommand{\exChunk}{2.61\%}

\newcommand{\nearGreedyNineFive}{13.81\%}
\newcommand{\nearGreedyNineNine}{6.52\%}
\newcommand{\lineAlone}{18.04\%}         % free line-exact hashing, same denominator
\newcommand{\jointGross}{23.19\%}        % greedy near -> line, online, gross
\newcommand{\jointNet}{22.50\%}          % net of the reference stubs we pay for
\newcommand{\stubCost}{0.69\%}           % of input, spent on stubs
\newcommand{\jointGainNet}{4.47}         % pts over free line-exact, net
\newcommand{\ctxGainOurs}{1.290$\times$} % 1/(1-r), net
\newcommand{\ctxGainLine}{1.220$\times$}
\newcommand{\ctxGainNear}{1.160$\times$} % near-dup layer alone
\newcommand{\ctxGainGross}{1.302$\times$}
\newcommand{\ctxGainDelta}{1.058$\times$} % ours net / free line-exact
\newcommand{\costPerEventOurs}{8.88\,ms}
\newcommand{\costPerEventHash}{3.08\,$\mu$s}
\newcommand{\turnRefCost}{1.003\,s}      % the turn these per-event costs sit inside
\newcommand{\indepOverstate}{4.0--11.7}  % pts, the estimate to never report (greedy rule)

\newcommand{\locomoPairs}{2}
\newcommand{\locomoTotal}{17{,}296{,}021}

\newcommand{\turnDelta}{637}             % tokens added per turn
\newcommand{\turnFobs}{94\%}             % of which tool observation
\newcommand{\turnGa}{39}                 % tokens DECODED per turn

\newcommand{\phiSmall}{2.5\%}            % batch 1,  C = 3K
\newcommand{\phiOne}{21.6\%}             % batch 1,  C = 32K
\newcommand{\phiThirtyTwo}{76.6\%}       % batch 32, C = 32K
\newcommand{\phiBig}{92.9\%}             % batch 32, C = 128K
\newcommand{\ceilSmall}{1.03$\times$}
\newcommand{\ceilOne}{1.28$\times$}
\newcommand{\ceilThirtyTwo}{4.28$\times$}
\newcommand{\ceilBig}{14.1$\times$}

\newcommand{\polDelete}{1.186\times10^{6}}   % s, delete-in-place, batch 1
\newcommand{\polAdmit}{1.66\times10^{4}}     % s, eps error-bounded admission, batch 1
\newcommand{\polDeleteBs}{1.183\times10^{6}} % s, delete-in-place, batch 32
\newcommand{\polAdmitBs}{1.29\times10^{4}}   % s, admission, batch 32
\newcommand{\polGainOne}{71.4$\times$}
\newcommand{\polGainBs}{91.6$\times$}
\newcommand{\polShareDelete}{98.6\%}     % compaction as share of total runtime
\newcommand{\polShareAdmit}{0.1\%}
\newcommand{\reprefillCost}{538}         % s, re-prefilling a 360K-token KV cache
\newcommand{\turnServeCost}{12}          % s, the turn that re-prefill serves

\newcommand{\invalMean}{0.94}          % mean fraction of context invalidated
\newcommand{\invalMedian}{0.98}        % median
\newcommand{\invalGeNF}{76\%}          % trajectories invalidating >= 0.95
\newcommand{\invalFirstDup}{4.8\%}     % median position of the first duplicate
\newcommand{\invalNTraj}{300}          % trajectories measured
\newcommand{\polGainOneMeas}{63.5$\times$}
\newcommand{\polGainBsMeas}{81.5$\times$}

\newcommand{\turnLatMean}{18.1\,ms}
\newcommand{\turnLatPnn}{115\,ms}
\newcommand{\turnCostLo}{306\,ms}
\newcommand{\turnCostHi}{3.3\,s}
\newcommand{\onlineEqBatch}{600/600}
\newcommand{\deltaAttained}{0.268}
\newcommand{\deltaConfigured}{0.316}
\newcommand{\cosRepro}{$2.4\times10^{-7}$}
\newcommand{\deltaSpread}{$4.9\times10^{-4}$}

\newcommand{\growthTurnsOurs}{1{,}727}   % turns before the KV cap stops the run
\newcommand{\growthTurnsLine}{1{,}615}
\newcommand{\growthGain}{1.07$\times$}   % ours vs free line-exact, growth regime
\newcommand{\budgetGain}{1.00$\times$}   % under a hard context budget
\newcommand{\meanCtxSnapshot}{2.8K}      % what the withdrawn model predicted
\newcommand{\meanCtxArrival}{28K}        % what the corrected model gives
\newcommand{\kvCap}{854K}                % tokens, Llama-3.1-8B at batch 1 on GB10

\newcommand{\growthCurveTraj}{600}
\newcommand{\growthSmallT}{1K}           % input size at the left of Fig.3(b)
\newcommand{\growthSmallRem}{0.80\%}     % ours, cos >= 0.95, at T = 1K  (n = 504)
\newcommand{\growthLargeT}{33K}          % rightmost T with n(T) >= 25
\newcommand{\growthLargeRem}{25.58\%}    % ours, cos >= 0.95, at T = 33K (n = 36)

\newcommand{\growthWholeRem}{22.50\%}    % whole-trajectory mean, matches \safOursRemoval
\newcommand{\maskOverOurs}{9.4$\times$}  % masking's delete-in-place vs our admission
\newcommand{\eeTurns}{1{,}453}
\newcommand{\eeNoneTurns}{1{,}341}      % turn at which no-compaction exhausts KV
\newcommand{\eeMaskOne}{9.4$\times$}
\newcommand{\eeMaskBs}{11.8$\times$}
\newcommand{\eeSummOne}{43.6$\times$}
\newcommand{\eeSummBs}{55.8$\times$}
\newcommand{\eeLinguaOne}{69.1$\times$}
\newcommand{\eeLinguaBs}{88.7$\times$}
\newcommand{\eeSdOne}{56.7$\times$}
\newcommand{\eeSdBs}{72.7$\times$}
\newcommand{\eeLzeroOne}{1.1$\times$}
\newcommand{\eeLzeroBs}{1.1$\times$}
\newcommand{\eeSecMask}{107.7}
\newcommand{\eeSecMaskBs}{105.1}
\newcommand{\eeSecSumm}{498.0}
\newcommand{\eeSecSummBs}{495.5}
\newcommand{\eeSecSd}{647.9}
\newcommand{\eeSecSdBs}{645.4}
\newcommand{\eeSecLingua}{790.5}
\newcommand{\eeSecLinguaBs}{787.9}
\newcommand{\eeSecLzero}{12.0}
\newcommand{\eeSecLzeroBs}{9.5}
\newcommand{\eeSecOurs}{11.4}
\newcommand{\eeSecOursBs}{8.9}
\newcommand{\eeReffMask}{0.75}
\newcommand{\eeReffSumm}{0.90}
\newcommand{\eeReffLingua}{0.24}
\newcommand{\eeReffSd}{0.32}
\newcommand{\eeReffLzero}{0.17}
\newcommand{\eeReffOurs}{0.22}

\newcommand{\compVsLlama}{4{,}957$\times$}
\newcommand{\compVsLingua}{1.33$\times$}
\newcommand{\compVsSemDeDup}{1.00$\times$}
\newcommand{\oursNoEncode}{1.49\,ms}     % T = 10^6, embeddings already exist
\newcommand{\encodeShare}{99.989\%}      % of our own cost at T = 10^6
\newcommand{\hashFloor}{0.0963\,ms}      % exact-hash floor at T = 10^6
\newcommand{\vsHashFloor}{15.4$\times$}  % we remain above it
\newcommand{\costRatioHash}{2{,}883$\times$} % our per-event cost vs exact hashing

\newcommand{\safTraj}{598}               % of 600; 2 witness no region line
\newcommand{\safJoined}{600}             % instance_id join against SWE-smith
\newcommand{\safLines}{6{,}276}          % witnessed evidence lines, pooled
\newcommand{\safPlusSide}{99.2\%}        % patch-direction check, pre-first-edit
\newcommand{\safMinusSide}{4.2\%}
\newcommand{\safOursRemoval}{22.47\%}    % cos >= 0.95, the shipped threshold
\newcommand{\safOursMicro}{99.984\%}

\newcommand{\safOursLost}{1}             % evidence lines lost, of \safLines

\newcommand{\safRemovalRatio}{5.1$\times$}  % ours / mask-50 removal
\newcommand{\safLossRatio}{54$\times$}      % mask-50 / ours evidence lost
\newcommand{\safMaskTwentyRemoval}{34.06\%} % matched-removal comparison
\newcommand{\safMaskTwentyMicro}{93.021\%}
\newcommand{\safOursNinetyRemoval}{31.99\%}
\newcommand{\safOursNinetyMicro}{98.709\%}
\newcommand{\safOursNineNineMicro}{100.000\%}
\newcommand{\safOursNineEightMicro}{100.000\%}
\newcommand{\safOursNineSixMicro}{100.000\%}
\newcommand{\safStubTouched}{2{,}005}    % evidence lines the near layer suppressed
\newcommand{\safStubRecovered}{2{,}004}  % still readable elsewhere in the residual
\newcommand{\safStubLost}{1}             % the reference-stub case, measured
\newcommand{\safCtrlOther}{0.7\%}        % specificity: 42/6,276 found in an unrelated trajectory
\newcommand{\safResolvedTraj}{454}       % resolved-only slice
\newcommand{\safResolvedRemoval}{18.61\%}
\newcommand{\safResolvedMicro}{99.979\%}

\newcommand{\embedDim}{768}
\newcommand{\ceilMaxCos}{0.9899}  % max cos between two survivors, t=0.99 run
\newcommand{\ceilAttainedNN}{8.42\%}\newcommand{\ceilCeilingNN}{14.16\%}\newcommand{\ceilFracNN}{59.48\%}
\newcommand{\ceilAttainedNF}{15.86\%}\newcommand{\ceilCeilingNF}{61.19\%}\newcommand{\ceilFracNF}{25.92\%}
\newcommand{\ceilAttainedN}{28.35\%}\newcommand{\ceilCeilingN}{91.86\%}\newcommand{\ceilFracN}{30.87\%}
\newcommand{\ceilChunks}{27{,}986}  % |S|, chunks presented corpus-wide
\newcommand{\ceilTokAttNN}{6.16\%}\newcommand{\ceilTokCeilNN}{11.47\%}
\newcommand{\ceilTokAttNF}{13.13\%}\newcommand{\ceilTokCeilNF}{62.55\%}
\newcommand{\ceilTokAttN}{26.12\%}\newcommand{\ceilTokCeilN}{91.63\%}
\newcommand{\costTokens}{9{,}017{,}033}  % observation tokens, 598 trajectories
\newcommand{\costDelta}{636.8}           % measured tokens per agent turn
\newcommand{\costSavedPerK}{3.39}        % M context tokens saved per 1k traj
\newcommand{\costAmp}{13.1$\times$}      % n/2 re-read amplification, n=26.28
\newcommand{\ecrOneOurs}{21.49\%}        % ECR@1.000, at cos 0.96; 0 lines lost
\newcommand{\ecrNNNOurs}{22.47\%}        % ECR@0.999
\newcommand{\ecrNNOurs}{22.47\%}         % ECR@0.99
\newcommand{\ecrNNMask}{4.42\%}
\newcommand{\ecrNNRatio}{5.1$\times$}
\newcommand{\ecrNFOurs}{31.99\%}         % ECR@0.95

\newcommand{\breakevenZero}{6.5}         % kappa where mask5 nets nothing
\newcommand{\breakevenTraj}{15}          % same, per-trajectory repair accounting
\newcommand{\safMaskFiveRemoval}{77.94\%}  % masking k=5, for the balance section
\newcommand{\turnsMean}{26.28}             % mean turns per trajectory

\newcommand{\safOursShipLines}{1}        % ours, t=0.95: lines lost of \safLines
\newcommand{\safOursShipLoss}{0.016\%}   % 1 - micro_safety, ours95
\newcommand{\safMaskTenLoss}{19.2\%}     % mask k=10
\newcommand{\safMaskTwentyLoss}{6.98\%}  % mask k=20, the last drawn point
\newcommand{\lossClip}{10\%}            % where panel (d) stops resolving
\newcommand{\safMaskFiveLoss}{26.9\%}    % mask k=5
\newcommand{\lossIntolerable}{20\%}      % STATED requirement, not measured
\newcommand{\bfrNoneA}{22.0\%}\newcommand{\bfrNoneB}{67.2\%}
\newcommand{\bfrNoneC}{93.2\%}\newcommand{\bfrNoneD}{99.3\%}
\newcommand{\bfrNNA}{23.0\%}\newcommand{\bfrNNB}{71.5\%}
\newcommand{\bfrNNC}{94.2\%}\newcommand{\bfrNND}{99.8\%}
\newcommand{\bfrNFA}{25.2\%}\newcommand{\bfrNFB}{75.0\%}
\newcommand{\bfrNFC}{96.7\%}\newcommand{\bfrNFD}{100.0\%}
\newcommand{\bfrNA}{33.0\%}\newcommand{\bfrNB}{83.7\%}
\newcommand{\bfrNC}{99.3\%}\newcommand{\bfrND}{100.0\%}
\newcommand{\bfrTraj}{600}               % no safety filter here: all loaded
\newcommand{\rdLossless}{14.77}     % log2|S| bits, lossless index

\newcommand{\rdWidthNN}{0.09}
\newcommand{\rdHiNF}{14.52}\newcommand{\rdLoNF}{13.41}
\newcommand{\rdHNF}{14.35}

\newcommand{\rdBitSaveNF}{0.25}    % bits saved at the shipped threshold
\newcommand{\rdBitSavePctNF}{1.69\%} % ... as a share of the lossless rate
\newcommand{\rdTokSaveNF}{22.50\%}    % tokens removed at the same point
\newcommand{\rscBase}{75.83\%}        % observed resolve rate, uncompacted
\newcommand{\rscNN}{75.83\%}\newcommand{\rscNF}{75.67\%}
\newcommand{\rscN}{70.33\%}
\newcommand{\rscMaskFifty}{75.17\%}\newcommand{\rscMaskTwenty}{66.33\%}

\newcommand{\rduTokNN}{18.89\%}\newcommand{\rduTokNF}{22.50\%}
\newcommand{\rduTokN}{32.02\%}
\newcommand{\rscNineEight}{75.83\%}\newcommand{\rscNineSix}{75.83\%}
\newcommand{\rduTokNineEight}{19.64\%}\newcommand{\rduTokNineSix}{21.53\%}
\newcommand{\rduTokMaskFifty}{4.56\%}\newcommand{\rduTokMaskTwenty}{34.28\%}

\newcommand{\rduTraj}{600}\newcommand{\rduResolved}{455}
\newcommand{\rscLossNF}{0.17}      % points of resolve rate at risk, shipped
\newcommand{\rscEighty}{35.67\%}\newcommand{\rduTokEighty}{64.38\%}

\newcommand{\margQ}{1{,}536}       % evidence-bearing questions (category 5 excluded)
\newcommand{\margRec}{5{,}882}     % dialogue-turn records; C(5882,2) = \locomoTotal
\newcommand{\margDim}{768}         % bge-base-en-v1.5
\newcommand{\margMedOne}{0.0185}   % median rank-k / rank-(k+1) margin, k=1
\newcommand{\margMedThree}{0.0056} % ... k=3
\newcommand{\margMedTwenty}{0.0010}% ... k=20
\newcommand{\margBoundNN}{0.1414}  % l2 bound at cos >= 0.99
\newcommand{\margBoundNF}{0.3162}  % ... at cos >= 0.95, the shipped threshold
\newcommand{\margCertNNF}{3.71\%}  % certify at k=1, cos >= 0.995
\newcommand{\margCertNN}{0.46\%}   % ... cos >= 0.99
\newcommand{\margBoundVsMarg}{17$\times$} % 0.3162 / 0.0185          D
\newcommand{\margCertBestEps}{0.001}    % only non-trivial column: linf, eps = 0.001
\newcommand{\margCertBest}{15.62\%}     % ... certifying at k=1
\newcommand{\margCertBestThree}{0.59\%} % ... already collapsed by k=3

\newcommand{\lhtbTraj}{1{,}459}
\newcommand{\lhtbModels}{28}
\newcommand{\lhtbSteps}{131{,}357}
\newcommand{\lhtbStepsMed}{41}
\newcommand{\lhtbStepsPNinety}{230}
\newcommand{\lhtbStepsMax}{1{,}074}
\newcommand{\lhtbPromptTok}{12.1B}

\newcommand{\ordIndepNineFive}{29.35\%}  % independence estimate, rejected
\newcommand{\ordANineFive}{20.52\%}      % A: line-exact -> near-dup
\newcommand{\ordBNineFive}{22.97\%}      % B: near-dup -> line-exact (shipped)
\newcommand{\ordCostNineFive}{2.45}      % pts lost by running the free layer first

\newcommand{\ordCostNinety}{4.08}
\newcommand{\detChunks}{27{,}986}
\newcommand{\detRecords}{15{,}169}
\newcommand{\detBatches}{$\{8,16,32,64,128,256\}$}
\newcommand{\detNBatches}{six}
\newcommand{\detJitterEmb}{$1.1\times10^{-6}$}   % max |delta| per coordinate
\newcommand{\detJitterCos}{$1.8\times10^{-7}$}   % max |delta cos| vs batch 64
\newcommand{\detJaccard}{1.000000}
\newcommand{\detSymDiff}{0}
\newcommand{\detDropNineNine}{2{,}357}
\newcommand{\detDropNineFive}{4{,}439}
\newcommand{\detHw}{NVIDIA GB10}

\newcommand{\hrDecisions}{27{,}985}

\newcommand{\hrMinNineNine}{$6.7\times10^{-6}$}
\newcommand{\hrMinNineFive}{$8.3\times10^{-6}$}
\newcommand{\hrRatioNineNine}{37.7$\times$}
\newcommand{\hrRatioNineFive}{46.6$\times$}
\newcommand{\hrPOneNineNine}{0.0013}
\newcommand{\hrMedNineNine}{0.077}
\newcommand{\hrWithinTenX}{zero}

\newcommand{\bandTraj}{200}
\newcommand{\bandPairs}{283{,}980}
\newcommand{\bandRecords}{5{,}011}
\newcommand{\bandRs}{$\{2,4,8,16,32\}$}
\newcommand{\bandRels}{$0.01$--$0.3$}
\newcommand{\bandBestR}{8}
\newcommand{\bandBestRel}{0.2}
\newcommand{\bandBestCand}{1.46\%}
\newcommand{\bandBestSpeedup}{68$\times$}
\newcommand{\bandBestPrec}{0.565}
\newcommand{\bandShipR}{4}
\newcommand{\bandShipRel}{0.2}
\newcommand{\bandShipCand}{31.6\%}
\newcommand{\bandShipSpeedup}{3.2$\times$}
\newcommand{\bandRelaxR}{8}
\newcommand{\bandRelaxRel}{0.3}
\newcommand{\bandRelaxRecall}{0.979}

\newcommand{\bandRelaxSpeedup}{26$\times$}
\newcommand{\bandWideRecallNineNine}{0.770}
\newcommand{\bandWideRecallNineFive}{0.314}
\newcommand{\bandNarrowCand}{90.9\%}
\newcommand{\bandNarrowRel}{0.08}
\newcommand{\bandSigma}{$0.0361$}

\newcommand{\sdSeeds}{five}
\newcommand{\sdClusters}{559}
\newcommand{\sdRateNineNine}{28.0\%}
\newcommand{\sdRateSpread}{0.14\%}       % spread of drop-set SIZE across seeds
\newcommand{\sdJaccardNineNine}{0.686}
\newcommand{\sdJaccardNineFive}{0.731}
\newcommand{\sdStableNineNine}{43.7\%}   % intersection / union of the drop sets
\newcommand{\sdStableNineFive}{50.9\%}
\newcommand{\sdFlipNineNine}{5{,}798}    % chunks whose fate depends on the seed

\newcommand{\sdEcrNNN}{19.60\%}          % ECR@0.999, at cos 0.9999
\newcommand{\sdEcrNN}{22.84\%}           % ECR@0.990, at cos 0.995  -- beats ours
\newcommand{\sdEcrNF}{33.74\%}           % ECR@0.950, at cos 0.95   -- beats ours
\newcommand{\sdSafeMicroTight}{99.917\%} % best retention reached at any t swept

\newcommand{\xhwHwA}{NVIDIA GB10}
\newcommand{\xhwHwB}{NVIDIA A100-SXM4-40GB}
\newcommand{\xhwTorchA}{2.13.0+cu132}
\newcommand{\xhwTorchB}{2.14.0}

\newcommand{\xhwDropNineNine}{7{,}678}
\newcommand{\xhwDropNineFive}{11{,}727}
\newcommand{\xhwDigestNineNine}{\texttt{b46a58ac82d22bfe}}
\newcommand{\xhwDigestNineFive}{\texttt{8e788921d75c7e76}}
\newcommand{\xhwVerdict}{The same probe run on an \xhwHwB{} under a different
PyTorch build (\xhwTorchB{} against \xhwTorchA{}) produces the same two deletion
sets byte for byte: \xhwDropNineNine{} chunks dropped at $\cos \ge 0.99$ and
\xhwDropNineFive{} at $\cos \ge 0.95$, digests \xhwDigestNineNine{} and
\xhwDigestNineFive{} on both machines.}

\newcommand{\lldModel}{Qwen2.5-7B-Instruct}
\newcommand{\lldTraj}{twelve}
\newcommand{\lldRepeats}{ten}
\newcommand{\lldRecords}{30}
\newcommand{\lldConcurrency}{16}
\newcommand{\lldJaccard}{1.000}
\newcommand{\lldStable}{100\%}
\newcommand{\lldNonEmpty}{four}
\newcommand{\lldEmpty}{eight}
\newcommand{\lldSizes}{$\{8, 10, 15, 29\}$}
\newcommand{\llmdetStatus}{On \lldTraj{} LHTB trajectories of \lldRecords{}
records each, \lldRepeats{} repeats at temperature $0$ with a fixed seed, the
DELETE set was identical every time: pairwise Jaccard \lldJaccard{},
\lldStable{} of trajectories returning a single distinct set. Repeating the
measurement with the same prompt scheduled against \lldConcurrency{} concurrent
decoys, so that batch composition varied between repeats, changed nothing.}

\newcommand{\cmpPoints}{84}
\newcommand{\cmpFront}{five}
\newcommand{\cmpBaseRecall}{0.6536}
\newcommand{\cmpIndexN}{5{,}882}
\newcommand{\cmpShipCos}{0.95}
\newcommand{\cmpShipDedup}{1.188$\times$}
\newcommand{\cmpShipCodec}{3.90$\times$}
\newcommand{\cmpShipComposite}{4.63$\times$}
\newcommand{\cmpShipRecall}{0.6543}

\newcommand{\cmpShipJaccard}{0.990}
\newcommand{\cmpKneeComposite}{9.11$\times$}
\newcommand{\cmpKneeRecall}{0.6491}

\newcommand{\cmpKneeJaccard}{0.903}
\newcommand{\cmpMaxComposite}{42.88$\times$}
\newcommand{\cmpMaxRecall}{0.6204}

\newcommand{\cmpMaxJaccard}{0.560}

\newcommand{\dhTraj}{600}
\newcommand{\dhChunks}{27{,}986}
\newcommand{\dhTokens}{9.07M}

\newcommand{\dhRatioNineFive}{0.559}
\newcommand{\dhRatioNineNine}{0.350}
\newcommand{\dhRatioNineEight}{0.409}
\newcommand{\dhRatioNineSix}{0.532}
\newcommand{\dhHalfNineNine}{63.2\%}

\newcommand{\rkQueries}{13{,}133}
\newcommand{\rkChunks}{29{,}322}
\newcommand{\rkKeepNineFive}{83.7\%}
\newcommand{\rkTenNineNine}{0.9978}
\newcommand{\rkTenNineEight}{0.9947}
\newcommand{\rkTenNineSix}{0.9881}
\newcommand{\rkTenNineFive}{0.9846}
\newcommand{\rkTenNine}{0.9518}
\newcommand{\rkKeepNineNine}{0.916}\newcommand{\rkKeepNineEight}{0.896}
\newcommand{\rkKeepNineSix}{0.856}\newcommand{\rkKeepNinety}{0.696}
\newcommand{\rkOneNineFive}{0.9606}
\newcommand{\rkTwentyNineFive}{0.9928}
\newcommand{\rkKeepNine}{69.6\%}

\newcommand{\ovGpus}{four NVIDIA A100-SXM4-40GB}
\newcommand{\ovCores}{64}
\newcommand{\ovNuma}{four}
\newcommand{\ovModel}{Qwen2.5-7B-Instruct}
\newcommand{\ovTP}{4}
\newcommand{\ovRate}{5\,req/s}

\newcommand{\ovRateBaseTtft}{157.0\,ms}
\newcommand{\ovRateTputEight}{1.0000$\times$}
\newcommand{\ovRateTputSixteen}{0.9999$\times$}

\newcommand{\ovRateTtftSixteen}{166.5\,ms}
\newcommand{\ovRateTtftDeltaSixteen}{9.5\,ms}
\newcommand{\ovRateTtftThirtytwo}{192.5\,ms}
\newcommand{\ovRateTpotSixteen}{1.0027$\times$}
\newcommand{\ovRateTpotThirtytwo}{1.1585$\times$}
\newcommand{\ovSoloSixteen}{5{,}001}
\newcommand{\ovCompactRateSixteen}{4{,}647}

\newcommand{\ovVsSoloSixteen}{0.929}
\newcommand{\ovVsSoloThirtytwo}{0.620}

\newcommand{\ovSatTputSixteen}{0.9990$\times$}

\newcommand{\ovSatTtftSixteen}{1.0004$\times$}
\newcommand{\ovSatVsSoloSixteen}{0.907}
\newcommand{\ovPinTput}{0.9966$\times$}
\newcommand{\ovPinVsSolo}{0.540}
\newcommand{\ovKeepUp}{4.7$\times$}

\newcommand{\lhtbEmbTraj}{400}
\newcommand{\lhtbWindows}{138{,}652}
\newcommand{\lhtbWindowsObs}{64{,}510}

\newcommand{\lhtbTokObs}{22.0M}
\newcommand{\lhtbExactObs}{3.35\%}

\newcommand{\lhtbNearNineNineObs}{16.78\%}
\newcommand{\lhtbNearNineFiveObs}{33.39\%}

\newcommand{\lhtbNearNineFiveAll}{33.07\%}

\newcommand{\lhtbRatioNineNine}{2.6$\times$}
\newcommand{\lhtbRatioNineFive}{2.4$\times$}
\newcommand{\kvCtx}{360K}        % tokens, the re-prefill example in the intro

\newcommand{\floorPoints}{35}            % M: operating points on the micro/macro plane
\newcommand{\floorFamilies}{four}        % ours, masking, SemDeDup, exact hashing
\newcommand{\floorSE}{1.75}              % D: binomial s.e. of 455/600, in points
\newcommand{\floorMacroNNN}{99.3\%}      % M: ... at tau = 0.999  (SemDeDup cos 0.9999)
\newcommand{\floorDmgNNN}{0.7\%}
\newcommand{\floorDmgNN}{5.5\%}

\newcommand{\floorRscNNN}{0.17}          % (ours t = 0.95)
\newcommand{\floorRscNN}{1.83}           % (ours t = 0.92)
\newcommand{\floorCutHold}{99.917\%}     % M: loosest point holding macro >= 99%
\newcommand{\floorCutFail}{99.889\%}     % M: tightest point failing it -- ours, t = 0.93
\newcommand{\floorFailMacro}{98.997\%}   % M: 592/598 at that point
\newcommand{\sdSafeMicroTightish}{99.092\%} % M: SemDeDup at cos 0.995

\newcommand{\relaxDilation}{4$\times$}   % window = 4 x the line's token count
\newcommand{\relaxOwn}{100.0\%}          % M: found in its own trajectory
\newcommand{\relaxCtrl}{1.21\%}          % M: found in an unrelated one, at 4x
\newcommand{\relaxCtrlOne}{0.85\%}       % M: at 1x (contiguous)
\newcommand{\relaxCtrlEight}{1.72\%}     % M: at 8x
\newcommand{\idxMeanMB}{0.128\,MB}       % M: mean state per trajectory
\newcommand{\idxPNineNineMB}{0.34\,MB}   % M: p99
\newcommand{\idxMaxMB}{0.44\,MB}         % M: max over 600
\newcommand{\idxPerKtok}{8.5\,KB}        % M: per 1K observation tokens
\newcommand{\idxShare}{94\%}             % M: embeddings as share of the state
\newcommand{\idxGrowth}{0.71}            % M: exponent of bytes vs tokens, log-log
\newcommand{\idxOverKV}{0.015\%}         % D: state / KV cache of the same tokens
\newcommand{\idxKVPerTraj}{867\,MB}      % D: that KV cache, mean per trajectory

\newcommand{\hwGbBw}{237\,GB/s}
\newcommand{\hwGbFp}{90.3\,TFLOP/s}
\newcommand{\hwGbMu}{0.87}

\newcommand{\hwAmpereBw}{1{,}365\,GB/s}
\newcommand{\hwAmpereFp}{288.5\,TFLOP/s}
\newcommand{\hwAmpereMu}{0.88}

\newcommand{\reprefillVsMeasured}{8.4$\times$}  % 538 / 64
\newcommand{\bwRatioAgg}{23$\times$}            % 4 x 1,365 / 237

\newcommand{\hwBwRatio}{5.7$\times$}    % 1364.7 / 237.4                    D
\newcommand{\hwEncRatio}{1.4$\times$}   % 99935 / 71708                     D
\newcommand{\hwBwOverstate}{14\%}       % 1555 / 1364.7 - 1                 D
\newcommand{\ppRateLo}{27.3\,K\,tok/s}   % apparent rate at 30,720 tok       S
\newcommand{\ppRateHi}{36.9\,K\,tok/s}   % apparent rate at 4,096 tok        S
\newcommand{\ppMaxLen}{30{,}720}         % served max_model_len              S
\newcommand{\ppWarmSpeedup}{12.1$\times$}% cold 1.124 s vs warm 0.093 s      S
\newcommand{\ppWarmS}{0.09\,s}           % warm prefill at 30,720 tok        S
\newcommand{\ppColdS}{1.12\,s}           % cold prefill at 30,720 tok        S
\newcommand{\ppCachedFrac}{99.7\%}       % warm cached_tokens / prompt_tokens S
\newcommand{\ppReprefill}{64\,s}         % fit extrapolated to 360,152 tok   D
\newcommand{\ppReprefillLin}{13\,s}      % same, attention term dropped      D
\newcommand{\ppLinUnderstate}{4.8$\times$}% 64 / 13                          D

\newcommand{\encSensTraj}{300}
\newcommand{\encSensChunks}{14{,}756}
\newcommand{\encSensBge}{13.79\%}        % shipped: bge-base at tau = 0.95    M
\newcommand{\encSensEfive}{30.57\%}      % e5-base at the same tau            M
\newcommand{\encSensMini}{11.26\%}       % MiniLM-L6 at the same tau          M
\newcommand{\encSensSpread}{2.7$\times$} % 30.57 / 11.26                      D
\newcommand{\encSensRecallFixed}{99.6--99.7\%} % e5, gte recall of bge's set  M
\newcommand{\encSensTstarEfive}{0.975}   % tau matching 13.79\% removal       M
\newcommand{\encSensTstarMini}{0.935}    %                                    M
\newcommand{\encSensJacEfive}{0.76}      % Jaccard vs shipped, matched        M
\newcommand{\encSensJacGte}{0.85}        %                                    M
\newcommand{\encSensJacMini}{0.65}       %                                    M
\newcommand{\encSensRecallEfive}{85.9\%} % recall of shipped set, matched     M
\newcommand{\encSensRecallGte}{90.9\%}   %                                    M
\newcommand{\encSensRecallMini}{75.8\%}  %                                    M

\newcommand{\ixSurvMed}{36}              % survivors per trajectory, median   M
\newcommand{\ixSurvMax}{112}             % survivors per trajectory, max      M
\newcommand{\ixAdmitMs}{16.4\,ms}        % one admit() call, median           M
\newcommand{\ixSearchUs}{44\,$\mu$s}     % the search inside it, median       M
\newcommand{\ixSearchFrac}{0.46\%}       % search / total replay wall clock   M
\newcommand{\ixSearchPerK}{13.4\,$\mu$s} % added search per 1,000 survivors   M
\newcommand{\ixEffBw}{229\,GB/s}         % effective, at N = 1M               M
\newcommand{\ixBwOfPeak}{97\%}           % 229 / 237 achieved triad           D
\newcommand{\ixCrossover}{1.2\,M}        % survivors where search = encode    D
\newcommand{\ixCrossoverRatio}{10{,}000$\times$} % 1.22M / 112                D
\newcommand{\ixCrossoverMem}{3.8\,GB}    % 1.22M x 768 x 4 B                  D
\newcommand{\ixReachedMem}{344\,KB}      % 112 x 768 x 4 B                    D

\newcommand{\sdwNineFiveRem}{13.69\%}    % within-trajectory, cos 0.95         M
\newcommand{\sdwNineFiveMicro}{99.939\%} % evidence retained there             M
\newcommand{\sdwNineNineRem}{6.45\%}     % within-trajectory, cos 0.99         M
\newcommand{\sdwReachFactor}{2.46$\times$}% 33.74 / 13.69, the value of reach  D
\newcommand{\sdNineFiveRem}{33.74\%}     % global k-means, cos 0.95            M
\newcommand{\sdNineNineRem}{24.55\%}     % global k-means, cos 0.99            M
\newcommand{\sdwSeedSpread}{0.03\,pts}   % removal range over five seeds       M

%% file: sections/00-abstract.tex
Agent memory grows as agents read inputs, reason, and call tools. Longer histories
increase inference cost and eventually exceed the context window. LLM-based
summarization reduces this history but adds latency and provides no explicit
bound on information loss. We propose LAM, a \textbf{L}ossy \textbf{A}gent
\textbf{M}emory system with three components: a deterministic deduplication rule
with a substitution bound on retrieval scores --- a bound on score
perturbation, not a certificate of unchanged ranking, a memory manager that preserves the cached
prefix and overlaps compaction with inference, and a performance model that
estimates compaction costs before deployment. On \subTraj{} agent trajectories,
LAM removes \safOursRemoval{} of observation tokens while retaining
\safOursMicro{} of the measured gold-patch evidence. At a fixed deletion set,
the performance model predicts a \polGainOne{}--\polGainBs{} end-to-end
speedup from removing records before prefill instead of deleting them from a
prefilled context. That benefit comes from the schedule rather than the rule and applies
to any prefix-preserving test.

%% file: sections/01-intro.tex
\section{Introduction}
\label{sec:intro}

\begin{figure}[!b]
\centering
\includegraphics[width=\linewidth]{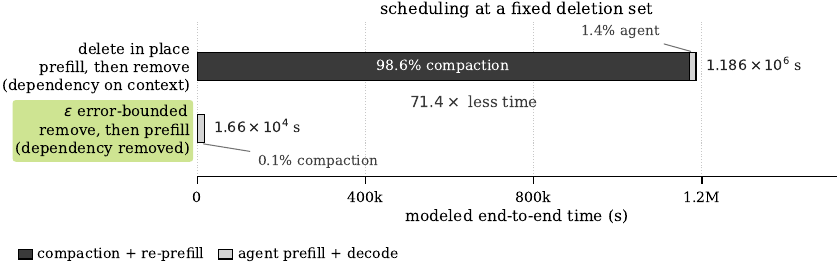}
\caption{\textbf{Compaction needs the right schedule.} Modeled end-to-end time
under the two schedules, where the grey bars delete a record already in the
context and so pay to prefill it and then prefill the invalidated suffix again,
while ours (green) tests the arriving record first and pays neither.}

\label{fig:motivation}
\end{figure}

AI agents retain the files they read, the commands they run, and the resulting
outputs. This history supports task completion
\citep{chhikara2025mem0,yu2026memagent}, but its growth raises inference cost and
latency, and once it exceeds the context window some of it must go. Memory
compaction decides what to retain, rewrite, compress, or discard
\citep{chhikara2025mem0,fang2026lightmem,chen2026slipstream}; during long tasks it
runs repeatedly and can block inference for tens of seconds per event
\citep{cim2026parallel,kummer2026wild}.

Existing methods select information for later reuse. LLM summarization and
learned compressors process the context with a model
\citep{mu2023gisting,ge2024icae,cheng2024xrag,pan2024llmlingua2}, adding inference
cost; SemDeDup \citep{abbas2023semdedup} clusters embeddings with $k$-means, whose
output can vary with its initialization; observation masking keeps recent tool
observations and discards older ones \citep{lindenbauer2025complexitytrap}. Each
exposes a token budget, cluster count, or window size, but none of those
parameters bounds the retrieval-score loss that deletion causes. Furthermore
their reproducibility depends on the model and execution configuration
(\S\ref{sec:determinism}).

Tool outputs often repeat: agents reread files, rerun tests, and list directories
more than once. Hashing \citep{quinlan2002venti,zhu2008datadomain} detects exact
copies but misses outputs differing in small details. LAM therefore keeps exact
hashing as its first layer --- the mechanism serving systems already use to reuse
identical KV blocks \citep{kwon2023pagedattention} --- and adds a second above it
that merges near-duplicates whenever unit-normalized embeddings satisfy
$\|a-b\|_2 \le \delta$, a query-free test equivalent to a merge threshold on
cosine. Thus every deleted record keeps a retained representative within
$\delta$, and for every unit query the maximum retrieval score decreases by at
most $\delta$ (Theorem~\ref{thm:main}). The guarantee concerns scores, not
ranking or task outcomes, which we evaluate separately (\S\ref{sec:eval}). With
fixed embeddings and arrival order the rule is reproducible and reports the
maximum substitution distance it observed, $\hat\delta$.

\paragraph{Compaction scheduling.}
The rule tests each record on arrival, before prefill, so the prefilled prefix
survives. Deleting from an existing context instead shifts subsequent positions
and invalidates their KV entries. Specifically, re-prefilling a \kvCtx{} context
costs ${\approx}\reprefillCost$\,s in our cost model against
${\approx}\turnServeCost$\,s for the turn's inference: over \subTraj{}
trajectories removing the same records either way, deletion in place spends
\polShareDelete{} of the run on compaction and re-prefill against
\polShareAdmit{} for ours (Figure~\ref{fig:motivation}). The test can also run on the host CPU while the
accelerator serves; encoding is charged when embeddings are not already
available.

\paragraph{Evidence retention.}
We score evidence retention against each SWE-smith task's gold patch and count
lines accessible only through reference stubs as lost. At a retention floor of
$0.999$, none of the evaluated observation-masking settings qualifies; SemDeDup
removes \sdEcrNNN{} of tokens, compared with LAM's \ecrNNNOurs{}.
At a $0.99$ floor, SemDeDup removes more (\S\ref{sec:eval}). LAM's removal rate
rises from \growthSmallRem{} at \growthSmallT{} tokens to \growthLargeRem{} at
\growthLargeT{}. On LHTB, which contains \lhtbTraj{} terminal-agent trajectories
from \lhtbModels{} models under a different scaffold, the near-duplicate rule
removes \lhtbNearNineFiveObs{} of observation tokens, compared with
\nearGreedyNineFive{} on SWE-smith. Near-duplicates are rare in the LoCoMo
conversational-memory control.

\paragraph{Contributions.} Figure~\ref{fig:sysdesign} summarizes LAM.

\begin{itemize}\itemsep1pt
\item \textbf{An online error-bounded test} (\S\ref{sec:online}). LAM tests each
record on arrival, embeds it once, and preserves the prefilled prefix, at
\turnLatMean{} per turn (\turnLatPnn{} at p99) and with the same deletion set
as the batch pipeline on \onlineEqBatch{} trajectories.
\item \textbf{A query-independent substitution bound} (\S\ref{sec:bound}).
Replacing each deleted record with a retained one within $\delta$ limits the
retrieval-score deficit to $\delta$ for every unit query. The bound is tight
and set by the merge threshold; each run reports its realized value,
\deltaAttained{} at a configured \deltaConfigured{}.
\item \textbf{A scheduling and placement study} (\S\ref{sec:main-results}). At a
fixed deletion set the cost model predicts \polGainOne{} speedup at batch 1 and
\polGainBs{} at batch 32 for removal before prefill, and separates LAM from
masking, summarization, SemDeDup, and LLMLingua-2 by
\eeMaskOne{}--\eeLinguaOne{} end to end. CPU compaction alongside \ovModel{}
serving on \ovGpus{} maintains \ovRateTputSixteen{} of baseline output
throughput (\S\ref{sec:ablation-study}).
\item \textbf{An evaluation of removal and evidence retention}
(\S\ref{sec:main-results}). ECR@$\tau$ measures removal subject to an explicit
retention floor: at the default threshold LAM removes \safOursRemoval{} of
observation tokens and retains \safOursMicro{} of measured evidence, with
deletion sets identical across the tested batch sizes and hardware.

\end{itemize}

%% file: sections/04-method.tex
\section{Design}
\label{sec:method}

LAM combines bounded deduplication (\S\ref{sec:relation}--\ref{sec:bound}),
a memory manager that overlaps compaction with inference
(\S\ref{sec:sysdesign}), and a model of deployment cost
(\S\ref{sec:perfmodel}).

\subsection{The proximity relation}
\label{sec:relation}

Let records be embedded as unit-normalized vectors in $\mathbb{R}^D$. Two
records $a, b$ are \emph{$\varepsilon$-equivalent} when $\|a - b\|_2 \le \delta$.
This relation is not transitive, and for unit vectors it is equivalent to a
threshold on cosine:

\begin{equation}
\|a-b\|_2^2 = 2 - 2\cos(a,b),
\qquad\text{so merging at }\cos \ge t \iff \|a-b\|_2 \le \delta = \sqrt{2-2t}.
\label{eq:identity}
\end{equation}
We call $t$ the \emph{merge threshold} and $\delta$ the \emph{substitution
bound}; $\varepsilon$ denotes the error budget generically and $\delta$ its
value under $\ell_2$. Each fixes the other, and \S\ref{sec:bound} shows the same
$\delta$ bounds the retrieval-score loss: the radius of the query-free test is
the size of the guarantee. We use $\ell_2$ because a coordinatewise tolerance
$\|a-b\|_\infty \le 2\varepsilon$ would yield a bound proportional to $\sqrt{D}$
instead. The quantization step in $Q_\varepsilon$ only proposes candidates and
does not enter the bound (Appendix~\ref{app:margin}).

\subsection{Three layers}
\label{sec:layers}

\textbf{L0, byte-exact hashing.} This layer is carried over from LLM serving:
vLLM hashes fixed-size blocks so a repeated prefix reuses one copy of the KV
cache instead of recomputing it \citep{kwon2023pagedattention}. We apply the
same mechanism one level up, to the text an agent accumulates rather than the
cache it produces --- xxHash3 over individual lines, repeats replaced by stubs
--- lines being the granularity that removes the most under exact matching
(Appendix~\ref{sec:redundancy}). L0 thus inherits the serving cache's need for
an exact match, which L1 removes, and runs after L1 to preserve the blocks
near-duplicate matching uses (Figure~\ref{fig:joint}).
\textbf{L1, $\varepsilon$-equivalence candidates.} We quantize each coordinate
with $Q_\varepsilon(x) = \mathrm{round}(x / 2\varepsilon)$, divide the vector
into bands, and hash each band; records sharing a band digest are candidates and
every candidate pair is checked in full-precision $\ell_2$. Hashing is $O(N)$ at
fixed dimension and band settings, while verification cost scales with the
candidate count. Missed candidates change removal but never invalidate the
substitution bound on accepted merges.
\textbf{L2, error-bounded byte compression.} We compress retained embeddings
with cuSZp \citep{huang2023cuszp} under a relative \emph{codec error bound},
separate from the substitution bound. Deduplication and
byte compression multiply their storage reductions. Byte compression does not
reduce the text in the context (\S\ref{sec:composite}).

\subsection{The $\varepsilon$ error-bounded test}
\label{sec:online}

LAM applies the redundancy test before each arriving record is prefilled. The
test is the bound: an arrival is dropped only when some retained
record lies within $\ell_2$ distance $\delta = \sqrt{2-2t}$ of it, the radius that
\eqref{eq:identity} assigns to the merge threshold $t$. Every merge is
therefore a $\delta$-substitution (Definition~\ref{def:sub}), and the error it
introduces is at most $\delta$ for every query rather than on average
(Theorem~\ref{thm:main}); at $\cos \ge 0.95$, $\delta = \deltaConfigured{}$.
The operator also reports the largest distance it used, $\hat\delta \le \delta$.
Algorithm~\ref{alg:admit} describes the rule.

\begin{algorithm}[t]
\caption{The $\varepsilon$ error-bounded test for one turn, in which distance
checks use full-precision embeddings while the byte codec compresses only the
embeddings that are retained.}
\label{alg:admit}
\begin{algorithmic}[1]
\Require arrivals $x_1{\dots}x_n$; retained set $R$ and band index $I$; threshold
$t$; step $\varepsilon$
\Ensure updated $R, I$; stubs; measured bound $\hat\delta$
\State $\delta \gets \sqrt{2-2t}$;\quad $\hat\delta \gets 0$
\For{$j = 1$ \textbf{to} $n$}                        \Comment{arrival order}
  \State $e_j \gets \textsc{Encode}(x_j)$            \Comment{encode each arrival once}
  \State $B \gets \textsc{BandHash}(Q_\varepsilon(e_j))$
  \State $\mathcal{C} \gets \bigcup_{b \in B} I[b]$  \Comment{candidate records}
  \State $y^{*} \gets \min\{\, y \in \mathcal{C} : \|e_j - e_y\|_2 \le \delta \,\}$ \Comment{verify distance; choose lowest index}
  \If{$y^{*}$ exists}
    \State emit stub $\langle x_j \equiv y^{*}\rangle$; \quad
           $\hat\delta \gets \max(\hat\delta,\ \|e_j - e_{y^{*}}\|_2)$
  \Else
    \State $R \gets R \cup \{x_j\}$;\quad $I[b] \gets I[b] \cup \{j\}\ \forall b \in B$;\quad
           store $\textsc{cuSZp}(e_j)$
  \EndIf
\EndFor
\State \Return $R,\ I,\ \hat\delta$
\end{algorithmic}
\end{algorithm}

The index contains only retained records, and decisions are never revised.
The rule has three properties.
\textbf{Prefix stability:} replacing an arriving duplicate with a stub leaves
all prefilled tokens in place. Later eviction can invalidate the prefix and is
accounted for separately (Appendix~\ref{sec:policy}).
\textbf{Single encoding:} each chunk is embedded once. Encoding, which accounts
for \encodeShare{} of compaction cost, processes only the arriving tokens
$\Delta$ rather than the full context $C$.
\textbf{Measured bound:} the maximum substitution distance $\hat\delta$ is updated
with each merge and reported for the turn; the maximum across turns gives the
run's bound.
Each dropped chunk is replaced by a reference stub identifying its retained
representative. The stub supports later lookup and inspection. We report both
gross removal and net removal after the \stubCost{} input-token cost of stubs.

\subsection{The substitution bound}
\label{sec:bound}

Each merge deletes a record and retains a nearby representative. This can
change retrieval scores and rankings (Appendix~\ref{sec:theory}). The following
bound limits the decrease in the maximum score for any unit query.

\begin{definition}[$\delta$-substitution]
\label{def:sub}
$(R,\pi)$ is a \emph{$\delta$-substitution} of $S$ if $R \subseteq S$,
$\pi : S \to R$ with $\pi|_R = \mathrm{id}$, and
$\|x - \pi(x)\|_2 \le \delta$ for every $x \in S$.
\end{definition}

\begin{theorem}[Score deficit]
\label{thm:main}
Let $(R,\pi)$ be a $\delta$-substitution of $S$. Then for \emph{every} unit
query $q$,
$0 \le \max_{x \in S} q\cdot x - \max_{z \in R} q \cdot z \le \delta$,
and both constants are tight.
\end{theorem}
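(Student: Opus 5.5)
The plan is to prove the two inequalities separately and then give explicit tightness examples. I assume $S$ is finite and nonempty, as it is in Algorithm~\ref{alg:admit}, so both maxima are attained. Only Definition~\ref{def:sub} and Cauchy--Schwarz are needed; the unit normalization of records is not used for the inequalities themselves, only for making the tightness examples live on the sphere.

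\textbf{Lower bound.} By Definition~\ref{def:sub}, $R \subseteq S$, so the maximum over $R$ is taken over a subset of the set defining the maximum over $S$. Hence $\max_{z\in R} q\cdot z \le \max_{x\in S} q\cdot x$ for every $q$, and the deficit is nonnegative.

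\textbf{Upper bound.} Fix a unit $q$ and let $x^\ast \in S$ attain $\max_{x\in S} q\cdot x$. Its representative $\pi(x^\ast)$ lies in $R$, so
$\max_{z\in R} q\cdot z \ge q\cdot \pi(x^\ast) = q\cdot x^\ast - q\cdot(x^\ast-\pi(x^\ast))$.
Cauchy--Schwarz with $\|q\|_2=1$ gives $q\cdot(x^\ast-\pi(x^\ast)) \le \|x^\ast-\pi(x^\ast)\|_2 \le \delta$. Rearranging yields a deficit of at most $\delta$. The argument never needs $R$'s maximizer, only a single witness $\pi(x^\ast)$, so it is uniform in $q$.

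\textbf{Tightness.} This is the only step needing care, because both the records and $q$ must be unit vectors and $\|x-\pi(x)\|_2\le\delta$ must hold. For the constant $0$, take $\pi=\mathrm{id}$ and $R=S$; this is a valid $\delta$-substitution and the deficit is $0$ for every $q$. For the constant $\delta$ with $0<\delta\le 2$, I would use a two-point example. Take $S=\{a,b\}$ with $a,b$ unit vectors, $\|a-b\|_2=\delta$, $R=\{b\}$, $\pi(a)=b$, and $q=(a-b)/\delta$. Since $\|a\|=\|b\|$, we get $q\cdot a - q\cdot b = \|a-b\|_2 = \delta$. The remaining concern is that $a$ must be the maximizer over $S$; this holds because $q\cdot a > q\cdot b$ whenever $a\ne b$. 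So the deficit equals $\delta$ exactly.

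For $\delta>2$, no two unit vectors are farther apart than $2$, so the attainable sup is $\min(\delta,2)$. I would note this caveat, since under \eqref{eq:identity} we have $\delta=\sqrt{2-2t}\le 2$ for every $t\in[-1,1]$, which makes the caveat vacuous in the paper's setting.
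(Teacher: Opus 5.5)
Your proposal is correct and follows the paper's proof essentially line for line: the lower bound comes from $R\subseteq S$, the upper bound from Cauchy--Schwarz applied at the maximiser $x^*$ and its representative $\pi(x^*)$, and tightness of $\delta$ from the same two-point example with $q=(a-b)/\delta$. Two small remarks: the appeal to $\|a\|=\|b\|$ is unnecessary, since $q\cdot(a-b)=\|a-b\|_2^2/\delta=\delta$ directly, and your caveat that $\delta\le 2$ for unit records is a minor refinement the paper leaves implicit.
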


The proof uses Cauchy--Schwarz (Appendix~\ref{sec:theory}). Both endpoints are
attainable. The bound holds for all unit queries in the same embedding space
without assumptions about their score distribution. It also implies a distribution-free
rate--distortion bound; Appendix~\ref{sec:theory} relates admissible removal to
the corpus's covering number.

\paragraph{Scope of the bound.}
The theorem does not guarantee unchanged rankings or task outcomes. The
rank-invariance test in Appendix~\ref{app:margin} requires a sufficient score
margin. On LoCoMo, the bound $\delta = \margBoundNF{}$ exceeds the median top-1
margin of \margMedOne{} by \margBoundVsMarg{}; even at $\cos \ge 0.99$, only
\margCertNN{} of queries pass the test. At the shipped threshold the bound
therefore certifies score perturbation, not rank: it is not a retrieval
guarantee, and we do not claim one. We therefore measure retrieval quality and evidence retention
separately (\S\ref{sec:eval}).

\paragraph{The substitution map.}
For each deleted $x$, $\pi(x)$ is the lowest-indexed retained candidate within
$\delta$. With fixed embeddings and arrival order, this choice is independent
of candidate traversal order. Each substitution therefore satisfies
Theorem~\ref{thm:main}. Direct comparison with a retained representative is
necessary because the distance relation is not transitive. At $\cos \ge 0.95$,
this rule removes \nearGreedyNineFive{} of observation tokens.

\paragraph{The measured bound.}
Each run reports its maximum substitution distance alongside the configured
bound: \deltaAttained{} versus \deltaConfigured{} at $\cos \ge 0.95$. The
smaller value bounds the substitutions made in that run. Fixed embeddings give
deterministic decisions under the tested arithmetic implementation. Across
machines, encoder outputs vary slightly: measured cosine differences are
\cosRepro{} and the spread in $\hat\delta$ is \deltaSpread{}. No tested pair
crossed the threshold (Appendix~\ref{sec:determinism}); this is an empirical
result, not a guarantee for records arbitrarily close to the threshold.

\subsection{System design}
\label{sec:sysdesign}

\begin{figure}[t]
\centering
\includegraphics[width=\linewidth]{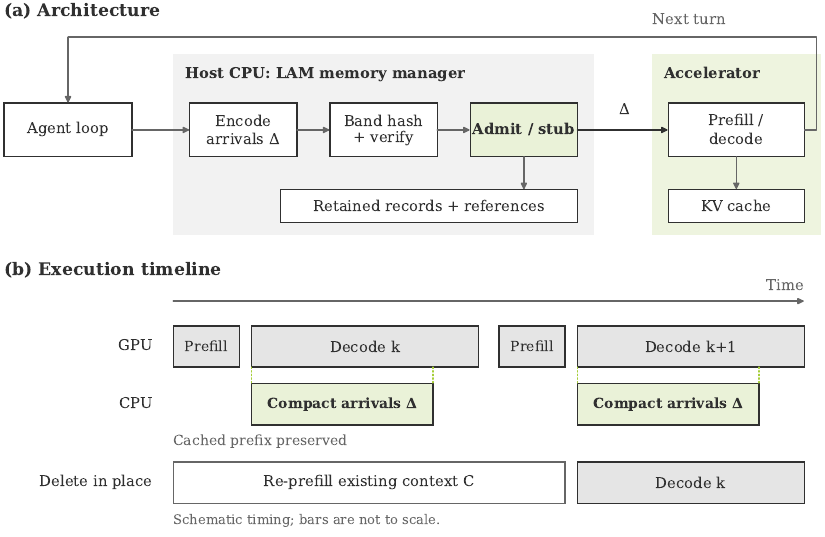}
\caption{System design, where \textbf{(a)} traces one arriving record through
the host memory manager to either a retained record or a reference stub, and
\textbf{(b)} contrasts the two schedules, the upper one voiding the shaded cache
suffix that must then be prefilled again and the lower one leaving the
prefilled prefix intact.}

\label{fig:sysdesign}
\end{figure}

The system uses three design choices (Figure~\ref{fig:sysdesign}), the first
panel of which follows an arrival through candidate generation by band hashing
and exact distance verification.
\textbf{Placement.} The CPU test runs concurrently with GPU inference.
Appendix~\ref{sec:overlap} measures this configuration and finds little serving
interference at the selected thread count. Encoding and byte-codec costs are
accounted for separately when required.
\textbf{Locality.} The band index stores retained records by digest, so each
arrival checks its candidates rather than scanning the full context. The byte
codec reduces embedding storage; text remains available for reference lookup.
\textbf{Granularity.} Encoding processes only the turn's new chunks. Search
cost still depends on the retained index size, which we measure in
Appendix~\ref{sec:index-scaling}.

\subsection{Performance model}
\label{sec:perfmodel}

We estimate deployment cost by dividing a turn into the
part that scales with context length $C$ (KV reads during decode, prefill of
arrivals against the existing context) and the part that does not (streaming
weights once per decode step). Let $\varphi$ be the share of turn latency in the
first part. Eliminating that part gives an Amdahl ceiling of $1/(1-\varphi)$
for speedup from context reduction. Without overlap, modeled end-to-end time is

\begin{equation}
t_{\text{e2e}} \;=\;
\underbrace{\sum_{\text{turns}}\Big[\tfrac{\Delta}{R_{p}} + G_a\, t_{d}(C)\Big]}_{\text{serving}}
+
\underbrace{\sum_{\text{events}}\Big[\Delta\big(\tfrac{1}{R_{e}}+\tfrac{1}{R_{h}}+\tfrac{1}{R_{z}}\big) + \tfrac{|\mathcal{C}|}{R_{v}}\Big]}_{\text{compaction}}
+
\underbrace{\mathbb{1}[\text{not prefix-stable}]\sum_{\text{events}} \tfrac{C}{R_{p}}}_{\text{re-prefill}} .
\label{eq:perf}
\end{equation}

\begin{table}[t]
\centering
\caption{Inputs to \eqref{eq:perf}, split into offline terms measured once per
device by kernel microbenchmark and online terms read from the run itself.}
\label{tab:perfmodel}
\small
\begin{tabular}{llll}
\toprule
 & symbol & quantity & value / source \\
\midrule
\multirow{6}{*}{\rotatebox{90}{\footnotesize offline}}
 & $R_{p}, t_{d}$ & prefill rate, decode step time & GB10 anchors, Table~\ref{tab:anchors} \\
 & $R_{e}$ & encoder throughput & \encRate\,tok/s \\
 & $R_{h}$ & xxHash3 & \hashRate \\
 & $R_{v}$ & exact $\ell_2$ verify & \lTwoRate\,M pairs/s \\
 & $R_{z}$ & cuSZp & \cuszpRate \\
 & $\beta$ & bytes per token & \bytesPerTok\,B/tok \\
\midrule
\multirow{4}{*}{\rotatebox{90}{\footnotesize online}}
 & $C$, $\Delta$, $G_a$ & context, arrivals, decoded tokens & \turnDelta{}, \turnGa{} per turn \\
 & $|\mathcal{C}|$ & candidate pairs verified & run \\
 & $\varphi$ & context-scaling share of a turn & \phiOne{} (b1, 32K), \phiThirtyTwo{} (b32) \\
 & $\mathbb{1}[\cdot]$ & requires re-prefill & $0$ test first, $1$ delete-in-place \\
\bottomrule
\end{tabular}
\end{table}

Table~\ref{tab:perfmodel} separates device measurements
(Appendix~\ref{app:anchors}) from run-dependent inputs, and its indicator is
what decides whether a schedule pays re-prefill at all. The context-dependent
share $\varphi$ varies with batch size, context
length, and model architecture. At batch 1 and $C = 32$K,
$\varphi = \phiOne{}$, giving a context-reduction speedup ceiling of \ceilOne{};
at batch 32 the ceiling is \ceilThirtyTwo{}. These deployment parameters must
therefore accompany speedup comparisons. A prefix-preserving test removes
the re-prefill term. In the model, deletion from a $C = 360$K context adds
$\reprefillCost{}$\,s per event to a turn with \turnServeCost{}\,s of inference.
The overlap experiment in Appendix~\ref{sec:overlap} measures how much of the
compaction term can run concurrently with serving.

\paragraph{Candidate generation.}
We use standard banded hashing and evaluate its operating points on agent
observations (\S\ref{sec:ablation-study}; Appendix~\ref{sec:banding}). Full-precision
verification preserves the substitution bound at every setting. We report
comparison savings at full pair recall so the candidate filter matches
exhaustive greedy deduplication. Lower recall can retain additional records;
it preserves the substitution bound but need not preserve the packing property
behind the removal ceiling.

%% file: sections/08-eval.tex
\section{Experiments}
\label{sec:eval}

\subsection{Experimental setup: benchmarks and baselines}
\label{sec:setup}

\paragraph{Benchmarks.}
The primary corpus contains \corpusTraj{} SWE-smith agent trajectories
(\corpusRecords{} records, \corpusTokens{} tokens); unless stated otherwise we
use a fixed \subTraj{}-trajectory subset with \subObs{} observation records,
\subChunks{} chunks, and \subTokens{} observation tokens, encoding $512$-token
windows with \texttt{bge-base-en-v1.5} ($D = \embedDim{}$). The determinism study uses \detChunks{} of those windows.
LHTB adds \lhtbTraj{} trajectories from \lhtbModels{} models and \lhtbSteps{}
steps, with roughly an order of magnitude more steps per trajectory
(Appendix~\ref{sec:lhtb}); LoCoMo provides a conversational-memory control, in
which we find only \locomoPairs{} near-duplicate pairs among \locomoTotal{}.
Main measurements use an NVIDIA GB10, serving experiments two nodes with
\ovGpus{} on the serving node.

\paragraph{Baselines.}
We report LAM in three cumulative configurations: L0 alone, L0$+$L1, and the
full pipeline. L0, the exact-hashing layer inherited from serving-side prefix
caching (\S\ref{sec:layers}), is therefore an ablation of our own design rather
than a competing system. Evidence retention is measured by searching for gold-patch lines in the residual
text, which applies to deletion methods: observation masking, keeping the last
$k$ observations \citep{lindenbauer2025complexitytrap}, and SemDeDup
\citep{abbas2023semdedup}. A relaxed variant also covers token dropping
(\S\ref{sec:relaxed}), admitting LLMLingua-2 \citep{pan2024llmlingua2}, the
intake compressor of LightMem~\citep{fang2026lightmem} and thus that component
rather than the full system. Neither variant assesses paraphrases or latent
representations, so MemAgent~\citep{yu2026memagent}, Gisting
\citep{mu2023gisting}, ICAE, and xRAG are compared on modeled cost and a
Mem0-style deletion prompt separately. Removal ratios are token-weighted.
Finally, LAM deduplicates within each trajectory while the main SemDeDup
configuration clusters across them (Appendix~\ref{sec:sdwithin}); we compare at
matched removal where applicable.

\subsection{Main results}
\label{sec:main-results}

\begin{figure}[t]
\centering
\includegraphics[width=0.50\linewidth]{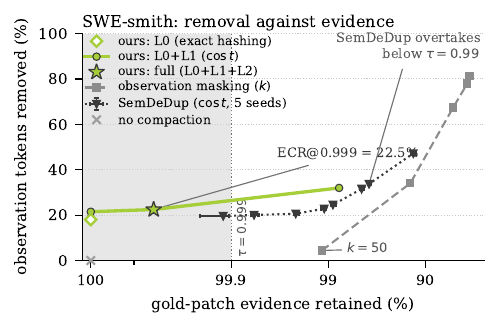}
\caption{Token removal against evidence retention, one marker per measured
setting and up-and-left better, where the three accented series are LAM's
cumulative layers --- L0, L0$+$L1, and the full pipeline --- and the error bars
give SemDeDup's spread across seeds.}

\label{fig:ecr}
\end{figure}

\begin{table}[t]
\centering
\caption{Modeled end-to-end latency per turn, with the ratio to LAM in
parentheses, where ``wall'' defers compaction to the context budget and
``admit'' removes before prefill, the shaded row is the shipped configuration,
and accented cells are the best in their column.}
\label{tab:e2emethods}
\footnotesize
\begin{tabular}{llrrr}
\toprule
method & schedule & $r_{\mathit{eff}}$ & batch 1 & batch 32 \\
\midrule
observation masking ($k=5$)         & wall  & \eeReffMask   & \eeSecMask~(\eeMaskOne)     & \eeSecMaskBs~(\eeMaskBs)     \\
LLM summarization ($\rho=10$)       & wall  & \eeReffSumm   & \eeSecSumm~(\eeSummOne)     & \eeSecSummBs~(\eeSummBs)     \\
SemDeDup (global $k$-means, $0.95$) & wall  & \eeReffSd     & \eeSecSd~(\eeSdOne)         & \eeSecSdBs~(\eeSdBs)         \\
LLMLingua-2 (rate $0.80$)           & wall  & \eeReffLingua & \eeSecLingua~(\eeLinguaOne) & \eeSecLinguaBs~(\eeLinguaBs) \\
\midrule
LAM, L0 only                        & admit & \eeReffLzero  & \eeSecLzero~(\eeLzeroOne)   & \eeSecLzeroBs~(\eeLzeroBs)   \\
\shiprow LAM, full                  & admit & \eeReffOurs   & \best{\eeSecOurs}           & \best{\eeSecOursBs}          \\
\bottomrule
\end{tabular}
\end{table}

\paragraph{Removal and evidence.}
$\mathrm{ECR}@\tau$ is the maximum token removal a method achieves while
retaining at least a fraction $\tau$ of the \safLines{} gold-patch evidence
lines across \safTraj{} trajectories; lines reachable only through stubs count
as lost, SemDeDup is swept finely near perfect retention over \sdSeeds{} seeds,
and L2 compresses stored embeddings rather than context tokens, so it moves
neither axis. At the default threshold LAM removes \safOursRemoval{} of observation
tokens and retains \safOursMicro{} of evidence, losing \safOursLost{} line, and
with perfect retention it removes \ecrOneOurs{}. At $\tau = 0.999$ it removes
\ecrNNNOurs{} against \sdEcrNNN{} for SemDeDup and no qualifying masking setting
(Figure~\ref{fig:ecr}, Table~\ref{tab:ecr}); SemDeDup removes more at lower
floors, \sdEcrNN{} versus \ecrNNOurs{} at $0.99$ and \sdEcrNF{} versus
\ecrNFOurs{} at $0.95$.

We selected the $0.999$ floor after examining \floorPoints{} settings from
\floorFamilies{} families (Appendix~\ref{sec:balance}): there every admissible
setting retains all evidence in at least \floorMacroNNN{} of trajectories, and
among families with a measured conditional resolve-rate ceiling the largest
decrease is \floorRscNNN{} points, about a tenth of a standard error, against
\floorDmgNN{} of tasks losing evidence and \floorRscNN{} points at $0.99$. These
are static evidence metrics, not task-success measurements. SemDeDup's
cross-trajectory search contributes to its higher removal: restricting it to
each trajectory drops removal from \sdNineFiveRem{} to \sdwNineFiveRem{} at
$\cos t = 0.95$. Masking removes \ecrNNMask{} at $\tau = 0.99$, \ecrNNRatio{}
less than LAM; within LAM, L0 alone removes \lineAlone{} with no measured
evidence loss and adding L1 raises removal to \safOursRemoval{}.

\paragraph{Scheduling dominates.}
With the compressor, corpus, and deletion set fixed, the cost model predicts
\polGainOne{} speedup at batch 1 and \polGainBs{} at batch 32 from removing
records before prefill instead of deleting them from a prefilled context
(Figure~\ref{fig:motivation}; Appendix~\ref{sec:policy}). The benefit comes from
the schedule, not the rule: any prefix-preserving test receives it, L0 included.
Table~\ref{tab:e2emethods} puts the published methods on that axis over
\eeTurns{} turns of Llama-3.1-8B on GB10 in the growth regime, granting each
its best measured removal --- all but L0 remove more than LAM --- and the
cheapest schedule its rule permits, deferring to the budget wall rather than
deleting eagerly; uncompacted, the run exhausts the KV cache at turn
\eeNoneTurns{}. They separate by \eeMaskOne{}--\eeLinguaOne{} at batch 1, an
ordering that does not track removal --- LLMLingua-2 removes least and pays
most --- because none of the four is prefix-stable, so each pays a re-prefill
that LAM's schedule never incurs.

\subsection{Ablation study}
\label{sec:ablation-study}

\begin{table}[t]
\centering
\caption{Merge-threshold sweep, in which $\delta = \sqrt{2-2t}$ is what the
theorem permits against the $\hat\delta$ the run attained, ``evidence'' is
gold-patch retention and ``resolve'' the resolve-rate ceiling, and the shaded
row is the shipped default.}
\label{tab:sweep}
\begin{tabular}{crrrrrrr}
\toprule
$\cos t$ & $\delta$ & mean $\hat\delta/\delta$ & keep & R@10 & tokens removed & evidence & resolve \\
\midrule
$0.99$ & 0.1414 & \dhRatioNineNine & \rkKeepNineNine & \rkTenNineNine & \rduTokNN & \safOursNineNineMicro & \rscNN \\
$0.98$ & 0.2000 & \dhRatioNineEight & \rkKeepNineEight & \rkTenNineEight & \rduTokNineEight & \safOursNineEightMicro & \rscNineEight \\
$0.96$ & 0.2828 & \dhRatioNineSix & \rkKeepNineSix & \rkTenNineSix & \rduTokNineSix & \safOursNineSixMicro & \rscNineSix \\
\shiprow $0.95$ & 0.3162 & \dhRatioNineFive & 0.837 & \rkTenNineFive & \rduTokNF & \safOursMicro & \rscNF \\
$0.90$ & 0.4472 & 0.706 & \rkKeepNinety & \rkTenNine & \rduTokN & \safOursNinetyMicro & \rscN \\
\bottomrule
\end{tabular}
\end{table}

\paragraph{Algorithm ablation.}
Table~\ref{tab:sweep} sweeps five merge thresholds over \dhTraj{} trajectories
against an uncompacted resolve rate of \rscBase{}: no substitution exceeds the
configured bound, mean realized displacement being \dhRatioNineNine{} of it at
$\cos \ge 0.99$ and \dhRatioNineFive{} at $0.95$, where recall@10, evidence
retention, and the resolve-rate ceiling stand at \rkTenNineFive{},
\safOursMicro{}, and \rscNF{} against \rkTenNine{}, \safOursNinetyMicro{}, and
\rscN{} at $0.90$; we therefore ship $\cos \ge 0.95$. Layer order matters there
because the layers overlap: assuming independent removal overstates their joint
rate by \indepOverstate{} points (Table~\ref{tab:layers}), and running L0 first
costs \ordCostNineFive{} points at $\cos \ge 0.95$ and \ordCostNinety{} at
$0.90$, since line deletion fragments the blocks L1 would match. Thus we run L1
first, leave the base layer last, and configure L2 for the bound rather than
for peak ratio (Appendix~\ref{sec:composite}). Candidate generation keeps the
test cheap at full pair recall, examining \bandShipCand{} of \bandPairs{}
within-trajectory pairs against \bandBestCand{} at $0.99$, a
\bandShipSpeedup{} reduction; since a band match needs agreement in all $r$
coordinates, looser thresholds need smaller bands, and $r = 32$ reaches only
\bandWideRecallNineFive{} recall at $0.95$.
Furthermore the yield grows with the horizon: scoring at each input size $T$
only the trajectories reaching it, removal rises from \growthSmallRem{} at
$T = \growthSmallT{}$ to \growthLargeRem{} at $T = \growthLargeT{}$ against a
whole-trajectory mean of \growthWholeRem{}, so short contexts understate it. The rule is insensitive to arrival order --- online and batch
residuals are byte-identical on \onlineEqBatch{} trajectories --- but not to
the encoder, four of which differ in removal by \encSensSpread{}
(\encSensMini{} to \encSensEfive{}, against \encSensBge{} by default) and
recover \encSensRecallMini{}--\encSensRecallGte{} of its deletion set at
matched removal, so thresholds need per-encoder calibration
(Appendix~\ref{sec:encsens}). Search is not the bottleneck here:
trajectories retain at most \ixSurvMax{} chunks and search is \ixSearchFrac{}
of test time, matching encoding cost only at \ixCrossover{} synthetic
survivors (Appendix~\ref{sec:index-scaling}).

\paragraph{System performance breakdown.}
The cost model is anchored on measured rather than advertised device numbers:
\hwGbBw{} and \hwGbFp{} on GB10 and \hwAmpereBw{} and \hwAmpereFp{} on an
A100-SXM4-40GB, or \hwGbMu{} and \hwAmpereMu{} of specification, so the
advertised A100 bandwidth would overstate it by \hwBwOverstate{}. Bandwidth
differs by \hwBwRatio{} between the devices but encoder throughput only by
\hwEncRatio{}, so encoder cost does not scale with memory bandwidth.
Furthermore the re-prefill term the schedule avoids also holds on a server:
serving \ovModel{} under vLLM on \ovGpus{} with prefix caching, cold prefill of
\ppMaxLen{} tokens takes \ppColdS{} against \ppWarmS{} once \ppCachedFrac{} of
the prompt is cached, a \ppWarmSpeedup{} difference, and the effective rate
peaks at \ppRateHi{} near 4K tokens before falling to \ppRateLo{}, from which
quadratic and linear fits extrapolate to \ppReprefill{} and \ppReprefillLin{}
at 360K, beyond the measured range.

Placing the compactor on that serving node costs the server almost nothing:
with CPU compaction co-located and the load generator on a second node, output
throughput stays at \ovRateTputSixteen{} of baseline, median time to first
token moves by \ovRateTtftDeltaSixteen{}, and the compactor keeps up at
\ovKeepUp{} the GPU's output-token rate. The node has \ovCores{} cores, so past
16 compactor threads both sides lose.

That contention invites isolating the compactor, but non-uniform memory access
(NUMA) placement does not deliver it: confining compaction to the two of
\ovNuma{} NUMA domains furthest from the GPUs holds server throughput at
\ovPinTput{} but drops compactor throughput to \ovPinVsSolo{} of standalone,
costing it more bandwidth than it saves the server. As a result we ship
unrestricted placement (Appendix~\ref{sec:overlap}).

%% file: sections/02-related.tex
\section{Related work}
\label{sec:related}

\paragraph{Model-based compaction.}
Gisting learns soft tokens representing a prefix~\citep{mu2023gisting}, ICAE
encodes context into memory slots~\citep{ge2024icae}, xRAG projects a document
embedding into a token~\citep{cheng2024xrag}, and LLMLingua-2 selects text with a
token-level classifier~\citep{pan2024llmlingua2}. All require model inference, none
bounds substitution error, and reproducibility depends on weights and execution
settings, whereas our rule records explicit deletions and representatives.

\paragraph{Model-free deduplication.}
Min-hash~\citep{broder1997resemblance}, LSH~\citep{indyk1998approximate}, and
SimHash~\citep{charikar2002similarity} underlie the banded hashing LAM uses for
candidate generation. SemDeDup clusters embeddings offline to
remove near-duplicate training examples~\citep{abbas2023semdedup}, whereas LAM
substitutes under a bound as records arrive: across \sdSeeds{} seeds only
\sdStableNineFive{} of SemDeDup's deleted chunks are common to all runs, while
LAM's are identical (Appendix~\ref{sec:sdwithin}). Error-bounded HPC
compression~\citep{huang2023cuszp} motivates such distortion control.

\paragraph{Agent-memory systems.}
Mem0 issues LLM \textsc{add}/\textsc{update}/\textsc{delete}
operations~\citep{chhikara2025mem0}, observation masking retains the last $k$
observations~\citep{lindenbauer2025complexitytrap}, LightMem filters intake and
consolidates offline~\citep{fang2026lightmem}, MemAgent learns a rewriting
policy over fixed-length memory~\citep{yu2026memagent}, and
\citet{omri2026agentmemory} characterize memory growth in stateful workloads
--- all without a substitution bound.

\paragraph{Compaction off the critical path.}
Slipstream validates candidate compactions in the
background~\citep{chen2026slipstream} and Parallel Context Compaction overlaps
compaction with serving~\citep{cim2026parallel}, both by LLM summarization.
Rewriting a prefix, however, forces re-prefill of the affected entries and
competes for the GPU, whereas LAM removes before prefill, so the prefix
survives and the test runs on the CPU.

\paragraph{Serving systems and prefix reuse.}
PagedAttention supports block-level KV reuse~\citep{kwon2023pagedattention},
CachedAttention preserves caches across turns~\citep{gao2024cachedattention},
Preble schedules for prefix sharing~\citep{srivatsa2024preble}, and SMetric
balances agent sessions~\citep{wang2026smetric}; LAM complements them by
preserving prefixes during compaction, and adopts their exact-match hashing as
its first layer.

\paragraph{Performance models and rate--distortion.}
Vidur simulates inference latency~\citep{agrawal2024vidur} and KVServe models KV
compression under service constraints~\citep{liu2026kvserve}; ours adds the
error threshold, compaction cost, and re-prefill cost, while Kolmogorov
$\epsilon$-entropy~\citep{kolmogorov1959epsilon,posner1971epsilon} gives the
covering and packing framework for our removal bounds. Prior work studies
prompt compression in expectation~\citep{nagle2024limits},
decision-preserving~\citep{zou2026decision} and KV-cache
rate--distortion~\citep{colaco2026ratedistortion}; our worst-case bound implies
one for every source on the record set (Corollary~\ref{cor:rd}), complementing
evaluations of task success~\citep{zhang2025empirical,kummer2026wild},
cost~\citep{johnson2026production}, and retrieval~\citep{chai2026kvmem}.

\paragraph{Scope of comparison.}
LAM's scheduling benefit depends on the cost of rebuilding invalidated cache
entries and its removal benefit on observation-stream near-duplicates, so cheap
cache repair, low redundancy, or poor retention would limit it; we evaluate
each separately.

%% file: sections/11-conclusion.tex
\section{Conclusion}
\label{sec:conclusion}

LAM removes redundant agent observations with a query-independent distance test
and bounds the retrieval-score deficit. Because removal precedes prefill the
prefix survives, which the cost model prices at \polGainOne{}--\polGainBs{}
over deletion from an existing context --- a benefit any prefix-preserving rule
receives, LAM's exact-hashing floor included. Above it the near-duplicate layer
costs \costRatioHash{} more per event, yet the two remove more together at
\safOursMicro{} retention, every deletion auditable.

%% file: sections/10-statements.tex
\subsection*{AI use statement}

% AUTHORS: verify this declaration against your actual use before submission.
% One paragraph per box ticked on the submission form, in the form's order.

\paragraph{Aid or polish writing.}
We used generative AI tools to polish the manuscript: tightening prose,
improving readability, and checking notation and cross-references for
consistency. The authors approved the final wording and are responsible for
every claim it makes.

\paragraph{Retrieval and discovery.}
We used the same tools to develop search utilities, to crawl the literature and
public code repositories for related systems, and to summarize the candidates
returned. They located work; they did not judge it. The authors read each cited
paper in the original and checked every claim attributed to it against the
primary source.

\paragraph{Execution.}
The tools assisted with coding, debugging, and code review for the measurement
harness, the baselines, and the plotting code. The authors specified what each
program had to compute and verified its output against the recorded results.
The research question, methodology, experimental design, analyses, and
conclusions are the authors', and so are the substitution bound and its corollaries.
No reported measurement was generated, imputed, or evaluated by an AI tool.

\subsection*{Ethics statement}

This work studies agent execution traces from public research datasets. It
involves no human subjects or personal data. Deduplication can remove
information an agent later needs. Section~\ref{sec:eval} measures evidence loss;
live task success remains unmeasured.

\subsection*{Reproducibility statement}

With fixed embeddings, arrival order, and arithmetic implementation, the
rule produces deterministic deletion indices and representatives.
Choosing the lowest-indexed eligible representative removes dependence on
candidate traversal order. Appendix~\ref{sec:determinism} tests byte-codec
repeatability and deletion-set stability across encoder batch sizes and hardware.
Encoder outputs are not bit-exact across configurations, but no tested decision
crossed the merge threshold. Hardware constants and benchmark sizes are in
Appendix~\ref{app:anchors}; proofs are in Appendix~\ref{sec:theory} and
Appendix~\ref{app:proofs}. Anonymized code and the measurement harness are
provided as supplementary material for review and will be released on acceptance.

%% file: sections/impl.tex
\section{Implementation details}
\label{sec:impl}

Table~\ref{tab:config} lists the default configuration.
Algorithm~\ref{alg:batch} applies the rule to a complete trajectory.
It processes records in the same order as Algorithm~\ref{alg:admit} and produces
identical residuals on \onlineEqBatch{} tested trajectories
(Appendix~\ref{app:eval}).

\begin{table}[h]
\centering
\caption{Default configuration. Parameter sweeps are in
Appendix~\ref{sec:banding} (band width, quantizer step),
Appendix~\ref{app:eval} (threshold) and Table~\ref{tab:anchors} (device).}
\label{tab:config}
\footnotesize
\begin{tabular}{ll}
\toprule
stage & setting \\
\midrule
chunking & fixed 512-token windows over tool observations, no overlap \\
encoder & \texttt{bge-base-en-v1.5}, unit-normalized, $D = 768$ \\
threshold & $\cos t = 0.95$, i.e.\ $\delta = \deltaConfigured{}$ by \eqref{eq:identity} \\
candidates & banded LSH over $Q_\varepsilon$, verified by exact $\ell_2$ \\
byte codec & cuSZp, relative error bound $1\%$ of $\max|x|$ \\
stubs & one line per drop, charged as input (\stubCost{}) \\
devices & \xhwHwA{} and \xhwHwB{} (Table~\ref{tab:anchors}) \\
\bottomrule
\end{tabular}
\end{table}

\begin{algorithm}[h]
\caption{Batch form. Same relation, same first-occurrence witness, applied to a
finished trajectory instead of to arrivals.}
\label{alg:batch}
\begin{algorithmic}[1]
\Require chunks $x_1{\dots}x_N$ in trajectory order; threshold $t$; step $\varepsilon$
\Ensure retained set $R$; map $\pi$; measured bound $\hat\delta$
\State $\delta \gets \sqrt{2-2t}$;\quad $R \gets \emptyset$;\quad $\hat\delta \gets 0$
\State $e_i \gets \textsc{Encode}(x_i)$ for all $i$ \Comment{one pass, batched}
\For{$i = 1$ \textbf{to} $N$}
  \State $\mathcal{C} \gets \{\, y \in R : \textsc{BandHash}(Q_\varepsilon(e_y)) \cap \textsc{BandHash}(Q_\varepsilon(e_i)) \neq \emptyset \,\}$
  \State $y^{*} \gets \min\{\, y \in \mathcal{C} : \|e_i - e_y\|_2 \le \delta \,\}$
  \If{$y^{*}$ exists} $\pi(x_i) \gets y^{*}$;\ \ $\hat\delta \gets \max(\hat\delta, \|e_i - e_{y^{*}}\|_2)$
  \Else\ \ $R \gets R \cup \{x_i\}$;\ \ $\pi(x_i) \gets x_i$
  \EndIf
\EndFor
\end{algorithmic}
\end{algorithm}

Both algorithms choose the lowest-indexed eligible representative and process
records in trajectory order. They therefore agree for fixed embeddings and
candidate settings when no record is evicted. Eviction is handled separately
because it can remove a representative and invalidate cached context.

\paragraph{Reproducibility and transfer.} The deletion set is byte-identical
across \detNBatches{} batch sizes and across two machines sharing neither
PyTorch version, CUDA version nor GPU architecture (digest \xhwDigestNineFive{}
at $\cos \ge 0.95$ on both \xhwHwA{} and \xhwHwB{}), where SemDeDup agrees with
itself on only \sdStableNineFive{} of its drop set across \sdSeeds{} seeds. On
LHTB the rule removes \lhtbNearNineFiveObs{} of observation tokens against
\nearGreedyNineFive{} here (Figure~\ref{fig:ecr-lhtb}), showing that near-duplicate removal also applies to a second scaffold.

%% file: sections/03-redundancy.tex
\section{Redundancy in agent trajectories}
\label{sec:redundancy}

We measure exact and near-duplicate observations in the SWE-smith
\texttt{tool} split: \corpusRecords{} records, \corpusTraj{} trajectories, and
\corpusTokens{} tokens. All matches are within a trajectory. We tokenize tool
observations with \texttt{cl100k\_base}.

\subsection{L0 granularity: byte-exact duplication}
\label{app:layers}

\begin{table}[t]
\centering
\caption{Byte-exact duplicate tokens by granularity, within-trajectory. The
line-level figure, not the record- or chunk-level one, is the floor any
near-duplicate contribution must be measured above.}
\label{tab:granularity}
\small
\begin{tabular}{llr}
\toprule
granularity & unit & removable \\
\midrule
record     & one whole tool observation        & \exRecord \\
paragraph  & blank-line separated block        & \exPara \\
chunk-512  & fixed 512-token window            & \exChunk \\
line       & a single line                     & \textbf{\exLine} \\
\bottomrule
\end{tabular}
\end{table}

Removal varies by an order of magnitude across granularities
(Table~\ref{tab:granularity}). Paragraph and whole-record hashing give nearly
the same rate because blank lines rarely divide these observations into useful
units. Line hashing removes \exLine{}, compared with \exChunk{} for fixed
chunks. When an agent rereads an edited file, a few changed lines can alter a
whole-record or chunk hash while leaving most line hashes unchanged. We
therefore use line hashing for L0.

\subsection{Near-duplicate headroom}

We embed 512-token chunks with \texttt{bge-base-en-v1.5} and apply the greedy
rule in Section~\ref{sec:rule}. On \subTraj{} trajectories with \subObs{}
records, \subChunks{} chunks, and \subTokens{} tokens, the near-duplicate layer
removes \nearGreedyNineFive{} at $\cos\ge0.95$. Since L0 alone
removes \exLine{} on the full corpus, we evaluate both layers and their
composition on a shared subset in Section~\ref{sec:eval}.

\paragraph{Conversational memory.} On LoCoMo, a conversational memory benchmark, the
same procedure finds \locomoPairs{} near-duplicate pairs out of \locomoTotal{}
($\approx 10^{-5}$). Redundancy at this level is a property of \emph{agent
trajectories}, not of memory benchmarks in general.

\paragraph{Accounting.} Removal is measured as
$\mathrm{tok}(\text{original}) - \mathrm{tok}(\text{residual})$ over whole texts,
never as a sum of per-piece token counts: the tokenizer merges across newlines,
so summing the parts overstates the whole by roughly 5\%.

\subsection{Composition of the two layers and the effect of their order}

\begin{figure}[t]
\centering
\includegraphics[width=\linewidth]{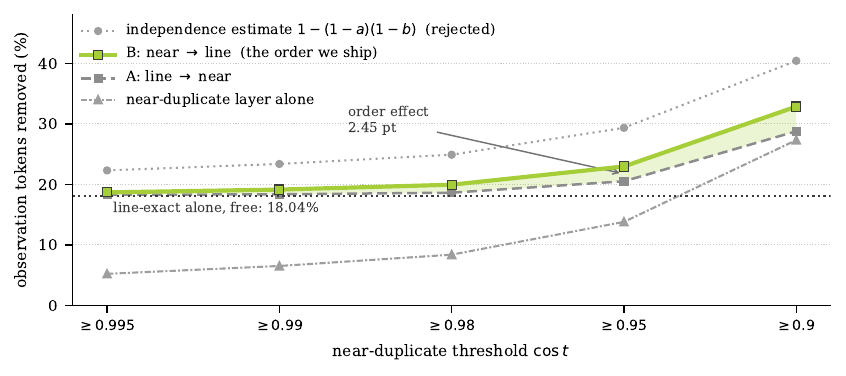}
\caption{Layer composition over \subTraj{} trajectories, weighted by tokens.
Running L0 before L1 reduces joint removal by \ordCostNineFive{}
percentage points at $\cos\ge0.95$. Assuming independence overstates removal
by \indepOverstate{} points. Net joint removal is \jointNet{}, or
\jointGainNet{} points above L0 alone.}

\label{fig:joint}
\end{figure}

The two layers can remove overlapping content, so their combined removal must
be measured. At $\cos \ge 0.95$, their composition removes
\jointGainNet{} percentage points more than L0 alone after accounting for
stubs (Figure~\ref{fig:joint}). Individually, L1 removes
\nearGreedyNineFive{} and L0 removes \lineAlone{}.
Table~\ref{tab:ledger} reports the additional processing cost.

\textbf{Layer order.} L0 followed by L1 removes
\ordANineFive{} at $\cos \ge 0.95$; the reverse order removes \ordBNineFive{}.
The difference is \ordCostNineFive{} percentage points and increases to
\ordCostNinety{} at $\cos \ge 0.90$. Removing repeated lines first fragments
the chunks L1 matches on. LAM therefore runs L1 before L0. The independence estimate $1-(1-a)(1-b)$
overstates measured joint removal by \indepOverstate{} percentage points.

\subsection{Encoder sensitivity}
\label{sec:encsens}

The main evaluation uses one embedding model. To test sensitivity, we repeat
the near-duplicate experiment with four encoders on \encSensChunks{} chunks
from \encSensTraj{} trajectories. This ablation applies L0 first
in all arms to isolate incremental near-duplicate removal. The encoders differ
in training family (bge, e5, gte) and size: MiniLM-L6 has one-third the width
and half the depth of the others. The substitution bound holds in each space;
removal and retained evidence depend on the encoder.

At $t = 0.95$, removal ranges from \encSensMini{} to \encSensEfive{}, a spread
of \encSensSpread{}, compared with \encSensBge{} for bge. Merge thresholds
are not directly comparable across embedding spaces. At this threshold, e5 and
gte recover \encSensRecallFixed{} of bge's deletion set and remove additional
chunks. Matching bge's removal rate instead requires thresholds
$t^\star = \encSensTstarEfive{}$ for e5 and gte and \encSensTstarMini{} for
MiniLM. The resulting Jaccard similarities are \encSensJacGte{},
\encSensJacEfive{}, and \encSensJacMini{}, with deletion-set recall
\encSensRecallGte{}, \encSensRecallEfive{}, and \encSensRecallMini{},
respectively. The overlap indicates similar redundant content across encoders,
but each encoder requires its own threshold calibration.

%% file: sections/05-theory.tex
\section{A query-independent substitution bound}
\label{sec:theory}

\subsection{Ranking margins}

Top-$k$ invariance requires the gap between ranks $k$ and $k+1$
to exceed the relevant score perturbation bound. On \margQ{} LoCoMo queries
against \margRec{} records, the median margin at $k=3$ is \margMedThree{},
well below the substitution bound at useful merge thresholds
(Appendix~\ref{app:margin}). We therefore bound the score deficit of a
substitution without assuming a ranking margin. Retrieval quality is evaluated
separately in Appendix~\ref{sec:realized}.

\subsection{The bound}

Definition~\ref{def:sub} and Theorem~\ref{thm:main} are stated in
Section~\ref{sec:method}; here we prove them and develop what else they imply.

\begin{proof}[Proof of Theorem~\ref{thm:main}]
The left inequality is immediate from $R \subseteq S$. For the right, let $x^*$
attain the maximum over $S$. Since $\pi(x^*) \in R$,
\[
\max_{z\in R} q\cdot z \;\ge\; q\cdot \pi(x^*)
\;=\; q\cdot x^* - q\cdot(x^* - \pi(x^*))
\;\ge\; q\cdot x^* - \|x^* - \pi(x^*)\|_2
\;\ge\; \max_{x\in S} q\cdot x - \delta,
\]
using Cauchy--Schwarz and $\|q\|_2 = 1$. For tightness, take
$S = \{a,b\}$ with $\|a-b\|_2 = \delta$, $R = \{b\}$, $\pi(a) = b$, and
$q = (a-b)/\delta$, a unit vector for which the deficit is exactly $\delta$;
hence no smaller constant is valid. Tightness of $0$ follows from any $\pi$ that
retains the maximiser.
\end{proof}

Theorem~\ref{thm:main} requires no assumption about score distributions or
ranking margins.

\begin{corollary}[top-$k$ $\delta$-cover]
For every unit $q$ and every $k$, each member of the true top-$k$ of $S$ is
either retained in $R$ or has its representative in $R$ within $\ell_2$ distance
$\delta$; its score is reproduced to within $\delta$, uniformly over $q$.
\end{corollary}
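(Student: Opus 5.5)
The corollary follows from Definition~\ref{def:sub} pointwise, and the only real step is a per-element version of the Cauchy--Schwarz argument in the proof of Theorem~\ref{thm:main}. The query $q$ plays no role in which representative is chosen, so the cover part holds for every $q$ at once, and the score part then follows with a uniform constant.

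Fix a unit query $q$ and any $k$, and let $T_k(q) \subseteq S$ be the true top-$k$ of $S$ under $x \mapsto q\cdot x$, with ties broken arbitrarily. The statement does not depend on how ties are broken. Take any $x \in T_k(q)$ and split on whether $x \in R$.

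\textbf{Case $x \in R$.} Since $\pi|_R = \mathrm{id}$, we have $\pi(x) = x$. The member is retained and its score is reproduced exactly.

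\textbf{Case $x \notin R$.} Here $\pi(x) \in R$ by the definition of the map, and $\|x - \pi(x)\|_2 \le \delta$ by Definition~\ref{def:sub}. So the representative lies in $R$ within distance $\delta$. This is the cover claim.

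In both cases the score bound reduces to one line. As in the proof of Theorem~\ref{thm:main},
\[
|q\cdot x - q\cdot \pi(x)| = |q\cdot (x-\pi(x))| \le \|q\|_2\,\|x-\pi(x)\|_2 \le \delta .
\]
The constant $\delta$ involves neither $q$, $k$, nor $x$, so the reproduction is uniform over unit queries. I will also remark that this is a per-element statement about scores, not a claim that $\pi(x)$ itself ranks in the top-$k$ of $R$. The latter can fail, which is consistent with the paper's scope paragraph on rank invariance.

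There is no substantive obstacle; the proof is a one-step consequence of the definition. The only point needing care is wording: "its representative" must be read as $\pi(x)$, which was fixed independently of $q$. That is what makes the cover hold simultaneously for every query rather than just for each query separately. Tightness of $\delta$ carries over from the two-point example $S = \{a,b\}$, $R = \{b\}$, $q = (a-b)/\delta$ used in the proof of Theorem~\ref{thm:main}, taking $k=1$.
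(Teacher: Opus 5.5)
Your proof is correct and matches the paper's argument. The paper gives no separate proof and treats the corollary as read off Definition~\ref{def:sub} pointwise, using the per-element Cauchy--Schwarz step from the proof of Theorem~\ref{thm:main} to get $|q\cdot x - q\cdot \pi(x)| \le \delta$ uniformly in $q$; your remarks that this holds for every $x \in S$, so tie-breaking is irrelevant, and that it says nothing about the rank of $\pi(x)$ in $R$ are both accurate.
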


\begin{proposition}[rank-wise deficit]
Let $s_j$ and $r_j$ be the $j$-th largest scores in $S$ and in $R$. If $\pi$ is
injective on the true top-$k$, then $0 \le s_j - r_j \le \delta$ for all
$j \le k$.
\end{proposition}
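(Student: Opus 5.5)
The plan is to prove both inequalities with one counting principle: for a finite multiset of scores, the $j$-th largest value is at least $c$ if and only if at least $j$ distinct elements have score at least $c$. Throughout, fix a unit query $q$. Write $x_1,\dots,x_k$ for a true top-$k$ of $S$ under $q$, with ties broken arbitrarily, so that $q\cdot x_i = s_i$. Scores are treated as a multiset indexed by elements, so ties cause no ambiguity in $s_j$ or $r_j$.

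\textbf{Lower bound} $s_j \ge r_j$. Since $\pi$ is injective on $\{x_1,\dots,x_k\}$, $R$ contains at least $k$ elements, so $r_j$ is defined for every $j\le k$. The $j$ largest-scoring elements of $R$ are $j$ distinct elements of $S$, because $R\subseteq S$. Each of them has score at least $r_j$. By the counting principle applied in $S$, this gives $s_j \ge r_j$, just as the left inequality of Theorem~\ref{thm:main} follows from $R\subseteq S$.

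\textbf{Upper bound} $s_j - r_j \le \delta$. Take the images $\pi(x_1),\dots,\pi(x_j)$. They lie in $R$, and by the injectivity hypothesis they are $j$ distinct elements. For each $i\le j$, the Cauchy--Schwarz step from the proof of Theorem~\ref{thm:main} gives
\[
q\cdot\pi(x_i) \;=\; q\cdot x_i - q\cdot\bigl(x_i-\pi(x_i)\bigr) \;\ge\; s_i - \|x_i-\pi(x_i)\|_2 \;\ge\; s_j - \delta ,
\]
using $s_i\ge s_j$ for $i\le j$. So $R$ contains at least $j$ distinct elements with score at least $s_j-\delta$, and the counting principle yields $r_j \ge s_j-\delta$.

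The only real obstacle is the role of injectivity, and this is where care is needed. Without it, several top-$k$ elements could share one representative, so only one slot of $R$ would be guaranteed near the top. In that case $r_j$ for $j\ge 2$ could fall far below $s_j-\delta$, since the remaining elements of $R$ may be arbitrary. The proof should therefore state explicitly that distinctness of the images is what supplies $j$ \emph{separate} witnesses in $R$. I would also remark that injectivity on the first $j$ of the top-$k$ suffices for index $j$. This is the only place the hypothesis enters; everything else reuses the single-element estimate from Theorem~\ref{thm:main}.
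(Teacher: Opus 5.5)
Your proof is correct and follows the paper's argument. Injectivity makes $\pi(x_1),\dots,\pi(x_j)$ $j$ distinct elements of $R$, each scoring at least $s_j-\delta$ by the Cauchy--Schwarz step of Theorem~\ref{thm:main}, so $r_j \ge s_j-\delta$; and $r_j \le s_j$ follows from $R\subseteq S$. Your explicit counting principle and tie-breaking just spell out what the paper's two-line proof leaves implicit.
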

\begin{proof}
$\pi$ maps the top-$j$ set to $j$ distinct elements of $R$, each scoring at
least $s_j - \delta$, so $r_j \ge s_j - \delta$; and $r_j \le s_j$ since
$R \subseteq S$.
\end{proof}

If $\pi$ is not injective on the top-$k$, two collapsed records $x \ne y$ with
$\pi(x) = \pi(y)$ satisfy $\|x-y\|_2 \le 2\delta$: they are near-duplicates of
each other, which is the deletion the method is designed to perform.

\paragraph{The bound read as a rate.} Let $X \sim P$ be a random chunk supported
on $S$. A $\delta$-substitution $(R,\pi)$ is a \emph{deterministic test channel}
$X \mapsto \pi(X)$ whose distortion $\|X - \pi(X)\|_2 \le \delta$ holds almost
surely, hence also in expectation, under every $P$ at once.

\begin{corollary}[distribution-free rate--distortion bound]
\label{cor:rd}
Let $(R,\pi)$ be a $\delta$-substitution of $S$, and let $R_P(D)$ be the
rate--distortion function of a source $X \sim P$ supported on $S$ under
$d(x,\hat x) = \|x - \hat x\|_2$. Then for \emph{every} such $P$,
\begin{equation}
\label{eq:rd}
R_P(\delta) \;\le\; I\!\left(X; \pi(X)\right) \;=\; H\!\left(\pi(X)\right)
  \;\le\; \log_2 |R| \quad \text{bits},
\end{equation}
and the same chain holds under the almost-sure distortion constraint.
\end{corollary}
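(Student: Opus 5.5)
The plan is to prove the chain in \eqref{eq:rd} link by link, treating $\pi$ as a deterministic test channel exactly as in the paragraph preceding the corollary. Recall the definition $R_P(D) = \min\{ I(X;\hat X) : P_{\hat X\mid X},\ \mathbb{E}\,d(X,\hat X) \le D\}$. The first step is admissibility. Set $\hat X = \pi(X)$. By Definition~\ref{def:sub}, $\|x-\pi(x)\|_2 \le \delta$ for every $x \in S$. Because $P$ is supported on $S$, this gives $d(X,\hat X) \le \delta$ almost surely, and hence $\mathbb{E}\,d(X,\hat X) \le \delta$. So the channel $X \mapsto \pi(X)$ is feasible for the minimisation defining $R_P(\delta)$, and $R_P(\delta) \le I(X;\pi(X))$. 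Nothing here depends on $P$, which is what makes the bound hold for every source on $S$ simultaneously.

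The second step is the equality. Since $\pi(X)$ is a deterministic function of $X$, $H(\pi(X)\mid X) = 0$, and therefore $I(X;\pi(X)) = H(\pi(X)) - H(\pi(X)\mid X) = H(\pi(X))$. For the third step, $\pi(X)$ takes values in $R$, and entropy is at most the logarithm of the support size, so $H(\pi(X)) \le \log_2|R|$.

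For the almost-sure variant, I would define $R^{\mathrm{as}}_P(D)$ as the same minimum taken over channels with $d(X,\hat X) \le D$ almost surely. The first step already established the almost-sure constraint, so the same channel is feasible and the identical chain follows. Since the almost-sure constraint is stronger, $R_P(\delta) \le R^{\mathrm{as}}_P(\delta)$, which is consistent with the chain.

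The only step needing care is the first. Two points must be checked. The reproduction alphabet must be allowed to contain $R$; it does, since $R \subseteq S \subseteq \mathbb{R}^D$. And the definition of $R_P$ must use an infimum over all conditional laws, so that a degenerate, deterministic channel qualifies. If $S$ is infinite, I would note that $|R|$ may be infinite, in which case the bound is vacuous, and otherwise entropies are over finite alphabets. The remaining steps are the standard facts that mutual information equals entropy for a deterministic map and that entropy is at most the logarithm of the support size.
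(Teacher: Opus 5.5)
Your proposal is correct and follows the paper's proof essentially step for step. The paper also argues that $\|x-\pi(x)\|_2\le\delta$ for every $x\in S$ makes the deterministic coupling $(X,\pi(X))$ feasible both almost surely and in expectation, uses $H(\pi(X)\mid X)=0$ to get $I(X;\pi(X))=H(\pi(X))\le\log_2|R|$, and notes that the same coupling meets the pointwise constraint for the almost-sure variant; your remark on infinite $S$ is a harmless addition, since the paper works with a finite record set throughout.
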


Because $\pi$ is deterministic, $H(\pi(X)\mid X)=0$, and its output entropy
is at most $\log_2|R|$ (Appendix~\ref{app:proofs}). The pointwise substitution
bound implies \eqref{eq:rd} for every source on $S$. An expected-distortion
bound alone does not imply a guarantee for each query on a finite record set.
The same $\delta$ here bounds both embedding distortion and retrieval-score
deficit. Prior work applies Shannon rate--distortion formulations to prompt
compression \citep{nagle2024limits} and agent memory \citep{zou2026decision}.
We also use metric entropy \citep{kolmogorov1959epsilon,posner1971epsilon} to
bound the size of a retained set.

\begin{corollary}[measured bracket on the rate and on the $\delta$-entropy]
\label{cor:bracket}
Let $H_\delta(S) = \log_2 N(S,\delta)$ be the Kolmogorov $\delta$-entropy of $S$
(\S\ref{sec:entropy}) and let $R_P^{\mathrm{as}}(\delta)$ be the rate at
distortion $\delta$ required \emph{almost surely}. Two runs of greedy
first-occurrence, at radius $\delta$ and at radius $2\delta$, bracket both:
\begin{equation}
\label{eq:bracket}
\log_2 \left|R_{\mathrm{greedy}}(2\delta)\right| \;\le\;
\sup_P R_P^{\mathrm{as}}(\delta) \;\le\; H_\delta(S) \;\le\;
\log_2 \left|R_{\mathrm{greedy}}(\delta)\right|.
\end{equation}
\end{corollary}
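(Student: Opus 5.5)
The plan is to prove the three inequalities in \eqref{eq:bracket} separately, from right to left, using only the definitions plus one packing argument. Throughout, I take $R_P^{\mathrm{as}}(\delta)$ to be the infimum of $I(X;\hat X)$ over all test channels $P_{\hat X\mid X}$ with $\|X-\hat X\|_2 \le \delta$ almost surely. I take $N(S,\delta)$ to be the minimal number of centers (reproduction points) whose closed $\delta$-balls cover $S$. I also assume, as the paper does for the removal ceiling, that candidate generation runs at full pair recall, so that greedy first-occurrence checks every retained record.

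\textbf{Right inequality.} Greedy at radius $\delta$ drops a record only when a retained record lies within $\delta$ of it. It retains every other record, which is then trivially within distance $0$ of itself. Hence $R_{\mathrm{greedy}}(\delta)$ is a $\delta$-cover of $S$, so $N(S,\delta) \le |R_{\mathrm{greedy}}(\delta)|$. Taking $\log_2$ gives $H_\delta(S) \le \log_2|R_{\mathrm{greedy}}(\delta)|$.

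\textbf{Middle inequality.} Fix a minimal $\delta$-cover $\{c_1,\dots,c_N\}$ and let $f(x)$ be the lowest-indexed center within $\delta$ of $x$. The deterministic channel $X \mapsto f(X)$ satisfies the distortion constraint almost surely for every $P$ on $S$. As in Corollary~\ref{cor:rd}, its mutual information is $I(X;f(X)) = H(f(X)) \le \log_2 N$. So $R_P^{\mathrm{as}}(\delta) \le H_\delta(S)$ for each $P$, and taking the supremum preserves the inequality.

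\textbf{Left inequality, the main step.} At radius $2\delta$ with full recall, greedy admits a record only if every previously retained record is at distance strictly greater than $2\delta$. Therefore $M = R_{\mathrm{greedy}}(2\delta)$ is a $2\delta$-separated subset of $S$.
\begin{itemize}
\item Choose $P$ uniform on $M$, so that $H(X) = \log_2|M|$.
\item Consider any channel with $\|X-\hat X\|_2 \le \delta$ almost surely. A reproduction point $\hat x$ can lie within $\delta$ of at most one point of $M$: if $m \ne m'$ were both within $\delta$ of $\hat x$, the triangle inequality would give $\|m-m'\|_2 \le 2\delta$, contradicting separation.
\item Hence $X = g(\hat X)$ almost surely for a deterministic (measurable) $g$, so $H(X\mid\hat X) = 0$.
\item It follows that $I(X;\hat X) = H(X) = \log_2|M|$ for every admissible channel.
\end{itemize}
Therefore $\sup_P R_P^{\mathrm{as}}(\delta) \ge R_{P}^{\mathrm{as}}(\delta) = \log_2|M|$.

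The delicate points all sit in this left step.
\begin{itemize}
\item \emph{Strict separation.} The argument needs separation strictly greater than $2\delta$. This matches the greedy test's use of $\le$ for merging.
\item \emph{Recall.} Full recall is essential. With missed candidates, two retained points could be closer than $2\delta$, and the packing property fails, as the paper itself notes.
\item \emph{Continuous reproduction alphabet.} If $\hat X$ is continuous, I will argue $I(X;\hat X) \ge I(X;g(\hat X)) = H(X)$ by data processing, rather than relying on discrete conditional entropy.
\end{itemize}
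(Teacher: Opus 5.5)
Your proposal is correct and follows the paper's proof essentially step for step. Greedy at radius $\delta$ is an internal $\delta$-cover (right inequality), and a deterministic map to a minimal cover is an admissible channel with $I(X;\hat X)=H(\hat X)\le \log_2 N(S,\delta)$ (middle). The uniform source on the strictly $2\delta$-separated greedy output forces any admissible $\hat X$ to determine $X$ (left).

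Your data-processing remark for a continuous reproduction alphabet and your explicit full-recall assumption, which the paper builds into its definition of greedy, are sensible additions. Note that the paper's $N(S,\delta)$ is the \emph{internal} covering number; your arguments cover that case unchanged, since $R_{\mathrm{greedy}}(\delta)\subseteq S$.
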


The two greedy passes bound both the worst-case rate and metric entropy; they
do not establish equality between them. At $\delta = 0.3162$, the measured
interval is $[\rdLoNF{},\rdHiNF{}]$ bits, compared with \rdLossless{} for the
lossless index. The achieved entropy $H(\pi(X))$ is \rdHNF{}, below
$\log_2|R|$ because cluster masses are unequal. At $\delta = 0.1414$, the
interval has width \rdWidthNN{} bits. The interval widens with $\delta$ as the
second pass merges more records.

\paragraph{From bits to tokens.}
At the default threshold, the index bound decreases from \rdLossless{} to
\rdHiNF{} bits, a reduction of \rdBitSaveNF{} bits (\rdBitSavePctNF{}), while
the same run removes \rdTokSaveNF{} of tokens. These quantities differ because
index rate is logarithmic in retained-set size, whereas context cost scales
with token count. The corresponding token objective is

\begin{equation}
\label{eq:tokdist}
T(\delta) \;\triangleq\; \min_{(R,\pi)\ \delta\text{-sub.\ of } S}
  \ \sum_{z \in R} w(z), \qquad w(z) = \text{tokens of } z,
\end{equation}
a minimum-weight dominating-set problem on the $\delta$-graph of $S$, which
is NP-hard. Coverings of equal size can have different token weights.
Theorem~\ref{thm:ceiling} therefore bounds chunk counts in
Table~\ref{tab:ceiling}; its token columns report measured removal only.

\subsection{A metric-entropy ceiling on removal}
\label{sec:entropy}

Theorem~\ref{thm:main} bounds substitution error. Covering and packing numbers
bound how many records a substitution can remove.

\paragraph{Smallest admissible memory.} Let $N(S,\delta)$ be
the \emph{internal covering number}, the least $|C|$ over $C \subseteq S$ such
that every $x \in S$ has some $c \in C$ with $\|x-c\|_2 \le \delta$, and
$M(S,\delta)$ the \emph{packing number}, the greatest $|P|$ over $P \subseteq S$
with $\|p-p'\|_2 > \delta$ for distinct $p,p'$. Unwinding
Definition~\ref{def:sub}, $(R,\pi)$ is a $\delta$-substitution of $S$
for a suitable map $\pi$ exactly when $R$ is an internal $\delta$-covering.
Thus the maximum admissible removal depends on the geometry of $S$:
\begin{equation}
\label{eq:ceiling}
\rho^{*}(\delta) \;\;\triangleq\;\; \max_{(R,\pi)\ \delta\text{-sub.\ of } S}
  1 - \frac{|R|}{|S|} \;\;=\;\; 1 - \frac{N(S,\delta)}{|S|}.
\end{equation}
No compactor allowing error $\delta$ under Theorem~\ref{thm:main} can remove
more than $\rho^{*}(\delta)$, however it is implemented.

\begin{proposition}[greedy is simultaneously a cover and a packing]
\label{prop:pack}
Let $R$ be produced by greedy first-occurrence at radius $\delta$
(\S\ref{sec:rule}): chunks arrive in order, and a chunk is appended to $R$ iff
it is at distance $>\delta$ from every element already in $R$. Then
$N(S,\delta) \le |R| \le M(S,\delta) \le N(S,\delta/2)$.
\end{proposition}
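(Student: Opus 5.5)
The chain has three links, and I will prove each separately: $N(S,\delta) \le |R|$, $|R| \le M(S,\delta)$, and $M(S,\delta) \le N(S,\delta/2)$. Throughout, $S$ is finite, so all quantities are well defined.

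\textbf{First link: $R$ is an internal covering.} The greedy output $R$ is a subset of $S$. Take any $x \in S$. Either $x$ was appended to $R$, in which case $x$ covers itself. Or $x$ was rejected, which by the greedy rule means some element already in $R$ lay within $\ell_2$ distance $\delta$ of $x$. Elements are never removed from $R$, so that witness is still in the final $R$. Hence every $x \in S$ is within $\delta$ of some element of $R$. By the definition of $N(S,\delta)$ as the least size of an internal $\delta$-covering, $N(S,\delta) \le |R|$. Equivalently, using the remark after \eqref{eq:ceiling}, $R$ together with the first-occurrence map $\pi$ is a $\delta$-substitution.

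\textbf{Second link: $R$ is a packing.} Take two distinct elements $p,p' \in R$, and suppose $p'$ arrived after $p$. When $p'$ was appended, $p$ was already in $R$, and the rule appends only chunks at distance $>\delta$ from every current element. So $\|p-p'\|_2 > \delta$. Thus $R$ is a $\delta$-packing inside $S$, and $|R| \le M(S,\delta)$.

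\textbf{Third link: packing versus covering, the only step needing an argument.} Let $P$ be a maximum $\delta$-packing and $C$ a minimum internal $\delta/2$-covering. Send each $p \in P$ to some $c(p) \in C$ with $\|p - c(p)\|_2 \le \delta/2$. If $c(p) = c(p')$ for distinct $p,p'$, the triangle inequality gives $\|p-p'\|_2 \le \delta$, contradicting $\|p-p'\|_2 > \delta$. So the map is injective, and $M(S,\delta) \le |C| = N(S,\delta/2)$.

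The main thing to check here is the pairing of strict and non-strict inequalities. The covering uses $\le \delta/2$ and the packing uses $>\delta$, and this combination is exactly what makes the contradiction go through; no boundary cases remain. The restriction to internal coverings is harmless, since the argument never uses that $C \subseteq S$ beyond $C$ being a covering.
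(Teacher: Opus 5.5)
Your proposal is correct and matches the paper's proof step for step. The cover property follows because a chunk is dropped only when an already-retained element lies within $\delta$, and retained elements are never removed. The packing property follows from the append condition. Finally, $M(S,\delta)\le N(S,\delta/2)$ holds because mapping a $\delta$-packing into a minimal internal $\delta/2$-covering is injective by the triangle inequality.
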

\begin{proof}
\emph{Cover.} A chunk is dropped only when some element of $R$ at that moment is
within $\delta$ of it, and elements are never removed; so $R$ is an internal
$\delta$-covering and $|R| \ge N(S,\delta)$. \emph{Packing.} A chunk enters $R$
only if it is at distance $>\delta$ from every earlier element, so all pairs in
$R$ are separated by more than $\delta$ and $|R| \le M(S,\delta)$.
\emph{Right-hand side.} Let $C$ be a minimal internal $\delta/2$-covering and
send each $p$ in a $\delta$-packing to a $c \in C$ within $\delta/2$. Two
distinct points sharing a $c$ would be at distance $\le \delta$, contradicting
separation, so the map is injective and $M(S,\delta) \le |C| = N(S,\delta/2)$.
\end{proof}

Both properties hold online without knowing $S$ in advance. We verify the
packing property directly: at $t=0.99$ the
largest cosine between any two surviving chunks across the corpus is
\ceilMaxCos{}, and at every other threshold it likewise falls strictly below $t$.

\begin{theorem}[removal ceiling]
\label{thm:ceiling}
Let $P$ be any $2\delta$-packing of $S$. Then
$\rho^{*}(\delta) \le 1 - |P|/|S|$. In particular, running greedy
first-occurrence at radius $2\delta$ yields such a $P$, so
\begin{equation}
\rho^{*}(\delta) \;\le\; \rho_{\mathrm{greedy}}(2\delta),
\end{equation}
a ceiling that is \emph{computable in one extra pass}. Under the cosine
parameterization $\delta = \sqrt{2-2t}$, radius $2\delta$ is the threshold
$t' = 4t-3$.
\end{theorem}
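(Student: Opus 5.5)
The plan is to reduce the ceiling to a packing-versus-covering comparison at radii $\delta$ and $2\delta$, which is the same injectivity argument used in the right-hand side of Proposition~\ref{prop:pack}, one scale up. By \eqref{eq:ceiling}, $\rho^*(\delta) = 1 - N(S,\delta)/|S|$. The first claim is therefore equivalent to $N(S,\delta) \ge |P|$ for every $2\delta$-packing $P$ of $S$. We read ``$2\delta$-packing'' as in the definition of $M$: pairwise distances strictly greater than $2\delta$.

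To prove $N(S,\delta) \ge |P|$, take any internal $\delta$-covering $C \subseteq S$. Map each $p \in P$ to some $c(p) \in C$ with $\|p - c(p)\|_2 \le \delta$. If two distinct $p,p' \in P$ shared the same $c$, the triangle inequality would give $\|p-p'\|_2 \le 2\delta$, contradicting the packing separation. So the map is injective and $|C| \ge |P|$. Minimizing over $C$ gives $N(S,\delta) \ge |P|$, hence $\rho^*(\delta) \le 1 - |P|/|S|$.

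For the greedy specialization, I will invoke the packing half of Proposition~\ref{prop:pack} at radius $2\delta$. A chunk enters $R_{\mathrm{greedy}}(2\delta)$ only if it is at distance $>2\delta$ from every earlier element. Elements are never removed, so the output is a $2\delta$-packing $P$ of $S$. Substituting it gives
\[
\rho^*(\delta) \le 1 - \frac{|R_{\mathrm{greedy}}(2\delta)|}{|S|} = \rho_{\mathrm{greedy}}(2\delta),
\]
where $\rho_{\mathrm{greedy}}(r) \triangleq 1 - |R_{\mathrm{greedy}}(r)|/|S|$ is the removal fraction of greedy at radius $r$. This costs one extra pass of the same algorithm.

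For the cosine parameterization I will use \eqref{eq:identity} at radius $2\delta$. The threshold $t'$ solves $\sqrt{2-2t'} = 2\sqrt{2-2t}$. Squaring gives $2-2t' = 8-8t$, so $t' = 4t-3$.

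No step is hard. The only point needing care is consistency of the strict versus non-strict inequalities: the packing uses $>2\delta$ while the covering uses $\le\delta$, so the collision yields $\le 2\delta$, which does contradict the separation. A secondary check is that the greedy radius $2\delta$ corresponds to a meaningful cosine threshold. This requires $2\delta \le 2$ (i.e.\ $t' \ge -1$, i.e.\ $t \ge 1/2$). For smaller $t$ the bound still holds, because greedy at a radius exceeding the unit sphere's diameter keeps a single chunk, but $t'$ no longer corresponds to an attainable cosine threshold.
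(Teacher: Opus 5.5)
Your proposal is correct and matches the paper's proof essentially step for step. Both arguments map the $2\delta$-packing injectively into an internal $\delta$-covering via the triangle inequality and combine this with \eqref{eq:ceiling}, then apply the packing half of Proposition~\ref{prop:pack} at radius $2\delta$ and solve $\sqrt{2-2t'} = 2\sqrt{2-2t}$ to get $t' = 4t-3$. Your two extra checks are sound refinements that the paper leaves implicit: the strict-versus-non-strict inequality consistency, and the regime $t < 1/2$, where $2\delta > 2$ and the ceiling degenerates to the trivial $1 - 1/|S|$.
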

\begin{proof}
Let $C$ be a minimal internal $\delta$-covering, $|C| = N(S,\delta)$, and map
each $p \in P$ to some $c \in C$ with $\|p-c\|_2 \le \delta$. If distinct
$p,p'$ shared a $c$ then $\|p-p'\|_2 \le 2\delta$, contradicting the
$2\delta$-packing condition; the map is injective, so
$|P| \le N(S,\delta)$ and \eqref{eq:ceiling} gives the claim. Greedy at radius
$2\delta$ returns a $2\delta$-packing by Proposition~\ref{prop:pack}. For the
parameterization, $\sqrt{2-2t'} = 2\sqrt{2-2t}$ gives $t' = 4t-3$.
\end{proof}

Bounding $M$ by volume is too weak: a $\delta$-packing of the unit sphere in
$\mathbb{R}^{D}$ has $\log_2 M \le D\log_2(1+2/\delta)$, which at
$D = \embedDim$ and $\delta = 0.316$ permits $\approx 2^{2200}$ survivors. The
binding constraint is the intrinsic geometry of what agents observe, and
Theorem~\ref{thm:ceiling} makes measuring it a one-pass computation.

\begin{table}[t]
\centering
\caption{The removal ceiling, measured. For each operating point $t$ we run greedy
first-occurrence a second time at $t' = 4t-3$ (radius $2\delta$); by
Theorem~\ref{thm:ceiling} its removal upper-bounds what \emph{any}
$\delta$-bounded compactor could achieve here. Cardinality over \ceilChunks{}
chunks is the bounded quantity. Token columns report observed removal in
the two passes and are not upper bounds on token removal.}
\label{tab:ceiling}
\small
\begin{tabular}{llccccc}
\toprule
& & \multicolumn{3}{c}{chunks removed (cardinality)} &
    \multicolumn{2}{c}{tokens removed} \\
\cmidrule(lr){3-5}\cmidrule(lr){6-7}
$t$ & $\delta$ & attained & ceiling ($t'$) & \% of ceiling & at $t$ & at $t'$ \\
\midrule
$0.99$ & $0.141$ & \ceilAttainedNN & \ceilCeilingNN & \ceilFracNN
       & \ceilTokAttNN & \ceilTokCeilNN \\
$0.95$ & $0.316$ & \ceilAttainedNF & \ceilCeilingNF & \textbf{\ceilFracNF}
       & \ceilTokAttNF & \ceilTokCeilNF \\
$0.90$ & $0.447$ & \ceilAttainedN  & \ceilCeilingN  & \ceilFracN
       & \ceilTokAttN  & \ceilTokCeilN  \\
\bottomrule
\end{tabular}
\end{table}

\paragraph{Attained removal.}
At the default threshold, the near-duplicate layer removes \ceilAttainedNF{}
of chunks against a ceiling of \ceilCeilingNF{}, attaining
\ceilFracNF{} of that upper bound. This is a lower bound on the fraction of
optimal removal achieved, since the ceiling may be loose. The corresponding
fractions are \ceilFracNN{} at $t=0.99$ and \ceilFracN{} at $t=0.90$.
They should be compared between methods at a fixed threshold: increasing
$\delta$ also loosens the second-pass bound. The auxiliary threshold
$t'=4t-3$ is used only to compute the ceiling and is not a deployment setting.

\subsection{Representative selection}
\label{sec:rule}

The merge relation is not transitive: $a$ can be close to $b$ and $b$ to $c$
without $a$ being close to $c$. Keeping one record per connected component of
the threshold graph therefore does not ensure a $\delta$-substitution.
Greedy first-occurrence compares each arrival directly with retained records
and accepts only substitutions within $\delta$.

\begin{figure}[t]
\centering
\includegraphics[width=0.82\linewidth]{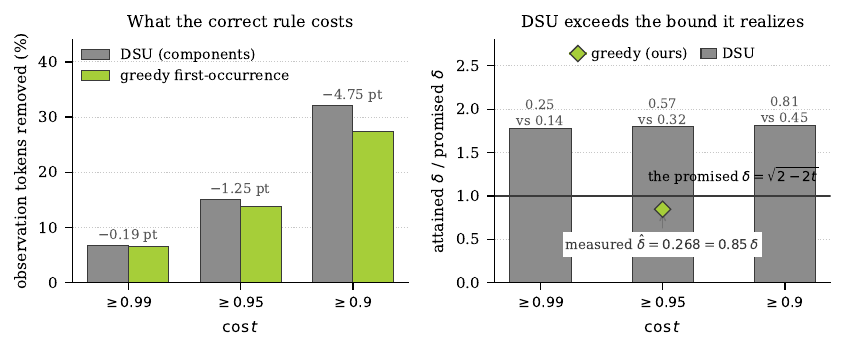}
\caption{Representative selection and the substitution bound.
\textbf{Left:} observation-token removal for connected components (DSU) and
greedy first-occurrence. \textbf{Right:} maximum substitution distance relative
to $\delta = \sqrt{2-2t}$. Greedy remains within the bound. The diamond marks
$\hat\delta = \deltaAttained$ at $\cos \ge 0.95$. Results use \subTraj{}
trajectories and \subTokens{} observation tokens.}

\label{fig:rule}
\end{figure}

Greedy first-occurrence drops $x$ only when an already retained $y$ lies
within $\delta$. This directly satisfies the premise of
Theorem~\ref{thm:main}. Figure~\ref{fig:rule} compares the removal and maximum
substitution distance of this rule with connected-component grouping.

\subsection{The measured bound}

The pipeline reports
$\hat\delta = \max_{x:\pi(x)\ne x}\|x-\pi(x)\|_2 \le \delta$, with value zero
when no substitutions occur. It can be computed in $O(|S|D)$ or maintained
incrementally as records arrive. Theorem~\ref{thm:main} then holds with
$\hat\delta$ in place of $\delta$. For a particular query, the deficit is also
bounded by $\max_i|q\cdot(x_i-\pi(x_i))|$.
On the \subTraj{} trajectories in Section~\ref{sec:eval}, the realized maximum
is \deltaAttained{} at a configured \deltaConfigured{}. This bounds
embedding substitution and score loss. Evidence retention is evaluated
separately in Section~\ref{sec:safety}.

%% file: sections/06-determinism.tex
\section{Determinism}
\label{sec:determinism}

We distinguish identical output bytes from identical deletion sets and test
both across repeated runs, encoder batch sizes, and hardware configurations.

\paragraph{The pipeline by construction.}
For fixed embeddings and arithmetic, quantization, band hashing, and distance
verification are repeatable, and the lowest-index rule removes dependence on
traversal order. Repeated cuSZp runs at relative errors $10^{-3}$, $10^{-2}$,
and $3\times10^{-2}$ give one SHA-256 digest and one output size per setting,
including under concurrent GPU load.

\paragraph{The measured bound.}
Measured cosines vary by \cosRepro{}, which $\delta = \sqrt{2-2\cos}$ amplifies
near one into a \deltaSpread{} spread in $\hat\delta$. Online and batch
execution nevertheless produce byte-identical deletion sets on
\onlineEqBatch{} trajectories.

\paragraph{The encoder.}
We chunk \detRecords{} records from \subTraj{} trajectories into \detChunks{}
pieces, encode them at \detNBatches{} batch sizes (\detBatches{}), run the
greedy rule on each embedding matrix, and compare SHA-256 digests. Relative to
batch 64, coordinates vary by up to \detJitterEmb{} and pairwise cosines by
\detJitterCos{}, yet every threshold yields one digest across all batch sizes:
\detDropNineNine{} chunks deleted at $\cos \ge 0.99$, \detDropNineFive{} at
$\cos \ge 0.95$, Jaccard \detJaccard{}, symmetric difference \detSymDiff{} on
\detHw{}.

\paragraph{Decision margins.}
\label{sec:eval-determinism}
Over \hrDecisions{} decisions the margin $|\max_j \cos(x_i,x_j)-t|$ is at
minimum \hrMinNineNine{} at $t = 0.99$ and \hrMinNineFive{} at $t = 0.95$,
\hrRatioNineNine{} and \hrRatioNineFive{} times the measured jitter; the first
percentile is \hrPOneNineNine{}, the median \hrMedNineNine{}, and
\hrWithinTenX{} of decisions lie within ten times the jitter. Margins this wide
explain the unchanged deletion sets.

\paragraph{SemDeDup seed sensitivity.}
Holding the \detChunks{} chunks and their order fixed and varying the $k$-means
seed over \sdSeeds{} runs with \sdClusters{} clusters, removal at $\cos \ge
0.99$ is \sdRateNineNine{} with spread \sdRateSpread{}, but pairwise
deletion-set Jaccard is only \sdJaccardNineNine{} at $0.99$ and
\sdJaccardNineFive{} at $0.95$: all seeds share \sdStableNineNine{} and
\sdStableNineFive{} of the deletion union, and \sdFlipNineNine{} chunks change
status. Similar removal rates therefore need not imply identical memories.

\paragraph{Cross-hardware.}
Both machines use the same checksum-verified parquet shard, tokenizer, fp32 CLS
pooling, and greedy rule, differing in PyTorch version, CUDA version, and GPU
architecture. \xhwVerdict{}

\paragraph{LLM memory managers.}
Submitting a Mem0-style prompt that lists records and requests deletion indices
to \lldModel{} under vLLM with prefix caching disabled: \llmdetStatus{} Of
\lldTraj{} trajectories \lldEmpty{} yield empty deletion sets, and the
\lldNonEmpty{} nonempty ones have sizes \lldSizes{}. Reproducibility here rests
on greedy decoding, a fixed seed, and a pinned model version rather than on
embeddings and a merge threshold.

%% file: sections/07-cost.tex
\section{Cost model}
\label{sec:cost}

Appendix~\ref{app:costclasses} charges each compaction method for inference,
compression, and token removal. Here we model end-to-end latency and evaluate
scheduling and CPU placement.

\subsection{The $\varphi$ decomposition (P1)}
\label{sec:phi}

Split one agent turn into the part scaling with context length $C$ --- KV reads
during decode, prefill of arriving tokens against the existing context --- and
the part that does not, dominated by streaming the weights once per decode step.
With $\varphi$ the first part's share of turn latency, context reduction cannot
beat the Amdahl ceiling $1/(1-\varphi)$.

\begin{table}[t]
\centering
\caption{$\varphi$, the context-scaling share of one turn, and the ceiling it
puts on \emph{any} compactor. Llama-3.1-8B on GB10, $\Delta = \turnDelta{}$
arriving tokens and $G_a = \turnGa{}$ decoded tokens per turn (measured).
Weight reads amortize across a decode batch; KV reads do not.}
\label{tab:phi}
\small
\begin{tabular}{lrrrr}
\toprule
 & $C = 3$K & $C = 8$K & $C = 32$K & $C = 128$K \\
\midrule
batch 1  & \phiSmall\ (\ceilSmall) & 6.4\%\ (1.07$\times$) & \phiOne\ (\ceilOne) & 52.4\%\ (2.10$\times$) \\
batch 32 & 23.5\%\ (1.31$\times$)  & 45.0\%\ (1.82$\times$) & \phiThirtyTwo\ (\ceilThirtyTwo) & \phiBig\ (\ceilBig) \\
\bottomrule
\end{tabular}
\end{table}

\begin{table}[t]
\centering
\caption{Modeled latency with the same compressor, corpus, and deletion set.
Only the schedule differs. Delete-in-place prefills the arriving tokens and then
removes duplicates; the $\varepsilon$ error-bounded rule removes before prefill.
Llama-3.1-8B, growth regime, \subTraj{} trajectories.}
\label{tab:policy}
\footnotesize
\begin{tabular}{lrrr}
\toprule
policy & batch 1 & batch 32 & \% of total \\
\midrule
delete in place (prefill, then remove)      & $\polDelete$\,s & $\polDeleteBs$\,s & \polShareDelete \\
\shiprow $\varepsilon$ error-bounded (remove, then prefill)   & $\polAdmit$\,s  & $\polAdmitBs$\,s  & \polShareAdmit \\
\midrule
policy gain & \best{\polGainOne} & \best{\polGainBs} & \\
\bottomrule
\end{tabular}
\end{table}

At batch 1 and a 32K context $\varphi = \phiOne{}$, capping context-reduction
speedup at \ceilOne{}; at batch 32 the same context gives
$\varphi = \phiThirtyTwo{}$ and \ceilThirtyTwo{} (Table~\ref{tab:phi}). Batch
size, context length, and model architecture therefore move the speedup on
their own, before any compactor acts.

\paragraph{Accounting behind Table~\ref{tab:e2emethods}.} The end-to-end table
in \S\ref{sec:main-results} runs the loop above for \eeTurns{} turns in the
growth regime at $B_{ctx} = 32$K, charging each method its own measured removal
and the least expensive schedule its rule permits. Specifically, a method that
is not prefix-stable can still defer compaction to the budget wall instead of
deleting eagerly, and that cheaper option is what it is charged; only LAM and L0
admit. Furthermore the ordering there does not follow removal, because what each
method pays is set by how often its rule forces a re-prefill rather than by how
much it removes. Compaction latency itself is a separate and much smaller axis
(Appendix~\ref{app:costclasses}). Without compaction the run exhausts the KV
cache at turn \eeNoneTurns{} of \eeTurns{}.

\paragraph{End-to-end accounting.} Without overlap, total latency includes
compaction and inference on the retained context:
\[
t_{e2e} = \sum_{\text{turns}} \left[\text{prefill} + \text{decode}\right]
        + \sum_{\text{events}} \left[\text{compact} + \text{re-prefill}\right].
\]
Over \subTraj{} measured trajectories a turn adds $\Delta = \turnDelta{}$ tokens,
\turnFobs{} of it tool observation, while the agent \emph{decodes} only
$G_a = \turnGa{}$: most arriving tokens are eligible for compaction, and what
their removal is worth depends on $\varphi$.

\subsection{Where the latency is: prefix-stability (S1)}
\label{sec:policy}

We hold the compressor, corpus, and deletion set fixed and vary only when
records are tested (Table~\ref{tab:policy}). Deleting a record from an existing
context invalidates the KV entries after it, and rebuilding a 360K-token cache
costs ${\approx}\reprefillCost$\,s in the model against
${\approx}\turnServeCost$\,s for the turn's inference. Deletion in place
therefore spends \polShareDelete{} of modeled runtime on compaction and
re-prefill; testing before prefill preserves the cache and cuts that to
\polShareAdmit{}, for \polGainOne{} end to end at batch 1 and \polGainBs{} at
batch 32.

\paragraph{The re-prefill term, measured.}
Since \reprefillCost{} comes from the cost model, we also measure prefill
directly for \ovModel{} under vLLM on \ovGpus{} with prefix caching enabled,
issuing a salted prompt and then repeating it unchanged at each context length
up to \ppMaxLen{} tokens. The server reports \ppCachedFrac{} of the repeat as
cached, and at the longest context cold and cached prefill take \ppColdS{} and
\ppWarmS{}, a \ppWarmSpeedup{} difference --- independent confirmation that
preserving the cached prefix cuts prefill cost. The cost is also nonlinear in
context length: the effective rate peaks at \ppRateHi{} near 4K tokens and falls
to \ppRateLo{} at \ppMaxLen{}, and $t=c+C/r+kC^2$ fits better than a line,
predicting \ppReprefill{} at 360K tokens against \ppReprefillLin{} for the
linear fit (\ppLinUnderstate{} apart). Both extrapolations lie outside the
measured range. They also sit \reprefillVsMeasured{} below the
\reprefillCost{}\,s the cost model charges, because the two are different
stacks: the model is configured for Llama-3.1-8B on one \detHw{} at
\hwGbBw{} of achieved bandwidth, while this measurement runs \ovModel{} on
\ovGpus{} at \hwAmpereBw{} each. Aggregate bandwidth alone separates them by
\bwRatioAgg{}, so the model's figure is below what hardware scaling would
predict rather than above it. Neither stack is measured at 360K tokens, and
the headline rests on the model's.

\paragraph{How much a deletion invalidates.}
Deleting in place is not a competing system; it is the same operator on a
different schedule, and the cost it pays turns on how much of the KV cache a
deletion voids. That is measurable rather than assumed. A deletion invalidates
every entry at or after the earliest deleted position, and a duplicate by
definition repeats something earlier, so the first one arrives early: across
\invalNTraj{} trajectories the median first duplicate sits \invalFirstDup{} of
the way in, and the invalidated share of context is \invalMean{} on average and
\invalMedian{} at the median, with \invalGeNF{} of trajectories above $0.95$.
Measured on exact hashing, the least aggressive layer, so these are lower
bounds. Charging the whole context is therefore close to what the schedule
actually costs; recomputing at the measured mean instead gives
\polGainOneMeas{}--\polGainBsMeas{} rather than \polGainOne{}--\polGainBs{}.

\paragraph{The three factors together.}
In the evaluated regime the removal L1 adds over L0 is worth \growthGain{},
kernel optimization almost nothing because encoding dominates compaction time,
and avoiding re-prefill \polGainOne{}--\polGainBs{}. That last gain goes to any
prefix-preserving rule, L0 included; summarization rewrites existing context and
must rebuild the affected cache entries instead.

\paragraph{Index footprint.}
The index holds one fp32 embedding per retained chunk and one $64$-bit hash per
distinct line. Across \dhTraj{} trajectories at $\cos \ge 0.95$ it is
\idxMeanMB{} on average, \idxPNineNineMB{} at the $99$th percentile and
\idxMaxMB{} at maximum, or \idxPerKtok{} per $1$K observation tokens, of which
embeddings are \idxShare{}. A log--log fit of bytes against tokens gives an
exponent of \idxGrowth{}, sublinear over the measured range. The index is
\idxOverKV{} of the corresponding \idxKVPerTraj{} KV cache for \ovModel{}.

\subsection{Where the compactor runs}
\label{sec:overlap}

The test can run on the CPU while the GPU serves the model, over embeddings that
are already available. We measure whether that overlap costs serving throughput
or latency.

\paragraph{Setup.} Two nodes of ALCF sirius: node~1 serves \ovModel{} under vLLM
across \ovGpus{} at tensor parallelism \ovTP{} and runs the compactor on its
\ovCores{} CPU cores, node~2 runs the load generator. Prefix caching is disabled
so that reuse of the shared prompt prefix cannot hide the interference. The five
conditions are the server alone (A), the compactor alone (B), both overlapped
under saturating load (C) and at a fixed arrival rate of \ovRate{} (D), and (D)
with the compactor confined by \texttt{numactl} to the two of \ovNuma{} NUMA
domains furthest from the GPUs (E). Compactor rates cover the whole benchmark
window, so idle time counts against them.

\begin{table}[t]
\centering
\caption{CPU compaction overlapped with GPU serving on two sirius nodes. At 32
threads the compactor contends with the server's own CPU work and both sides
lose.}
\label{tab:overlap}
\small
\input{tables/overlap}
\end{table}

\paragraph{Throughput and latency under overlap.}
At the fixed arrival rate, output throughput is \ovRateTputEight{} of baseline
with 8 compactor threads and \ovRateTputSixteen{} with 16; median TTFT moves
from \ovRateBaseTtft{} to \ovRateTtftSixteen{} and median inter-token latency by
\ovRateTpotSixteen{}. The compactor processes \ovCompactRateSixteen{} context
tokens per second against \ovSoloSixteen{} alone, \ovVsSoloSixteen{} of its
standalone throughput. Under saturating load, serving throughput is
\ovSatTputSixteen{}, TTFT \ovSatTtftSixteen{}, and the compactor
\ovSatVsSoloSixteen{} of standalone. At 32 threads TTFT rises to
\ovRateTtftThirtytwo{}, inter-token latency to \ovRateTpotThirtytwo{}, and
compactor throughput falls to \ovVsSoloThirtytwo{} --- CPU oversubscription,
with compaction sharing cores with server tokenization and scheduling.

\paragraph{NUMA placement.}
Confining compaction to the two NUMA domains furthest from the GPUs gives server
throughput \ovPinTput{} and compactor throughput \ovPinVsSolo{} of standalone:
unrestricted placement wins, since pinning costs the compactor memory bandwidth.

\paragraph{Consequences for the cost model.}
At the selected setting the CPU processes context at roughly \ovKeepUp{} the
GPU's output-token rate, so this workload overlaps with serving at little
throughput cost, and overlap reduces what compaction contributes to elapsed time
below what the sequential model charges. The result assumes spare CPU capacity
and excludes GPU encoding when embeddings must be generated.

%% file: tables/overlap.tex
\begin{tabular}{lrrrrr}
\toprule
compactor & \multicolumn{3}{c}{GPU server} & \multicolumn{2}{c}{CPU compactor} \\
\cmidrule(lr){2-4}\cmidrule(lr){5-6}
threads & out tok/s & TTFT p50 & TPOT p50 & ctx tok/s & vs.\ solo \\
\midrule
\multicolumn{6}{l}{\emph{fixed arrival rate, 5 req/s --- server unsaturated}} \\
none & 990.8 & 157.0\,ms & 11.81\,ms & --- & --- \\
8    & 990.8 & 164.7\,ms & 11.84\,ms & 2{,}937 & 0.966 \\
16   & 990.7 & 166.5\,ms & 11.84\,ms & 4{,}647 & 0.929 \\
32   & 989.0 & 192.5\,ms & 13.68\,ms & 3{,}809 & 0.620 \\
\midrule
\multicolumn{6}{l}{\emph{saturating load --- server queue-bound}} \\
none & 1267.9 & 100.3\,s & 77.18\,ms & --- & --- \\
8    & 1266.2 & 100.4\,s & 77.26\,ms & 2{,}908 & 0.946 \\
16   & 1266.6 & 100.3\,s & 77.25\,ms & 4{,}536 & 0.907 \\
32   & 1258.5 & 101.4\,s & 77.34\,ms & 3{,}747 & 0.610 \\
16, pinned & 1263.6 & 100.7\,s & 77.32\,ms & 2{,}703 & 0.540 \\
\bottomrule
\end{tabular}

%% file: sections/B-eval-extended.tex
\section{Extended evaluation}
\label{app:eval}

\subsection{The online operator, measured (A1)}

\begin{table}[t]
\centering
\caption{The $\varepsilon$ error-bounded test over \subTraj{} SWE-smith trajectories, \subObs{} observation records, \subTokens{} tokens, $\cos \ge 0.95$.}
\label{tab:online}
\small
\begin{tabular}{llp{6.6cm}}
\toprule
quantity & value & note \\
\midrule
gross removal        & \jointGross & L1 followed by L0 \\
net of stubs         & \jointNet   & stubs cost \stubCost{} of input \\
chunks encoded       & \subChunks  & one encoding per chunk \\
per-turn latency     & \turnLatMean{} mean / \turnLatPnn{} p99 & a turn itself costs \turnCostLo{}--\turnCostHi{} \\
measured bound       & $\hat{\delta} = \deltaAttained \le \deltaConfigured$ & attained, not configured \\
online $\equiv$ batch & byte-identical residuals & verified deletion-set equality, \onlineEqBatch{} pass \\
\bottomrule
\end{tabular}
\end{table}

\begin{table}[t]
\centering
\caption{Substitution distances and retrieval quality. Distance summaries are weighted by deleted tokens; $\hat\delta$ is the maximum within a run. \emph{Keep} is the retained-chunk fraction. R@10 compares with the full index's top-10 after applying the substitution map.}
\label{tab:delta-hat}
\small
\input{tables/delta_hat}
\end{table}

\begin{table}[t]
\centering
\caption{Layer composition and order, $\cos \ge 0.95$, same denominator. The independence estimate is compared with measured joint removal.}
\label{tab:layers}
\small
\begin{tabular}{lrr}
\toprule
configuration & removal & vs.\ default \\
\midrule
L0 alone (exact hashing)            & \lineAlone       & $-\,4.93$\,pt \\
L1 alone (near-duplicate)                  & \nearGreedyNineFive & $-\,9.16$\,pt \\
A: L0 $\to$ L1                             & \ordANineFive    & $-\,\ordCostNineFive$\,pt \\
B: L1 $\to$ L0 (default)                    & \ordBNineFive  & --- \\
\midrule
independence estimate (rejected)           & \ordIndepNineFive & $+\,6.38$\,pt \\
\bottomrule
\end{tabular}
\end{table}

\begin{table}[t]
\centering
\caption{Per-stage cost of one full compaction pass at trajectory scale.}
\label{tab:ablation}
\small
\begin{tabular}{lrr}
\toprule
stage & seconds & \% of total \\
\midrule
bge-base encode        & 13.945508 & 99.989\% \\
$Q_\varepsilon$ quantize & 0.000138 & 0.001\% \\
band digests           & 0.000078 & 0.001\% \\
exact $\ell_2$ verify  & 0.001201 & 0.009\% \\
cuSZp byte layer       & 0.000069 & 0.000\% \\
\midrule
total                  & 13.946994 & \\
\bottomrule
\end{tabular}
\end{table}

Table~\ref{tab:online} reports the greedy rule with reference-stub costs
included, and online and batch residuals are byte-identical on \onlineEqBatch{}
trajectories. Running L0 before L1 costs \ordCostNineFive{} percentage points of
joint removal (Table~\ref{tab:layers}), and encoding accounts for nearly all
compaction time (Table~\ref{tab:ablation}), so reusing existing embeddings is
worth far more than optimizing the remaining stages.

\subsection{Retrieval quality and the measured bound}
\label{sec:realized}

Recording $d_i=\|x_i-x_{r(i)}\|_2$ for every substitution on \dhTraj{}
trajectories (\dhChunks{} chunks, \dhTokens{} observation tokens), all satisfy
the configured bound and the largest approach it at each threshold: at
$\cos \ge 0.99$ the mean distance is \dhRatioNineNine{} of the bound and
\dhHalfNineNine{} of merged tokens fall below half of it.

\paragraph{Retrieval cost.} Indexing each trajectory's observation chunks and
querying with its own action texts (\rkQueries{} queries over \rkChunks{}
chunks), the default $\cos \ge 0.95$ keeps \rkKeepNineFive{} of chunks at
recall@10 \rkTenNineFive{} (recall@1 \rkOneNineFive{}, recall@20
\rkTwentyNineFive{}) and $\cos \ge 0.99$ gives \rkTenNineNine{}. At $\cos \ge
0.90$, keeping \rkKeepNine{}, recall@10 drops to \rkTenNine{} --- the threshold
at which the resolve-rate ceiling also falls (Section~\ref{sec:rdu}).

\subsection{Evidence survival (A4)}
\label{sec:safety}

\paragraph{Ground truth and criterion.} SWE-smith applies a bug patch to a
working repository, so the patch's \texttt{+} side is the defect the agent
explores and the gold fix is that patch reversed. The patch comes from the task
instance, not the misaligned \texttt{patch} column of the trajectory row:
joining on \texttt{instance\_id} matches \safJoined{} of \subTraj{}, and the
\texttt{+} side appears in \safPlusSide{} of instances against \safMinusSide{}
for the \texttt{-} side. Evidence is the added plus context lines of at least 12
characters, so that \texttt{else:} does not count. We score the \safLines{} such
lines appearing verbatim in some observation over \safTraj{} trajectories ---
compaction cannot delete what the agent never saw --- and a line survives if it
is findable anywhere in the residual.

\begin{table}[t]
\centering
\caption{Gold-patch safety, \safTraj{} trajectories, \safLines{} witnessed evidence lines. ``Kept'' pools over lines (micro retention); ``intact'' is the fraction of trajectories losing none (macro). The no-compaction and L0 rows are harness checks and must read $100.000\%$; composing L1 with L0 must score what L1 scores alone.}
\label{tab:safety}
\small
\begin{tabular}{lrrrr}
\toprule
method & removal & evidence kept & trajectories intact & lines lost \\
\midrule
no compaction                   & 0.00\%  & 100.000\% & 100.0\% & 0 \\
L0 alone (exact hashing) & 18.02\% & 100.000\% & 100.0\% & 0 \\
\midrule
ours, $\cos \ge 0.99$           & 18.86\% & 100.000\% & 100.0\% & 0 \\
ours, $\cos \ge 0.96$           & 21.49\% & 100.000\% & 100.0\% & 0 \\
\shiprow ours, $\cos \ge 0.95$  & 22.47\% & 99.984\% & 99.8\% & 1 \\
ours, $\cos \ge 0.90$           & 31.99\% & 98.709\%  & 93.0\%  & 81 \\
\midrule
observation masking, $k=50$     & 4.42\%  & 99.140\%  & 98.3\%  & 54 \\
observation masking, $k=20$     & 34.06\% & 93.021\%  & 81.9\%  & 438 \\
observation masking, $k=10$     & 67.38\% & 80.752\%  & 52.2\%  & 1{,}208 \\
observation masking, $k=5$      & 77.94\% & 73.136\%  & 35.6\%  & 1{,}686 \\
observation masking, $k=3$      & 81.44\% & 71.606\%  & 32.8\%  & 1{,}782 \\
\bottomrule
\end{tabular}
\end{table}

\begin{table}[t]
\centering
\caption{Ratio ledger, \subTraj{} trajectories, \subTokens{} observation tokens, $\cos \ge 0.95$. Context gain is $1/(1-r)$; ``share of turn'' is per-event cost against a turn costing \turnRefCost{}. ``Net'' charges us for reference stubs.}
\label{tab:ledger}
\footnotesize
\begin{tabular}{lrrrr}
\toprule
method & removal & context gain & cost / event & share of turn \\
\midrule
L0 alone (xxHash3)           & \lineAlone          & \ctxGainLine  & \costPerEventHash & $3.1\times10^{-6}$ \\
L1 alone (near-duplicate)          & \nearGreedyNineFive & \ctxGainNear  & \costPerEventOurs & 0.0089 \\
L1 $\to$ L0, gross                 & \jointGross         & \ctxGainGross & \costPerEventOurs & 0.0089 \\
L1 $\to$ L0, net                   & \jointNet           & \ctxGainOurs  & \costPerEventOurs & 0.0089 \\
\midrule
ours $-$ hashing                      & $+\jointGainNet$\,pt & \ctxGainDelta & \costRatioHash & --- \\
\bottomrule
\end{tabular}
\end{table}

At $\cos \ge 0.95$ LAM removes \safOursRemoval{} of observation tokens and
retains \safOursMicro{} of measured evidence, losing \safOursLost{} line in
\safOursLost{} of \safTraj{} trajectories, while keeping the last 50
observations removes \safRemovalRatio{} fewer tokens and loses \safLossRatio{}
more evidence. At matched removal, masking with $k=20$ removes
\safMaskTwentyRemoval{} and retains \safMaskTwentyMicro{} against
\safOursNinetyRemoval{} and \safOursNinetyMicro{} for LAM at $\cos \ge 0.90$.
The \safResolvedTraj{} originally resolved trajectories behave the same
(\safResolvedRemoval{} removal, \safResolvedMicro{} retention).

\paragraph{Stub exposure and specificity.} A line survives only if its
\emph{content} is readable, never because a stub points at it. At $\cos \ge
0.95$ the near-duplicate layer suppressed \safStubTouched{} witnessed lines from
the record they arrived in, of which \safStubRecovered{} remain readable
elsewhere and \safStubLost{} does not. The test is specific: searched in an
unrelated trajectory's memory at a fixed offset, only \safCtrlOther{} of the
\safLines{} are found. This shows the evidence is still in memory, not that the
agent would still produce the patch (Section~\ref{sec:limitations}).

Adding L1 above L0 buys \jointGainNet{} percentage points of net removal at
\costRatioHash{} times L0's per-event cost (Table~\ref{tab:ledger}); what that
is worth end to end is modeled in Appendix~\ref{sec:arrival}.

\subsection{What the evidence axis includes}
\label{sec:relaxed}

Exact substring matching cannot score token-dropping compressors, so we relax
it: a witnessed line survives if its alphanumeric tokens occur in order within a
window at most \relaxDilation{} times their count. Under the same
unrelated-trajectory control, \relaxOwn{} of lines match their own trajectory
and \relaxCtrl{} another, against \safCtrlOther{} for substring matching; that
control rate rises from \relaxCtrlOne{} without dilation to \relaxCtrlEight{} at
eightfold, so we use fourfold.

This covers deletion and token dropping, including LLMLingua-2
\citep{pan2024llmlingua2}, which LightMem uses at intake
\citep{fang2026lightmem}; we evaluate LightMem's configured
\texttt{bert-base-multilingual-cased-meetingbank} checkpoint, not the
XLM-R-large one modeled in Appendix~\ref{app:costclasses}, and not its full
consolidation system. Neither criterion reaches the paraphrases or latent
representations of MemAgent \citep{yu2026memagent}, Gisting, ICAE, or xRAG,
which need a semantic evaluation of their own.

\subsection{The balance: one metric for evidence and tokens (A5)}
\label{sec:balance}

Removal alone does not measure retained utility --- masking with $k=5$ removes
\safMaskFiveRemoval{} of tokens and loses about a quarter of the measured
evidence --- so rather than weight removal against loss we report maximum
removal subject to a retention floor $\tau$:

\begin{equation}
\mathrm{ECR}@\tau \;=\; \max\bigl\{\, \rho(m) \;:\; m \in \mathcal{M},\;
  \mathrm{micro}(m) \ge \tau \,\bigr\},
\end{equation}
over a method's own operating points $\mathcal{M}$ (our $t$, masking's $k$).

\paragraph{Deriving the floor.}
Pooled retention hides trajectory-level loss: SemDeDup at $\cos 0.995$ retains
\sdSafeMicroTightish{} of lines, yet \floorDmgNN{} of trajectories lose at least
one. Across \floorPoints{} settings from \floorFamilies{} families, raising the
floor from $0.990$ to $0.999$ cuts the largest admissible fraction of affected
trajectories from \floorDmgNN{} to \floorDmgNNN{} and the largest RSC decrease
from \floorRscNN{} to \floorRscNNN{} percentage points (Table~\ref{tab:floor});
against an uncompacted resolve rate of \rscBase{} on \rduResolved{} of
\rduTraj{} trajectories (binomial standard error \floorSE{} points), those are
about one standard error and one-tenth of one. We therefore use $\tau=0.999$,
chosen after examining the data. The transition lies between \floorCutFail{},
where the least intact setting (LAM at $t=0.93$) retains all evidence in
\floorFailMacro{} of trajectories, and \floorCutHold{}, the loosest tested floor
meeting the $99\%$ requirement.

\begin{table}[t]
\centering
\caption{Maximum evidence loss allowed by each floor across \floorPoints{} settings from \floorFamilies{} families. ``Tasks losing evidence'' is $1-\mathrm{macro\_all\_intact}$. ``RSC decrease'' is the largest percentage-point decrease from \rscBase{} among the two families with measured RSC.}
\label{tab:floor}
\footnotesize
\begin{tabular}{lrllrl}
\toprule
floor $\tau$ & admissible & evidence loss & binding point & RSC drop & binding point \\
\midrule
$1.000$ & 7  & 0.0\%  & (all tie)              & $-$0.00 pp & (all tie) \\
$0.999$ & 10 & 0.7\%  & SemDeDup $\cos 0.9999$ & $-$0.17 pp & ours $t{=}0.95$ \\
$0.990$ & 16 & 5.5\%  & SemDeDup $\cos 0.995$  & $-$1.83 pp & ours $t{=}0.92$ \\
$0.950$ & 22 & 12.0\% & ours $t{=}0.88$        & $-$8.50 pp & ours $t{=}0.88$ \\
\bottomrule
\end{tabular}
\end{table}

\begin{table}[t]
\centering
\caption{Removal and evidence in one place, same corpus and denominator. $\mathrm{ECR}@\tau$ is the most a family can remove while retaining $\ge \tau$ of the gold-patch evidence lines, maximized over its own operating points; ``best'' drops the constraint. Shading marks the column winner. SemDeDup is swept to $\cos 0.9999$, tighter than our own family.}
\label{tab:ecr}
\footnotesize
\begin{tabular}{lrrrrr}
\toprule
 & best & \multicolumn{4}{c}{$\mathrm{ECR}@\tau$} \\
\cmidrule(lr){2-2}\cmidrule(lr){3-6}
method & (unconstrained) & $\tau = 1.000$ & $0.999$ & $0.990$ & $0.950$ \\
\midrule
L0 alone             & 18.02\% & 18.02\% & 18.02\% & 18.02\% & 18.02\% \\
near-duplicate layer alone & 26.09\% & 11.42\% & 13.08\% & 13.08\% & 26.09\% \\
observation masking        & 81.44\% & ---     & ---     & 4.42\%  & 4.42\%  \\
SemDeDup, 5 seeds          & 47.19\% & ---     & 19.60\% & \best{22.84\%} & \best{33.74\%} \\
\midrule
ours: near $\to$ line      & 31.99\% & \best{21.49\%} & \best{22.47\%} & 22.47\% & 31.99\% \\
\bottomrule
\end{tabular}
\end{table}

At $\tau=0.999$ LAM removes the most of any evaluated setting
(Table~\ref{tab:ecr}): SemDeDup reaches \sdEcrNNN{} and no masking setting
qualifies. At perfect retention LAM removes \ecrOneOurs{}, while SemDeDup's
tightest tested threshold, $\cos 0.9999$, retains \sdSafeMicroTight{} and does
not qualify. At lower floors SemDeDup removes more: \sdEcrNN{} against
\ecrNNOurs{} at $0.99$ and \sdEcrNF{} against \ecrNFOurs{} at $0.95$, although
it clusters across trajectories here (Section~\ref{sec:sdwithin} restricts it to
one). Which family is preferable depends on the required retention; only LAM
carries a substitution bound and admits testing before prefill.

\paragraph{Turning loss into cost.} Charging lost evidence $\kappa$ repair turns
at $\Delta = \costDelta$ measured tokens each (\S\ref{sec:cost}), the \emph{net}
reduction is
\begin{equation}
\label{eq:eta}
\eta(\kappa) \;=\; \rho \;-\; \frac{\text{repairs} \cdot \kappa \cdot \Delta}{T},
\qquad T = \costTokens \text{ observation tokens},
\end{equation}
and we sweep $\kappa$ rather than choose it.

\begin{table}[t]
\centering
\caption{Net reduction $\eta(\kappa)$ from \eqref{eq:eta}, assuming $\kappa$ repair turns per lost evidence line. Break-even values show when repair costs offset additional removal.}
\label{tab:eta}
\small
\begin{tabular}{lrrrrr}
\toprule
& $\rho$ & $\eta(1)$ & $\eta(3)$ & $\eta(8)$ & break-even $\kappa$ vs.\ ours \\
\midrule
ours, $\cos\ge0.95$ & 0.2247 & 0.2246 & 0.2245 & 0.2241 & --- \\
ours, $\cos\ge0.90$ & 0.3199 & 0.3141 & 0.3027 & 0.2741 & 16.9 \\
masking, $k{=}20$   & 0.3406 & 0.3097 & 0.2478 & 0.0932 & \textbf{3.8} \\
masking, $k{=}10$   & 0.6738 & 0.5884 & 0.4178 & $-0.0087$ & \textbf{5.3} \\
masking, $k{=}5$    & 0.7794 & 0.6603 & 0.4222 & $-0.1732$ & \textbf{4.7} \\
\bottomrule
\end{tabular}
\end{table}

\begin{table}[t]
\centering
\caption{Budget-feasibility rate $\mathrm{BFR}@B$: the fraction of \bfrTraj{} trajectories whose observation stream fits under $B$ tokens. Necessary, not sufficient.}
\label{tab:bfr}
\small
\begin{tabular}{lrrrr}
\toprule
& $B{=}8$K & $B{=}16$K & $B{=}32$K & $B{=}64$K \\
\midrule
uncompacted          & \bfrNoneA & \bfrNoneB & \bfrNoneC & \bfrNoneD \\
ours, $\cos\ge0.99$  & \bfrNNA   & \bfrNNB   & \bfrNNC   & \bfrNND   \\
ours, $\cos\ge0.95$  & \bfrNFA   & \bfrNFB   & \bfrNFC   & \bfrNFD   \\
ours, $\cos\ge0.90$  & \bfrNA    & \bfrNB    & \bfrNC    & \bfrND    \\
\bottomrule
\end{tabular}
\end{table}

Masking keeps a net-token advantage while repair is cheap, breaking even against
LAM at around four repair turns per lost line depending on $k$ and turning
negative near $\kappa=\breakevenZero$, or $\kappa=\breakevenTraj$ charging one
repair per affected trajectory (Table~\ref{tab:eta}). LAM at $\cos\ge0.95$ loses
one measured line, so the sweep barely moves its net reduction.

\paragraph{Cost in tokens and dollars.} Removing \safOursRemoval{} of
\costTokens{} observation tokens is \costSavedPerK{}M context tokens per
$1{,}000$ trajectories, or $\costSavedPerK p$ at $p$ per million input tokens.
That under-counts: without prompt caching a context token is re-read on every
later turn, so at \turnsMean{} turns per trajectory the saving is amplified by
$\approx n/2 =$ \costAmp{}. We report the multiplier rather than a dollar
figure, which would need a cache policy and a price list.
\citet{johnson2026production} make the same point from the other side:
aggressive compression \emph{increased} total cost by $1.8\%$ despite large
input reduction, because output tokens expanded and are priced higher. We do not
measure output length.

\paragraph{Budget feasibility.}
$\mathrm{BFR}@B$ is the fraction of recorded observation streams fitting within
$B$ tokens after compaction --- storage feasibility, not task success. The
default setting raises it from \bfrNoneB{} to \bfrNFB{} at $B=16$K and from
\bfrNoneC{} to \bfrNFC{} at $32$K; at $64$K nearly all streams already fit.

\begin{figure}[!htbp]
\centering
\includegraphics[width=0.95\linewidth]{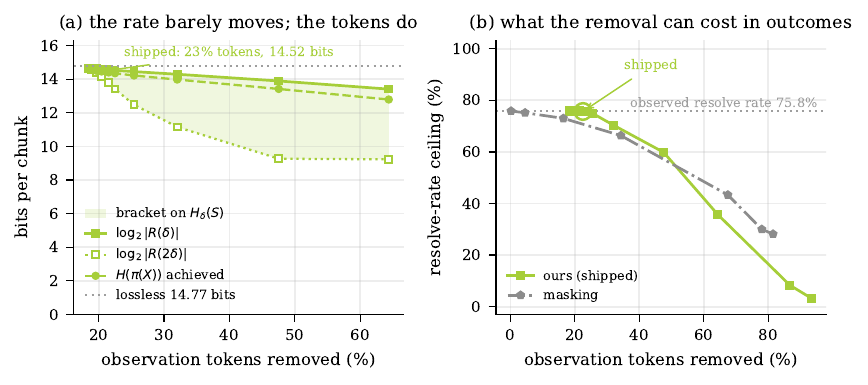}
\caption{\textbf{(a)} Metric-entropy bounds from Corollary~\ref{cor:bracket}
and observation-token removal. The upper and lower edges use greedy passes at
radii $\delta$ and $2\delta$. Index rate depends logarithmically on retained-set
size, while context cost depends on token count. \textbf{(b)} Conditional
resolve-rate ceiling (RSC, \eqref{eq:rsc}) versus token removal for LAM and
observation masking on \rduTraj{} trajectories, of which \rduResolved{}
were originally resolved. RSC remains near baseline through \rduTokNF{} removal
and decreases at more aggressive settings.}

\label{fig:rdu}
\end{figure}

\subsection{Rate, tokens, and the outcome ceiling}
\label{sec:rdu}

We measure both halves of Corollary~\ref{cor:rd} on \rduTraj{} trajectories over
a dense sweep of $\delta$. These were recorded \emph{without} compaction, so a
compacted rerun is unobservable, but each carries a \texttt{resolved} outcome
(\rduResolved{} of \rduTraj{}, rate \rscBase{}); crossing that with
per-trajectory evidence survival gives the \emph{resolve-rate ceiling}
\begin{equation}
\label{eq:rsc}
\mathrm{RSC}(\delta) \;\triangleq\;
  \frac{\left|\{\,\text{resolved trajectories retaining \emph{every} witnessed
  evidence line under $\delta$}\,\}\right|}{\text{\rduTraj{}}},
\end{equation}
which bounds live task success only if every witnessed line is necessary, lost
evidence cannot be reacquired, and compaction solves no previously unresolved
task; otherwise it is an evidence-based proxy. The harness checks that the sweep
is non-increasing and that $\mathrm{RSC}(0)$ matches the recorded resolve rate.

Against a lossless index of \rdLossless{} bits per chunk, at $\delta=0.3162$ the
bound gives $[\rdLoNF{},\rdHiNF{}]$ bits and the achieved entropy is \rdHNF{}, a
\rdBitSavePctNF{} reduction, while the same run removes \rdTokSaveNF{} of
observation tokens (Figure~\ref{fig:rdu}a) and RSC falls \rscLossNF{} percentage
points, from \rscBase{} to \rscNF{}. At similar removal masking with $k=20$
removes \rduTokMaskTwenty{} at RSC \rscMaskTwenty{} against \rduTokN{} and
\rscN{} for LAM at $\cos\ge0.90$; $k=50$ removes \rduTokMaskFifty{} at
\rscMaskFifty{}, and LAM at $\cos\ge0.80$ removes \rduTokEighty{} at
\rscEighty{} (Table~\ref{tab:rdu}). \citet{hou2026control} report a similarly
sharp task-success decrease under aggressive control-context compression, on
measurements not directly comparable with ours.

\subsection{Composite reduction: deduplication $\times$ byte codec}
\label{sec:composite}

Deduplication shrinks the number of vectors and the codec shrinks each survivor,
so the two multiply on the index axis but \emph{not} on the token axis: the
codec compresses embeddings, not the text the agent carries forward, so none of
these ratios is a context-length reduction. Crossing \cmpFront{}-plus merge
thresholds with the integrated byte codecs (\texttt{fp16}, per-vector
\texttt{int8}, sign-binary, cuSZp at several relative error bounds) gives
\cmpPoints{} combinations on one \cmpIndexN{}-vector index, scored against the
uncompressed, undeduplicated ground truth (recall@10 \cmpBaseRecall{}).

\begin{table}[t]
\centering
\caption{Composite storage reduction and recall@10, including the selected setting. Reduction multiplies deduplication and codec ratios on the embedding index. The full-precision, full-size index has recall \cmpBaseRecall{}; small differences around it should not be interpreted as quality gains.}
\label{tab:composite}
\small
\input{tables/composite}
\end{table}

\begin{table}[t]
\centering
\caption{Near-duplicate rate on LHTB, token-weighted, within trajectory, against the SWE-bench observation rate from Table~\ref{tab:ledger}. Same encoder, window and rule.}
\label{tab:lhtb}
\small
\input{tables/lhtb}
\end{table}

Composite storage reduction ranges from \cmpShipComposite{} to
\cmpMaxComposite{} and recall@10 from \cmpShipRecall{} to \cmpMaxRecall{}, but
at the most compressed setting retrieved-set Jaccard is \cmpMaxJaccard{}:
recall alone conceals a substantial change in which items come back. Because the
substitution bound does not cover byte-codec error we select
$\cos\ge\cmpShipCos{}$ with the conservative codec --- \cmpShipDedup{} $\times$
\cmpShipCodec{} $=$ \cmpShipComposite{} at recall \cmpShipRecall{} and Jaccard
\cmpShipJaccard{} --- over the higher-reduction \cmpKneeComposite{} at recall
\cmpKneeRecall{} and Jaccard \cmpKneeJaccard{}.

\subsection{SemDeDup matched on reach}
\label{sec:sdwithin}

The main SemDeDup baseline clusters the full corpus and so may match across
trajectories, while LAM searches only the current one. Rerunning it per
trajectory, with $k$ scaled to hold chunks per cluster fixed and the combined
deletion sets scored by the same criterion, removal falls to \sdwNineFiveRem{}
at $\cos t=0.95$ and \sdwNineNineRem{} at $0.99$, against \sdNineFiveRem{} and
\sdNineNineRem{} corpus-wide --- a factor of \sdwReachFactor{} at $0.95$.

At matched scope SemDeDup removes \sdwNineFiveRem{} retaining
\sdwNineFiveMicro{} of evidence, against \nearGreedyNineFive{} and
\safOursMicro{} for LAM's L1 alone, rising to \safOursRemoval{} once composed
with L0; SemDeDup could be composed with an exact layer too. Its
within-trajectory removal varies by \sdwSeedSpread{} across \sdSeeds{} seeds,
less than in the global experiment, though equal rates do not imply equal
deletion sets.

\subsection{A second corpus: long-horizon terminal agents}
\label{sec:lhtb}

\begin{figure}[t]
\centering
\includegraphics[width=0.50\linewidth]{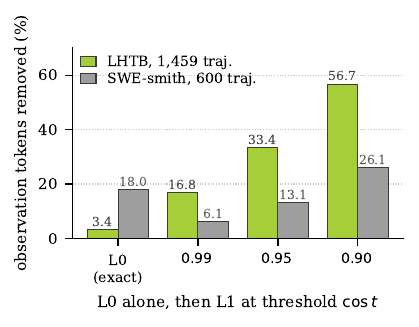}
\caption{The removal axis on a second scaffold. LHTB provides no gold patches, so only removal is plotted. L0 is record-level there and line-level on SWE-smith.}
\label{fig:ecr-lhtb}
\end{figure}

LHTB (Long-Horizon Terminal-Bench) contributes a different scaffold, task
distribution, and model set, with longer trajectories than SWE-smith:
\lhtbTraj{} trajectories from \lhtbModels{} models over \lhtbSteps{} steps
(median \lhtbStepsMed{}, 90th percentile \lhtbStepsPNinety{}, maximum
\lhtbStepsMax{}) and \lhtbPromptTok{} cumulative prompt tokens. The same
encoder, 512-token windows, and arrival-order greedy rule run over
\lhtbEmbTraj{} trajectories: \lhtbWindows{} windows, \lhtbWindowsObs{} of them
observations, \lhtbTokObs{} observation tokens. Each step carries an agent
message, a tool call, and an observation, so Table~\ref{tab:lhtb} separates
observation-only from whole-context removal.

At $\cos t=0.95$ LAM removes \lhtbNearNineFiveObs{} of observation tokens
against \nearGreedyNineFive{} on SWE-smith, a factor of \lhtbRatioNineFive{},
and at $0.99$ \lhtbNearNineNineObs{} against \nearGreedyNineNine{}, a factor of
\lhtbRatioNineNine{}; whole-context removal is close to observation-only at
$0.95$ (\lhtbNearNineFiveAll{} and \lhtbNearNineFiveObs{}). Exact record
duplication accounts for \lhtbExactObs{}, at a different granularity from the
line-level L0 hashing reported on SWE-smith. The experiment measures redundancy, not task
success under compaction.

\subsection{Band width and quantizer step: the full grid}
\label{sec:banding}

We sweep band width $r\in\bandRs{}$ and quantizer step
$\varepsilon=\mathrm{rel}\cdot\sigma$ for $\mathrm{rel}\in\bandRels{}$ on
\bandRecords{} observation records from \bandTraj{} trajectories (\bandPairs{}
within-trajectory pairs). Recall against the exhaustive $\cos\ge t$ pair set has
to be full to reproduce exhaustive greedy removal; lower recall still preserves
the substitution bound.

At the default $\cos\ge0.95$, full recall requires $r=\bandShipR{}$ and
$\mathrm{rel}=\bandShipRel{}$, examining \bandShipCand{} of pairs for a
\bandShipSpeedup{} reduction, while accepting recall \bandRelaxRecall{} at
$r=\bandRelaxR{}$, $\mathrm{rel}=\bandRelaxRel{}$ gives \bandRelaxSpeedup{}.
Since a band match requires agreement in all its coordinates, every threshold
needs its own setting: the best $\cos\ge0.99$ configuration is $r=\bandBestR{}$,
$\mathrm{rel}=\bandBestRel{}$ at \bandBestCand{} of pairs (\bandBestSpeedup{},
precision \bandBestPrec{}). Appendix~\ref{app:extra} gives the grid.

\subsection{Brute-force scan crossover}
\label{sec:index-scaling}

Exact search costs scale linearly in retained embeddings, and within-trajectory
indexing confines it to one agent's history: replaying \encSensTraj{}
trajectories gives median \ixSurvMed{} survivors and maximum \ixSurvMax{},
occupying \ixReachedMem{}, with search taking \ixSearchUs{} of the \ixAdmitMs{}
call (\ixSearchFrac{} of elapsed time).

A synthetic index locates the crossover with encoding cost. Exact search is a
matrix--vector product: at a million survivors it reaches \ixEffBw{},
\ixBwOfPeak{} of measured triad bandwidth (Appendix~\ref{app:anchors}), costing
\ixSearchPerK{} per thousand survivors, and matches encoding cost only at
\ixCrossover{} survivors --- \ixCrossoverRatio{} the largest observed index and
\ixCrossoverMem{} of embedding storage. Bounding index growth therefore keeps
these workloads clear of approximate search.

%% file: tables/delta_hat.tex
\begin{tabular}{crrrrrrr}
\toprule
& \multicolumn{5}{c}{substitution distance $d_i$, token-weighted} & \multicolumn{2}{c}{retrieval} \\
\cmidrule(lr){2-6}\cmidrule(lr){7-8}
$\cos t$ & bound $\delta$ & median & p99 & max & mean$/\delta$ & keep & R@10 \\
\midrule
$0.99$ & 0.1414 & 0.0376 & 0.1401 & 0.1412 & 0.350 & 0.916 & 0.9978 \\
$0.98$ & 0.2000 & 0.0820 & 0.1979 & 0.1999 & 0.409 & 0.896 & 0.9947 \\
$0.95$ & 0.3162 & 0.2103 & 0.3148 & 0.3162 & 0.559 & 0.837 & 0.9846 \\
$0.9$ & 0.4472 & 0.3598 & 0.4457 & 0.4472 & 0.706 & 0.696 & 0.9518 \\
\bottomrule
\end{tabular}

%% file: tables/composite.tex
\begin{tabular}{lrrrrr}
\toprule
$\cos t$ & byte codec & dedup & codec & composite & recall@10 \\
\midrule
$0.9$ & binary (sign) & $1.396\times$ & $30.72\times$ & $42.88\times$ & 0.6204 \\
$0.9$ & cuSZp/fixed, $\varepsilon_b\!=\!0.03$ & $1.396\times$ & $9.46\times$ & $13.20\times$ & 0.6354 \\
$0.9$ & cuSZp/fixed, $\varepsilon_b\!=\!0.01$ & $1.396\times$ & $6.53\times$ & $9.11\times$ & 0.6491 \\
$0.9$ & cuSZp/fixed, $\varepsilon_b\!=\!0.003$ & $1.396\times$ & $4.94\times$ & $6.90\times$ & 0.6549 \\
$0.9$ & int8 (per-vector) & $1.396\times$ & $3.96\times$ & $5.53\times$ & 0.6556 \\
\midrule
\multicolumn{6}{l}{\emph{operating point we ship}} \\
$0.95$ & cuSZp/fixed, $\varepsilon_b\!=\!0.001$ & $1.188\times$ & $3.90\times$ & $4.63\times$ & 0.6543 \\
\bottomrule
\end{tabular}

%% file: tables/lhtb.tex
\begin{tabular}{lrrrr}
\toprule
& \multicolumn{2}{c}{LHTB} & \multicolumn{2}{c}{SWE-bench} \\
\cmidrule(lr){2-3}\cmidrule(lr){4-5}
$\cos t$ & all context & observations & observations & ratio \\
\midrule
L0 (byte-exact) & 2.08\% & 3.35\% & --- & --- \\
$0.999$ & 6.28\% & 8.07\% & --- & --- \\
$0.99$  & 14.44\% & 16.78\% & 6.52\% & $2.6\times$ \\
$0.98$  & 18.71\% & 20.54\% & 8.39\% & $2.4\times$ \\
$0.95$  & 33.07\% & 33.39\% & 13.81\% & $2.4\times$ \\
$0.9$   & 58.72\% & 56.72\% & 27.32\% & $2.1\times$ \\
\bottomrule
\end{tabular}

%% file: sections/A-appendix.tex
\section{Measured hardware constants}
\label{app:anchors}

All constants are measured on a single GB10 (Grace Blackwell, 48 SM). Eight of
the nine rows are kernel microbenchmarks at sizes large enough to reach steady
state; only $\beta$ comes from real data.

\begin{table}[h]
\centering
\caption{Hardware anchors and the benchmark used to obtain each.}
\label{tab:anchors}
\footnotesize
\begin{tabular}{llp{7.2cm}}
\toprule
quantity & measured & benchmark \\
\midrule
streaming bandwidth & \hwBW & triad $c = a+b$, $3\times512$\,MB fp32; $\mu_m = \hwMUm$ of the 273\,GB/s spec \\
fp16 tensor matmul  & \hwFLOPS & $8192^3$ GEMM \\
machine balance     & \hwBalance\,FLOP/byte & derived $F/B$; everything below is bandwidth-bound \\
bge-base encode     & \encRate\,tok/s & 512 \emph{synthetic} docs $\times$ 512 tok = 262K tok, batch 64 \\
$Q_\varepsilon$ quantize & \quantRate & $2^{26}$ fp32 $\to$ int16, 6\,B/element moved; unfused PyTorch \\
xxHash3 (CPU)       & \hashRate & 64\,MB buffer, single thread \\
exact $\ell_2$ verify & \lTwoRate\,M pairs/s & 65{,}536 candidate pairs at $D = 768$ \\
cuSZp               & \cuszpRate & $2^{24}$ fp32, error bound 1\% of $\max|x|$; ratio \cuszpCR$\times$ \\
bytes per token $\beta$ & \bytesPerTok\,B/tok & SWE-smith \texttt{tool} split, 200 trajectories, observation records \\
\bottomrule
\end{tabular}
\end{table}

\section{Compaction latency on a second device}

\begin{figure}[h]
\centering
\includegraphics[width=0.8\linewidth]{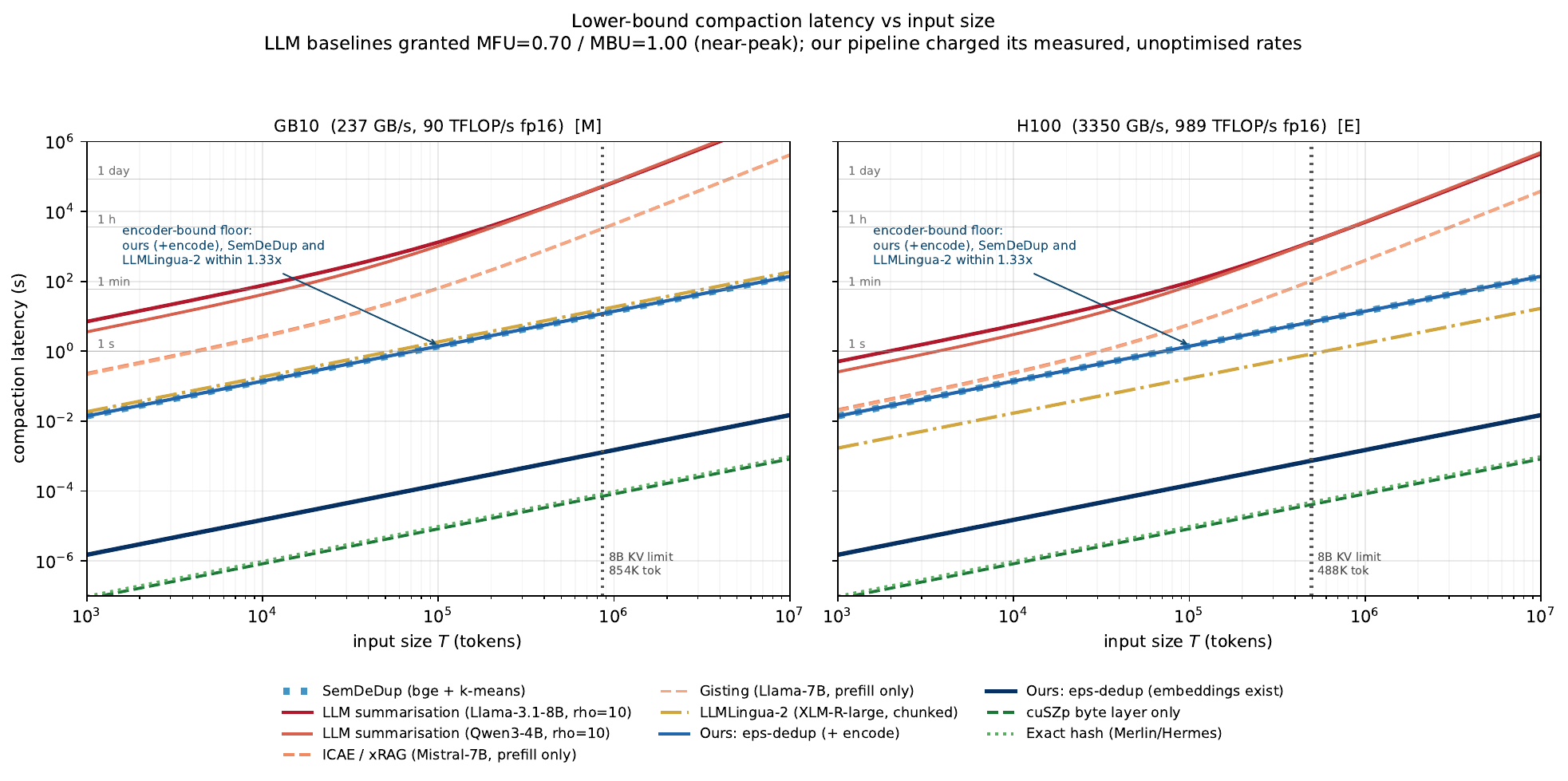}
\caption{Compaction latency on a second device (\xhwHwB{}), the same comparison as
Figure~\ref{fig:main}(a). The ordering and the crossovers are unchanged, so the
gap is structural rather than an artifact of one bandwidth-starved machine.
Dotted verticals mark where the KV cache no longer fits at batch 1.}
\label{fig:latency}
\end{figure}

\section{Deferred proofs}
\label{app:proofs}

\subsection*{Corollary~\ref{cor:rd} (distribution-free rate--distortion bound)}

\begin{proof}
Let $(R,\pi)$ be a $\delta$-substitution of $S$ and let $X \sim P$ be supported
on $S$. Because $\pi$ is a deterministic function of $X$ we have
$H(\pi(X) \mid X) = 0$, so
\[
I\!\left(X; \pi(X)\right) \;=\; H\!\left(\pi(X)\right) - H\!\left(\pi(X) \mid X\right)
\;=\; H\!\left(\pi(X)\right) \;\le\; \log_2 |R|,
\]
the last step because $\pi(X)$ takes at most $|R|$ values. By
Definition~\ref{def:sub}, $\|x - \pi(x)\|_2 \le \delta$ for \emph{every}
$x \in S$, hence $\|X - \pi(X)\|_2 \le \delta$ almost surely and, taking
expectations, $\mathbb{E}\|X - \pi(X)\|_2 \le \delta$. The joint law of
$(X, \pi(X))$ is therefore feasible for the rate--distortion problem at
distortion $\delta$, and since $R_P(\delta)$ is the infimum of $I(X;\hat X)$ over
feasible couplings, $R_P(\delta) \le I(X;\pi(X))$. The almost-sure variant
restricts the feasible set to couplings satisfying the constraint pointwise,
which $(X,\pi(X))$ does.
\end{proof}

The bound holds for every $P$ supported on $S$, and claims tightness for none.
The cardinality bound $\log_2|R|$ equals the achieved entropy $H(\pi(X))$ when
retained clusters carry equal probability mass; their difference in
Table~\ref{tab:rdu} is the empirical imbalance.

\subsection*{Corollary~\ref{cor:bracket} (measured bracket)}

\begin{proof}
We take the four terms of \eqref{eq:bracket} left to right.

\emph{$\log_2|R_{\mathrm{greedy}}(2\delta)| \le \sup_P R_P^{\mathrm{as}}(\delta)$.}
Write $P^\star = R_{\mathrm{greedy}}(2\delta)$; by the packing half of
Proposition~\ref{prop:pack} at radius $2\delta$, its elements are pairwise
separated by more than $2\delta$. Let $P$ be uniform on $P^\star$ and let
$\hat X$ be any reconstruction with $\|X - \hat X\|_2 \le \delta$ almost surely.
If distinct $p, p' \in P^\star$ were assigned a common $\hat x$, the triangle
inequality would give $\|p - p'\|_2 \le 2\delta$, contradicting the separation.
So $\hat X$ determines $X$, giving $I(X;\hat X) = H(X) = \log_2|P^\star|$.

\emph{$\sup_P R_P^{\mathrm{as}}(\delta) \le H_\delta(S)$.} Let $C$ be a minimal
internal $\delta$-covering, $|C| = N(S,\delta)$, and for any $P$ take $\hat X$ to
be a nearest element of $C$ to $X$, which satisfies the constraint by definition.
Then $R_P^{\mathrm{as}}(\delta) \le I(X;\hat X) \le H(\hat X) \le \log_2 |C|$.

\emph{$H_\delta(S) \le \log_2|R_{\mathrm{greedy}}(\delta)|$.} By the covering
half of Proposition~\ref{prop:pack}, $R_{\mathrm{greedy}}(\delta)$ is an internal
$\delta$-covering of $S$, so its cardinality is at least $N(S,\delta)$.
\end{proof}

The middle two quantities need not be equal: the lower bound uses a
$2\delta$-packing and the upper bound a $\delta$-covering, with
$M(S,2\delta)\le N(S,\delta)\le M(S,\delta)$. Two greedy passes then give
computable bounds.

\section{The dense $\delta$ sweep}
\label{app:sweep}

Table~\ref{tab:rdu} is the full sweep behind Figure~\ref{fig:rdu}. The outer bit
columns are the bracket of Corollary~\ref{cor:bracket}, greedy first-occurrence
at radius $2\delta$ and at $\delta$. The middle column is not part of it:
$H(\pi(X))$ is the rate the $\delta$ run \emph{achieves} on the empirical
source, which \eqref{eq:rd} bounds only above by $\log_2|R(\delta)|$ --- every
row satisfies that, and at $t = 0.995$ it falls just under the $2\delta$ column,
which is permitted because the bracket constrains the supremum over sources
rather than one empirical source. A dash in the $2\delta$ column marks a
threshold whose partner $t' = 4t-3$ falls below $0.2$, where that run merges
nearly everything and the ceiling is vacuous. Masking is not a
$\delta$-substitution, so its rows carry no bit columns and appear only for the
matched-removal comparison. RSC is the conditional ceiling in \eqref{eq:rsc},
not measured task success.

\begin{table}[h]
\centering
\caption{Rate, removal and the outcome ceiling across the sweep. Bits per
chunk; $\log_2|S| = \rdLossless$ lossless.}
\label{tab:rdu}
\small
\input{tables/rdu_sweep}
\end{table}

\section{Additional measurements}
\label{app:extra}

\subsection{Rank invariance, in full}
\label{app:margin}

We evaluate ranking margins on the LoCoMo retrieval store --- \margRec{}
dialogue-turn records and \margQ{} questions with gold evidence, excluding the
unanswerable split, under \texttt{bge-base-en-v1.5} at $D=\margDim{}$ --- which
isolates the margin requirement from SWE-smith evidence retention. For unit
queries a coordinatewise tolerance gives $|\Delta\cos|\le2\varepsilon\sqrt{D}$,
while distance verification gives $|\Delta\cos|=|q\cdot(a-b)|\le\|a-b\|_2\le
\sqrt{2-2t}$ at merge threshold $t$, linking the substitution bound to the
merge threshold without a $\sqrt{D}$ factor.

\paragraph{Invariance rates.}
At $\varepsilon=\margCertBestEps{}$ the coordinatewise bound proves invariance
for \margCertBest{} of queries at $k=1$ and \margCertBestThree{} at $k=3$; the
$\ell_2$ bound proves \margCertNNF{} at $k=1$ in its tightest tested setting and
\margCertNN{} at $\cos\ge0.99$. At the default $\cos\ge0.95$ no tested $k$ is
guaranteed invariant: median adjacent-rank margins are \margMedOne{},
\margMedThree{}, and \margMedTwenty{} at $k=1,3,20$ against bounds
\margBoundNN{} and \margBoundNF{}. Margins this tight are why
Section~\ref{sec:theory} states the guarantee on scores rather than ranks.

\begin{table}[t]
\centering
\caption{Share of queries whose top-$k$ is provably invariant, by bound and by
$k$. \margQ{} queries against \margRec{} LoCoMo records. The default operating
point is $t = 0.95$.}
\label{tab:margin-cert}
\small
\input{tables/margin_cert}
\end{table}

\begin{table}[t]
\centering
\caption{Rank-$k$/rank-$(k{+}1)$ cosine margin over \margQ{} queries. Compare
against the bound column of Table~\ref{tab:margin-cert}.}
\label{tab:margin-quantiles}
\small
\input{tables/margin_quantiles}
\end{table}

\subsection{Band-width study}

Table~\ref{tab:band-sweep} is the grid behind Section~\ref{sec:banding}, over
\bandTraj{} trajectories (\bandRecords{} records, \bandPairs{} within-trajectory
pairs), with $\sigma = \bandSigma{}$ the RMS coordinate magnitude of the
encoder's output. Recall is over the true pairs at the stated cosine, so $1.000$
means the banded index proposed every pair exhaustive comparison would merge.

\begin{table}[t]
\centering
\caption{Band-width sweep over the exhaustive within-trajectory pair set.
Candidate rate is the fraction of pairs sent to distance verification; its
reciprocal gives the reduction in comparisons. We select settings with full
recall and low candidate rate.}
\label{tab:band-sweep}
\small
\input{tables/band_sweep}
\end{table}

The grid trades pair recall against candidate count. At $r=2$, full recall at
$\cos\ge0.99$ costs \bandNarrowCand{} candidates with
$\varepsilon/\sigma=\bandNarrowRel{}$; the cheapest full-recall setting is
$r=\bandBestR{}$, $\varepsilon/\sigma=\bandBestRel{}$, examining \bandBestCand{}
of pairs for a \bandBestSpeedup{} reduction; at $r=32$ recall never exceeds
\bandWideRecallNineNine{} at $\cos\ge0.99$ or \bandWideRecallNineFive{} at
$\cos\ge0.95$. Settings are specific to the threshold: the best $0.99$
configuration reaches only $0.651$ recall at $0.95$.

\section{Design choices and the scope}
\label{app:corrections}

These choices connect the implementation to its guarantees and cost model.

\begin{enumerate}\itemsep2pt
\item \textbf{Direct $\ell_2$ verification}
(Section~\ref{sec:method}). For unit-normalized embeddings cosine maps directly
to substitution distance. Quantization proposes candidates; the distance check
decides the merge.

\item \textbf{A query-independent substitution bound}
(Appendix~\ref{app:margin}). The theorem bounds maximum retrieval-score loss
with no ranking-margin assumption; ranking quality and evidence retention are
measured separately.

\item \textbf{Retained representatives}
(Section~\ref{sec:rule}). Every deleted record lies within $\delta$ of its
representative. Greedy first-occurrence enforces that without assuming the
merge relation is transitive.

\item \textbf{Per-arrival accounting}
(Section~\ref{sec:arrival}). Removal is charged once per arriving token, the
context grows by the retained arrivals, and a hard size limit needs eviction on
top of deduplication.

\item \textbf{Explicit reproducibility conditions}
(Section~\ref{sec:determinism}). LAM is deterministic for fixed embeddings,
arrival order, and arithmetic; cross-batch and cross-hardware stability are
measured separately. The LLM deletion baseline reproduces only under its tested
serving configuration.
\end{enumerate}

%% file: tables/rdu_sweep.tex
\small
\begin{tabular}{llcccccc}
\toprule
& & \multicolumn{3}{c}{bits per chunk} & & & \\
\cmidrule(lr){3-5}
$t$ & $\delta$ & $\log_2|R(2\delta)|$ & $H(\pi(X))$ & $\log_2|R(\delta)|$
    & tokens rm.\ & traj.\ intact & RSC \\
\midrule
$0.995$ & $0.1000$ & 14.61 & 14.59 & 14.67 & 18.47\% & 100.0\% & 75.83\% \\
$0.990$ & $0.1414$ & 14.55 & 14.56 & 14.65 & 18.89\% & 100.0\% & 75.83\% \\
$0.980$ & $0.2000$ & 14.40 & 14.50 & 14.61 & 19.64\% & 100.0\% & 75.83\% \\
$0.970$ & $0.2449$ & 14.14 & 14.45 & 14.58 & 20.45\% & 100.0\% & 75.83\% \\
$0.960$ & $0.2828$ & 13.80 & 14.40 & 14.55 & 21.53\% & 100.0\% & 75.83\% \\
$0.950$ & $0.3162$ & 13.41 & 14.35 & 14.52 & 22.50\% & 99.8\% & 75.67\% \\
$0.930$ & $0.3742$ & 12.48 & 14.22 & 14.45 & 25.40\% & 99.0\% & 75.00\% \\
$0.900$ & $0.4472$ & 11.15 & 13.98 & 14.29 & 32.02\% & 93.0\% & 70.33\% \\
$0.850$ & $0.5477$ & 9.26 & 13.42 & 13.89 & 47.55\% & 76.9\% & 59.83\% \\
$0.800$ & $0.6325$ & 9.23 & 12.79 & 13.41 & 64.38\% & 43.1\% & 35.67\% \\
$0.700$ & $0.7746$ & --- & 11.39 & 12.20 & 86.71\% & 9.5\% & 8.33\% \\
$0.600$ & $0.8944$ & --- & 10.21 & 11.15 & 93.31\% & 3.3\% & 3.33\% \\
\midrule
mask $k{=}100$ & --- & --- & --- & --- & 0.20\% & 99.8\% & 75.83\% \\
mask $k{=}50$ & --- & --- & --- & --- & 4.56\% & 98.3\% & 75.17\% \\
mask $k{=}30$ & --- & --- & --- & --- & 16.43\% & 92.5\% & 73.00\% \\
mask $k{=}20$ & --- & --- & --- & --- & 34.28\% & 81.9\% & 66.33\% \\
mask $k{=}10$ & --- & --- & --- & --- & 67.47\% & 52.2\% & 43.33\% \\
mask $k{=}5$ & --- & --- & --- & --- & 78.00\% & 35.6\% & 30.00\% \\
mask $k{=}3$ & --- & --- & --- & --- & 81.49\% & 32.8\% & 28.17\% \\
\bottomrule
\end{tabular}

%% file: tables/margin_cert.tex
\small
\begin{tabular}{llrrrrrr}
\toprule
& & & \multicolumn{5}{c}{queries with bounded-invariant top-$k$} \\
\cmidrule(lr){4-8}
relation & parameter & bound on $|\Delta\cos|$
  & $k{=}1$ & $k{=}3$ & $k{=}5$ & $k{=}10$ & $k{=}20$ \\
\midrule
\multirow{4}{*}{$\ell_\infty$ relaxed, $2\varepsilon\sqrt{D}$}
  & $\varepsilon = 0.001$ & 0.0554 & 15.62\% & 0.59\% & 0.13\% & 0.00\% & 0.00\% \\
  & $\varepsilon = 0.003$ & 0.1663 &  0.26\% & 0.00\% & 0.00\% & 0.00\% & 0.00\% \\
  & $\varepsilon = 0.01$  & 0.5543 &  0.00\% & 0.00\% & 0.00\% & 0.00\% & 0.00\% \\
  & $\varepsilon = 0.03$  & 1.6628 &  0.00\% & 0.00\% & 0.00\% & 0.00\% & 0.00\% \\
\midrule
\multirow{6}{*}{$\ell_2$ tight, $\sqrt{2-2t}$}
  & $t = 0.995$ & 0.1000 & 3.71\% & 0.00\% & 0.00\% & 0.00\% & 0.00\% \\
  & $t = 0.99$  & 0.1414 & 0.46\% & 0.00\% & 0.00\% & 0.00\% & 0.00\% \\
  & $t = 0.98$  & 0.2000 & 0.07\% & 0.00\% & 0.00\% & 0.00\% & 0.00\% \\
  & $t = 0.95$  & 0.3162 & 0.00\% & 0.00\% & 0.00\% & 0.00\% & 0.00\% \\
  & $t = 0.90$  & 0.4472 & 0.00\% & 0.00\% & 0.00\% & 0.00\% & 0.00\% \\
  & $t = 0.85$  & 0.5477 & 0.00\% & 0.00\% & 0.00\% & 0.00\% & 0.00\% \\
\bottomrule
\end{tabular}

%% file: tables/margin_quantiles.tex
\begin{tabular}{crrrr}
\toprule
$k$ & median & p90 & p99 & max \\
\midrule
$1$  & 0.0185 & 0.0702 & 0.1296 & 0.2358 \\
$3$  & 0.0056 & 0.0201 & 0.0483 & 0.0886 \\
$5$  & 0.0036 & 0.0133 & 0.0299 & 0.0698 \\
$10$ & 0.0018 & 0.0067 & 0.0148 & 0.0247 \\
$20$ & 0.0010 & 0.0034 & 0.0075 & 0.0147 \\
\bottomrule
\end{tabular}

%% file: tables/band_sweep.tex
\begin{tabular}{rrrrrrrrr}
\toprule
& \multicolumn{8}{c}{$\varepsilon/\sigma$} \\
\cmidrule(lr){2-9}
$r$ & $0.01$ & $0.02$ & $0.03$ & $0.05$ & $0.08$ & $0.12$ & $0.2$ & $0.3$ \\
\midrule
\multicolumn{9}{l}{\emph{recall of true $\cos \ge 0.99$ pairs}} \\
$2$ & 0.974 & 1.000 & 1.000 & 1.000 & 1.000 & 1.000 & 1.000 & 1.000 \\
$4$ & 0.581 & 0.774 & 0.885 & 0.990 & 1.000 & 1.000 & 1.000 & 1.000 \\
$8$ & 0.400 & 0.506 & 0.590 & 0.743 & 0.879 & 0.986 & 1.000 & 1.000 \\
$16$ & 0.332 & 0.398 & 0.467 & 0.537 & 0.668 & 0.779 & 0.934 & 0.992 \\
$32$ & 0.323 & 0.333 & 0.358 & 0.414 & 0.486 & 0.550 & 0.685 & 0.769 \\
\midrule
\multicolumn{9}{l}{\emph{candidate pairs, \% of all within-trajectory pairs}} \\
$2$ & 5.40 & 17.36 & 33.19 & 64.21 & 90.95 & 99.18 & 100.00 & 100.00 \\
$4$ & 0.48 & 0.66 & 0.79 & 1.19 & 2.61 & 7.54 & 31.65 & 73.06 \\
$8$ & 0.33 & 0.42 & 0.49 & 0.61 & 0.73 & 0.90 & 1.46 & 3.84 \\
$16$ & 0.27 & 0.33 & 0.39 & 0.44 & 0.55 & 0.64 & 0.79 & 0.98 \\
$32$ & 0.27 & 0.28 & 0.30 & 0.34 & 0.40 & 0.45 & 0.57 & 0.64 \\
\midrule
\multicolumn{9}{l}{\emph{recall of true $\cos \ge 0.95$ pairs}} \\
$2$ & 0.574 & 0.844 & 0.955 & 1.000 & 1.000 & 1.000 & 1.000 & 1.000 \\
$4$ & 0.237 & 0.322 & 0.378 & 0.488 & 0.672 & 0.931 & 1.000 & 1.000 \\
$8$ & 0.163 & 0.206 & 0.240 & 0.303 & 0.361 & 0.441 & 0.651 & 0.979 \\
$16$ & 0.135 & 0.162 & 0.191 & 0.219 & 0.272 & 0.318 & 0.391 & 0.481 \\
$32$ & 0.132 & 0.136 & 0.146 & 0.169 & 0.198 & 0.224 & 0.279 & 0.314 \\
\bottomrule
\end{tabular}

%% file: sections/I-cost.tex
\section{Cost accounting and compaction latency}
\label{app:costclasses}

\subsection{Three cost classes}

A wall-clock comparison has to charge each method for what it does, which
separates them into three classes.

\textbf{Class 1: decode-bound LLM} (summarization). With input $T$ and ratio
$\rho$, the method decodes $G = T/\rho$ tokens:
\begin{equation}
t = \underbrace{\frac{2PT + 2LdT^2}{\mu_c F}}_{\text{prefill}}
  + \underbrace{\frac{G P b_w + 2 L H_{kv} d_h b_{kv}(GT + G^2/2)}{\mu_m B}}_{\text{decode}}
  + G\tau .
\end{equation}
The term $G P b_w / (\mu_m B)$ is linear in $T$ and \emph{irreducible by
parallelism}, since decode is sequential; on our hardware an 8B model in fp16
costs 67\,ms per token from this term alone.

\textbf{Class 2: prefill-only LLM} (Gisting, ICAE, xRAG, LLMLingua-2). One
forward pass emits the representation, so there is no decode term. A causal
backbone gives $t = (2PT + 2LdT^2)/(\mu_c F)$, quadratic in $T$; a \emph{chunked
encoder} windows attention for $t = (2PT + 4LdTW)/(\mu_c F)$, linear.
LLMLingua-2 is in the second group, so it is the cheaper LLM baseline.

\textbf{Class 3: no LLM} (all of LAM's layers, and byte codecs generally):
\begin{equation}
t = \frac{T}{R_{enc}} + \frac{6ND}{B_q} + \frac{2ND}{R_{hash}}
  + \frac{n_{cand}}{R_{\ell_2}} + \frac{\beta T(1 + 1/\mathrm{CR})}{R_{cuSZp}} .
\end{equation}
Every stage is below the machine balance (\hwBalance{}\,FLOP/byte, measured), so
all are bandwidth-bound.

\paragraph{Fairness rule.} LLM baselines get near-peak efficiency ($\mu_c =
0.70$, $\mu_m = 1.00$) while our pipeline is charged its measured, unoptimized
rates, so the comparison favors the baselines. These are modeling assumptions,
not runtime bounds; Appendix~\ref{app:anchors} lists the hardware measurements.

\subsection{Compaction latency and its limits as an argument}

\begin{figure}[t]
\centering
\includegraphics[width=0.84\linewidth]{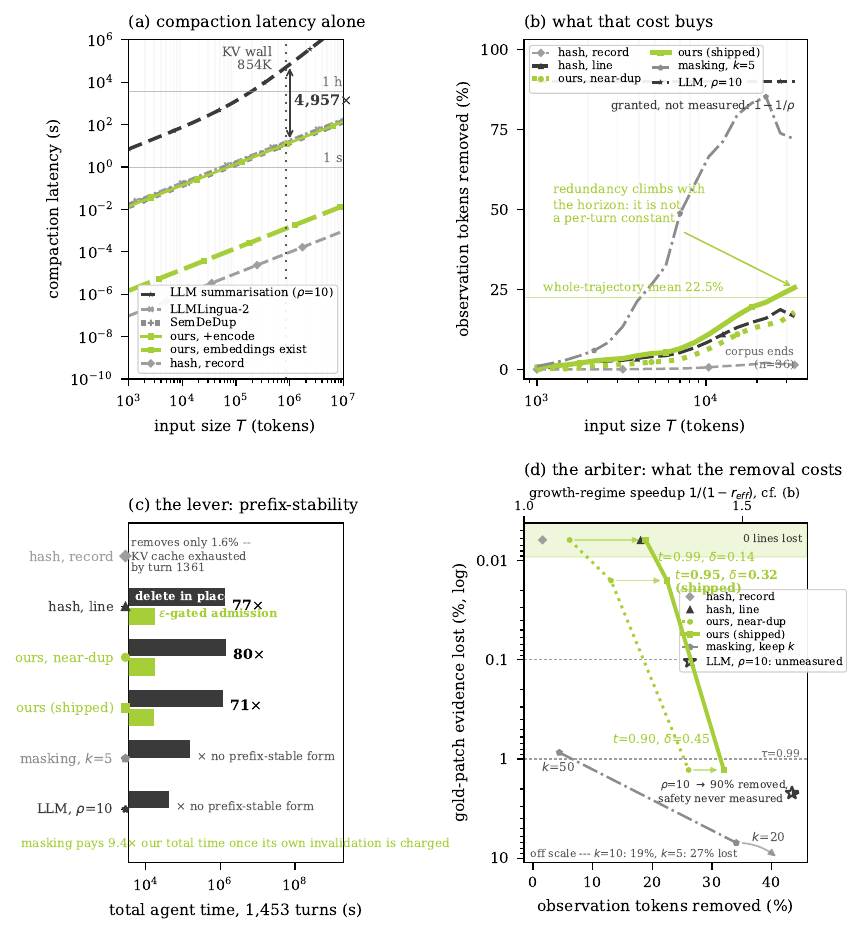}
\caption{Compaction cost, removal, scheduling, and evidence retention.
\textbf{(a)} Modeled compaction latency on $T$ tokens, excluding subsequent
agent inference. \textbf{(b)} Observation-token removal versus input size over
\growthCurveTraj{} trajectories. Removal rises from \growthSmallRem{} at
$T=\growthSmallT{}$ to \growthLargeRem{} at $T=\growthLargeT{}$, with
whole-trajectory mean \growthWholeRem{}. Curves stop when too few trajectories
reach the input size; summarization removal is assumed at $1-1/\rho$.
\textbf{(c)} Scheduling at fixed compressor, corpus, and removal.
Methods that rewrite or delete existing context pay re-prefill costs;
observation masking costs \maskOverOurs{} times LAM's modeled total time.
For rules supporting both schedules, testing first gives
\polGainOne{}--\polGainBs{} speedup.
\textbf{(d)} Gold-patch evidence loss on an inverted logarithmic axis.
Zero-loss methods appear in the shaded band. The plot is clipped at
\lossClip{} loss. The default setting loses \safOursShipLines{} of
\safLines{} lines (\safOursShipLoss{}), compared with masking losses of
\safMaskTwentyLoss{}, \safMaskTenLoss{}, and \safMaskFiveLoss{} at $k=20,10,5$.
The $\tau$ floors and \lossIntolerable{} band mark retention requirements.
Summarization evidence retention is unmeasured.}

\label{fig:main}
\end{figure}

Figure~\ref{fig:main}(a) compares the three classes and
Figure~\ref{fig:latency} repeats it on a second device; vertical lines mark the
modeled KV capacity at batch 1 (\kvCap{} tokens for Llama-3.1-8B on GB10),
beyond which the unmodified full-context configuration does not fit. At
$T=10^6$ the modeled compaction speedups are \compVsLlama{} over summarization,
\compVsLingua{} over LLMLingua-2, and \compVsSemDeDup{} over SemDeDup. The
LLMLingua-2 cost model uses XLM-R-large; the evidence experiment in
Appendix~\ref{sec:relaxed} uses the smaller \texttt{bert-base-multilingual}
variant configured by LightMem.

Encoding is \encodeShare{} of LAM's cost at $T=10^6$ and SemDeDup uses the same
encoder, so the two cost about the same at trajectory scale. Where a memory
store already holds the embeddings, LAM costs \oursNoEncode{} against
\hashFloor{} for L0 alone, a factor of \vsHashFloor{}. We report both cases
because reuse depends on the deployment.

\subsection{Per-arrival removal and context growth}
\label{sec:arrival}

The measured removal rate $r$ is the fraction of arriving tokens duplicating
earlier content. A token can be removed once, so context grows as $C\leftarrow
C+\Delta(1-r)$ per arrival; applying $r$ to the full context on every event
would count the same duplicates again. Testing first imposes no steady-state
size on its own, so a hard budget also needs eviction
(Table~\ref{tab:arrival}).

\begin{table}[t]
\centering
\caption{Snapshot and per-arrival accounting for the same measured removal rate.}
\label{tab:arrival}
\footnotesize
\begin{tabular}{lll}
\toprule
 & snapshot assumption & per-arrival (correct) \\
\midrule
update rule          & $C \leftarrow C(1-r)$ every event & $C \leftarrow C + \Delta(1-r)$ per arrival \\
steady state         & $C^{*} = \Delta / r_{\mathit{eff}}$ & none; only eviction bounds $C$ \\
mean context, 32K budget & \meanCtxSnapshot{} & \meanCtxArrival{} \\
speedup of L1 over L0 alone & 1.22$\times$ & \growthGain{} growth, \budgetGain{} hard budget \\
\bottomrule
\end{tabular}
\end{table}

The two accountings diverge where it matters. In the growth regime LAM supports
\growthTurnsOurs{} turns before the KV capacity limit against
\growthTurnsLine{} for L0 alone, a modeled \growthGain{} speedup. Under a hard
budget every method eventually reaches $B_{ctx}$, so additional removal gives
\budgetGain{} latency benefit; it buys delayed eviction of nonredundant content
instead, a quality effect the model does not capture.